\newcommand{\rmd}{\mathrm{d}}
\newcommand{\Lm}{\mathrm{L}^\alpha}
\newcommand{\rset}{\mathbb{R}}
\newtheorem{theorem}{Theorem}
\newtheorem*{theorem*}{Theorem}
\newtheorem{corollary}[theorem]{Corollary}
\newtheorem{definition}[theorem]{Definition}
\newtheorem{lemma}[theorem]{Lemma}
\newtheorem*{lemma*}{Lemma}
\newtheorem{remark}[theorem]{Remark}
\newcommand{\imagi}{\mathsf{i}}
\title{Algorithmic Stability of Heavy-Tailed Stochastic Gradient Descent on Least Squares} 
\author{Anant Raj \\
 Coordinated Science Laboraotry \\
 University of Illinois Urbana-Champaign. \\
  Inria, Ecole Normale Sup\'erieure \\
  PSL Research University, Paris, France. \\
  \texttt{anant.raj@inria.fr} 
  \vspace{.25cm}
\and Melih Barsbey \\
Department of Computer Engineering \\ Bo\u{g}azi\c{c}i University, Istanbul, Turkey. \\
  \texttt{melih.barsbey@boun.edu.tr} \\
  \and Mert Gürbüzbalaban \\
  Department of Management\\ Science and Information Systems \\
 Rutgers University, Piscataway, USA. \\
  \texttt{mg1366@rutgers.edu} \\
  \and Lingjiong Zhu \\
  Department of Mathematics \\
  Florida State University, FL, USA. \\
  \texttt{zhu@math.fsu.edu} \\
  \and Umut \c{S}im\c{s}ekli \\
  Inria, CNRS, Ecole Normale Sup\'erieure \\
  PSL Research University, Paris, France. \\
  \texttt{umut.simsekli@inria.fr} \\
}
\begin{document}

\maketitle
\begin{abstract}%
  Recent studies have shown that heavy tails can  emerge in stochastic optimization and that the heaviness of the tails have links to the generalization error. While these studies have shed light on interesting aspects of the generalization behavior in modern settings, they relied on strong topological and statistical regularity assumptions, which are hard to verify in practice. Furthermore, it has been empirically illustrated that the relation between heavy tails and generalization might not always be monotonic in practice, contrary to the conclusions of existing theory. In this study, we establish novel links between the tail behavior and generalization properties of stochastic gradient descent (SGD), through the lens of algorithmic stability. We consider a quadratic optimization problem and use a heavy-tailed stochastic differential equation (and its Euler discretization) as a proxy for modeling the heavy-tailed behavior emerging in SGD. We then prove uniform stability bounds, which reveal the following outcomes: (i) Without making any exotic assumptions, we show that SGD will not be stable if the stability is measured with the squared-loss $x\mapsto x^2$, whereas it in turn becomes stable if the stability is instead measured with a surrogate loss $x\mapsto |x|^p$ with some $p<2$. (ii) Depending on the variance of the data, there exists a \emph{`threshold of heavy-tailedness'} such that the generalization error decreases as the tails become heavier, as long as the tails are lighter than this threshold. This suggests that the relation between heavy tails and generalization is not globally monotonic. (iii) We prove matching lower-bounds on uniform stability, implying that our bounds are tight in terms of the heaviness of the tails. We support our theory with synthetic and real neural network experiments.
\end{abstract}

\section{Introduction}

\label{sec:intro}

Over the last decade, understanding the generalization behavior in modern machine learning settings has been one of the main challenges in statistical learning theory. Here, the main goal has been deriving upper-bounds on the so-called \emph{generalization error}, i.e., the gap between the true and the empirical risks $|F(\theta) - \hat{F}(\theta,X)|$, which are respectively defined as follows:
\begin{align}
    F(\theta) := \mathbb{E}_{x\sim P_X }[f(\theta,x)], \qquad \hat{F}(\theta, X) := (1/n)\sum\nolimits_{i=1}^n f(\theta,x_i),
    \label{eqn:risks}
\end{align}
where $\theta \in \mathbb{R}^d$ denotes the \emph{parameter vector}, $f : \mathbb{R}^d \times \mathcal{X} \mapsto \mathbb{R}_+$ is the \emph{loss function}, $\mathcal{X}$ is the space of \emph{data points}, $P_X$ is the unknown \emph{data distribution}, and finally $X = \{x_1,\dots,x_n\}$ denotes a (random) dataset with $n$ points, such that each $x_i$ is independently and identically distributed (i.i.d.) from $P_X$.  

The past few years have witnessed the development of a variety of mathematical frameworks for analyzing the generalization error (see e.g.,  \citet{liu2019deepsurvey,he2020recent} for recent surveys). In the context of empirical risk minimization (ERM), i.e., solving $\min_{\theta \in \mathbb{R}^d} \hat{F}(\theta,X)$, one promising direction has been to explicitly take into account the statistical properties of the \emph{optimization algorithm} used during training, which is typically chosen as stochastic gradient descent (SGD) that is based on the following recursion:
\begin{align}
    \theta_{k+1} = \theta_k - \eta \nabla \tilde{F}_{k+1}(\theta_k,X), \label{eqn:sgd}
\end{align}
where $\eta$ is the step-size (or learning-rate), and $\nabla \tilde{F}_{k}(\theta,X) := \frac1{b} \sum_{i\in \Omega_k} f(\theta, x_i)$ is the stochastic gradient, with $\Omega_k \subset \{1,\dots,n\}$ being a random subset drawn with or without replacement, and $b := |\Omega_k| \ll n$ being the batch-size.
In this line of research, \cite{csimcsekli2019heavy} and \cite{martin2019traditional} empirically demonstrated that, perhaps surprisingly, a \emph{heavy-tailed} behavior can emerge in SGD in different ways, and the heaviness of the tails correlates with the generalization error, suggesting that heavier tails indicate better generalization.

Theoretically investigating these empirical observations, \cite{gurbuzbalaban2020heavy} and  \cite{hodgkinson2021multiplicative} explored the origins of the observed heavy-tailed behavior. They simultaneously showed that, in online SGD\footnote{The framework of \citet{hodgkinson2021multiplicative} can  handle stochastic optimization algorithms other than SGD as well.} (i.e., when the data is streaming), due the multiplicative nature of the gradient noise, i.e.,  $\nabla \tilde{F}_k(\theta,X) - \nabla \hat{F}(\theta,X)$, the distribution of the iterates $\theta_k$ can converge to a heavy-tailed distribution as $k \to \infty$. Furthermore, \cite{gurbuzbalaban2020heavy} showed that, when the loss $f$ is a quadratic and the data distribution is Gaussian, the tails become monotonically heavier when $\eta$ gets larger or $b$ gets smaller.

Due to the fact that analyzing the heavy-tailed behavior arising from \eqref{eqn:sgd} can be highly non-trivial, relatively simpler heavy-tailed mathematical models have been used as a proxy for the original heavy-tailed SGD recursion in stationarity; e.g., SGD with heavy-tailed noise, i.e.,
\begin{align}
\label{eqn:sgd_heavy}
    \theta_{k+1} = \theta_k - \eta_{k+1} \left[ \nabla \hat{F}(\theta_k,X) + \xi_{k+1} \right], \quad \text{with} \quad \mathbb{E}[\|\xi_k\|^2] = +\infty, \quad \text{for every $k = 1,2,\dots$},
\end{align}
where $\xi_k$ denotes the heavy-tailed noise and $\eta_k$ denotes a sequence of decreasing step-sizes. 
It has been revealed that another interesting situation emerges in this setting, this time in the behavior of the optimization error.
Notably, \citet[Remark 1]{zhang2020adam} pointed out that, when the loss function $f$ is chosen as a simple quadratic, i.e., $f(\theta,x) = \|\theta\|^2$, we have that $\mathbb{E}[\|\theta_k -\theta_\star\|^2] = \mathbb{E}[\|\theta_k\|^2] = \mathbb{E}[f(\theta_k,x)] = +\infty $ for all $k$, where $\theta_\star = 0$ is the global minimum of $f$.
While this result might appear daunting as it might seemingly suggest that ``SGD diverges'' under heavy-tailed perturbations, \cite{wang2021convergence} refined this result and showed that, if there exists $p \in [1,2)$ such that $\mathbb{E}[\|\xi_k\|^p] < \infty$, then $\mathbb{E}[\|\theta_k -\theta_\star\|^p]$ converges to zero, for a class of strongly convex losses $f$. This result is particularly remarkable, since it shows that, even when the iterates may diverge under the `true' loss function $f$ (which SGD is originally trying to minimize), i.e., $\mathbb{E}[f(\theta_k,x)] = +\infty $, they might still converge to the \emph{minimum of the original loss} $\theta_\star$ when a surrogate loss function $\tilde{f}$ is used for measuring the optimization error, which in this example is $\tilde{f}(\theta,x) = \|\theta\|^p$ with $p<2$.

In an initial attempt for formalizing the relation between the tail behavior and generalization, \cite{simsekli2020hausdorff} also modeled the original heavy-tailed recursion \eqref{eqn:sgd} by using a proxy and
considered the following stochastic differential equation (SDE) as a model (which can be seen as a continuous-time version of \eqref{eqn:sgd_heavy}):
\begin{align}
    \rmd \theta_t = - \nabla \hat{F}(\theta_t,X) \rmd t + \Sigma (\theta_t) \rmd \Lm_t, \label{eqn:sdegen}
\end{align}
where $\Sigma : \rset^d \mapsto \rset^{d\times d}$ is a matrix-valued function and $\Lm_t$ denotes a heavy-tailed $\alpha$-stable L\'{e}vy process, which is a random process parameterized by $\alpha \in (0,2]$, such that a smaller $\alpha$ indicates heavier tails (we will make the definition of $\Lm_t$ precise in the next section). They showed that, under several assumptions on the SDE \eqref{eqn:sdegen}, the worst-case generalization error over the trajectory, i.e., $\sup_{t\in [0,1]} |\hat{F}(\theta_t,X) - F(\theta_t)|$ scales with the intrinsic dimension of the trajectory $(\theta_t)_{t\in [0,1]}$, which is then  upper-bounded as a particular function of 
the tail-exponent around a local minimum, indicating that heavier tails imply lower generalization error. 
Their results were later extended to discrete-time recursions as well in \citet{hodgkinson2021generalization}. More recently, \cite{barsbey2021heavy} linked heavy-tails to generalization through a notion of compressibility in the over-parameterized regimes. Yet, these bounds require several topological and statistical regularity assumptions that are hard to verify in realistic settings, and the experiments in \citet{barsbey2021heavy} illustrated that the relation between the tail-exponent and the generalization error is not always monotonic; hence, a generalization bound that requires less assumptions while being more pertinent to the practical observations is still missing.

In this study, we aim at establishing novel links between tail behavior and generalization and address the aforementioned shortcomings. We consider the problem through the lens of \emph{algorithmic stability}  \citep{bousquet2002stability,hardt2016train}, and explore the effects of heavy tails on the stability of SGD. Similar to recent work  \citep{ali2020implicit,gurbuzbalaban2020heavy,latorre2021effect}, in order to have a more explicit control over the problem, we limit our scope to quadratic optimization, and consider the following SDE as a proxy for heavy-tailed SGD: 
\begin{equation}\label{eqn:htou}
\rmd \theta_t=- \frac1{n}\left(X^\top X\right) \theta_t\rmd t+\Sigma \rmd \Lm_{t},
\end{equation}
where $\Sigma \in \rset^{d\times d}$ is a matrix that scales the noise and is assumed to be fixed (i.e., state-independent), and by a slight abuse of notation we represent the dataset as a matrix $X \in \rset^{n\times d}$, such that $i$-th row of $X$ is equal to $x_i$. This SDE naturally arises from the ERM problem with the loss function being $f(\theta,x) = (\theta^\top x)^2$. 

As the learning algorithm, we first consider the case where we assume that we have a sample from the stationary distribution of \eqref{eqn:htou} (i.e., the case where $t \to \infty$) and analyze the stability of this sample. Then we extend our analysis in two directions: we analyze (i) the case where $t$ is finite and (ii) the case where the SDE is discretized by using a constant step-size. Our contributions are as follows:
\begin{itemize}[itemsep=0pt,topsep=0pt,leftmargin=*,align=left]
\item As opposed to classical SDEs driven by a Brownian motion (rather than $\alpha$-stable L\'{e}vy processes $\Lm_t$ as we consider here), the stationary distribution of \eqref{eqn:htou} does not admit a simple analytical closed-form expression. As a remedy, we perform the stability analysis in the Fourier domain, and introduce new proof techniques. 
\item We prove upper-bounds on the stability of \eqref{eqn:htou}, which suggest that the algorithm will not be stable, when $\alpha <2$ and the stability is measured with respect to the quadratic loss $(\theta^\top x)^2$. We further show that, when the stability is instead measured with respect to a surrogate loss function $|\theta^\top x|^p$ with $p<\alpha<2$, the algorithm in turn becomes stable, where the level of stability depends on $\alpha$, among several other quantities. This result reveals a similar phenomenon to that of \citet{zhang2020adam} and \citet{wang2021convergence} as discussed above. Furthermore, our results do not require any non-trivial assumptions, compared to the existing heavy-tailed generalization bounds \citep{simsekli2020hausdorff,hodgkinson2021generalization,barsbey2021heavy}.
\item Our theory further discloses an interesting property: depending on the variance of the data distribution $P_X$, there exits an $\alpha_0 >1$, such that the algorithm becomes more stable as $\alpha \in [\alpha_0,2]$ get smaller, i.e., the tails get heavier up to a certain point determined by $\alpha_0$. This result implies that the stability of the algorithm, hence the generalization error will be monotonic with respect to the tail-exponent $\alpha$ only when $\alpha$ is large enough. This outcome sheds more light on the experimental results presented in \citet{barsbey2021heavy}, where the relation between the tail-exponent and the generalization error is only partially monotonic.    
\item We prove matching lower-bounds on the stability of \eqref{eqn:htou}, implying that our stability bounds are tight in the tail-exponent $\alpha$.
\item We show that the same conclusions hold for finite $t$, and for the Euler discretization of \eqref{eqn:htou} when the step-size is small enough.
\end{itemize}
We support our theory on both synthetic data and real experiments conducted on standard benchmark datasets by using fully-connected and convolutional neural networks. All the proofs and the implementation details are provided in the Appendix.

\section{Notation and Background} \label{sec:background}

\textbf{Notation.} Consider a real-valued function $f : \mathbb{R}^d \rightarrow \mathbb{R}$ defined on $\mathbb{R}^d$. The Fourier transform of $f(\theta)$ for $\theta \in \mathbb{R}^d$ is denoted by $\mathcal{F}f(u)$ and is defined as,
    $\mathcal{F}f(u) := \int_{\mathbb{R}^d} f(\theta) e^{- \imagi u^\top \theta} \rmd \theta$. 
 Similarly, the inverse Fourier transform of a function $\hat{f}(u)$ that is from $\mathbb{R}^d$ to $\mathbb{R}$ is denoted by $\mathcal{F}^{-1}\hat{f}(\theta)$ and is defined by,
     $\mathcal{F}^{-1}\hat{f}(\theta) := \frac{1}{(2\pi)^d}\int_{\mathbb{R}^d} \hat{f}(u) e^{ \imagi u^\top \theta} du$. 
In both of these definitions, $\imagi := \sqrt{-1}$.

\paragraph{$\alpha$-stable distributions.}
The $\alpha$-stable distribution appears as the limiting distribution
in the generalized central limit theorems for
a sum of i.i.d. random variables
with infinite variance \citep{paul1937theorie}. 
A scalar random variable $X$ is called symmetric $\alpha$-stable, denoted
by $X\sim\mathcal{S}\alpha\mathcal{S}(\sigma)$, if its characteristic function
takes the form:
$\mathbb{E}\left[e^{\imagi uX}\right]=\exp\left(-\sigma^{\alpha}|u|^{\alpha}\right)$, for any $u\in\mathbb{R}$,
where $\sigma>0$ is known as the scale
parameter that measures the spread
of $X$ around $0$ and $\alpha\in(0,2]$ which is known as the tail-index
that determines
the tail thickness of the distribution
and the tail becomes heavier as $\alpha$ gets smaller.
In general, the probability density function of a symmetric $\alpha$-stable distribution, $\alpha\in(0,2]$,
does not yield closed-form expression except for a few special cases.
When $\alpha=1$ and $\alpha=2$, $\mathcal{S}\alpha\mathcal{S}$ reduces to the Cauchy and the Gaussian distributions, respectively.
When $0<\alpha<2$, the moments
are finite only up to the order $\alpha$
in the sense that 
$\mathbb{E}[|X|^{p}]<\infty$
if and only if $p<\alpha$,
which implies infinite variance.
Moreover, $\alpha$-stable distribution can be extended
to the high-dimensional case for random vectors. 
One natural extension is the rotationally symmetric $\alpha$-stable distribution.
$X$ follows a $d$-dimensional rotationally symmetric $\alpha$-stable distribution
if it admits the characteristic function $\mathbb{E}\left[e^{\imagi \langle u,X\rangle}\right]=e^{-\sigma^{\alpha}\Vert u\Vert_{2}^{\alpha}}$ for
any $u\in\mathbb{R}^{d}$.
We refer to \citet{ST1994} for the details of $\alpha$-stable distributions.

\paragraph{L\'{e}vy processes.}
L\'{e}vy processes are stochastic processes with independent and stationary increments.
Their successive displacements can be viewed as the continuous-time
analogue of random walks.
L\'{e}vy processes include the Poisson process, Brownian motion,
the Cauchy process, and more generally stable
processes; see e.g. \citet{bertoin1996,ST1994,Applebaum}.
L\'{e}vy processes in general admit jumps
and have heavy tails which are appealing
in many applications; see e.g. \citet{Cont2004}.
In this paper, we will consider the rotationally symmetric $\alpha$-stable L\'{e}vy process $\Lm_{t}$ in $\mathbb{R}^d$ that is defined as follows.
\begin{enumerate}[label=(\roman*),itemsep=0pt,topsep=0pt,leftmargin=*,align=left]
\item
$\Lm_0=0$ almost surely;
\item
For any $t_{0}<t_{1}<\cdots<t_{N}$, the increments $\Lm_{t_{n}}-\Lm_{t_{n-1}}$
are independent;
\item
The difference $\Lm_{t}-\Lm_{s}$ and $\Lm_{t-s}$
have the same distribution, with the characteristic function $\exp(- (t-s)^\alpha\|u\|_2^\alpha)$ for $t>s$;
\item
$\Lm_{t}$ has stochastically continuous sample paths, i.e.
for any $\delta>0$ and $s\geq 0$, $\mathbb{P}(\|\Lm_{t}-\Lm_{s}\|>\delta)\rightarrow 0$
as $t\rightarrow s$.
\end{enumerate}
When $\alpha=2$, $\Lm_{t}=\sqrt{2}\mathrm{B}_{t}$, where $\mathrm{B}_{t}$ is the standard $d$-dimensional Brownian motion. 

\paragraph{Ornstein-Uhlenbeck processes.}
Ornstein-Uhlenbeck (OU) process \citep{OUpaper} is a $d$-dimensional Markov and Gaussian process
that satisfies the SDE:
\begin{equation}\label{eqn:BM}
\rmd X_t=-AX_t\rmd t+\Sigma \rmd\mathrm{B}_{t},
\end{equation}
where $A$ and $\Sigma$ are a $d\times d$ matrices 
and $\mathrm{B}_{t}$ is a standard $d$-dimensional Brownian motion.
The OU process is a special case of the Langevin equation in physics \citep{pavliotis2014stochastic},
and has wide applications including for example modeling
the change in organismal phenotypes in evolutionary biology \citep{Martins1994},
and the short-rate in the interest rate modeling in finance \citep{Vasicek1977}.
More generally, we can consider an OU process
driven by a L\'{e}vy process, for example,
replacing $\mathrm{B}_{t}$ in \eqref{eqn:BM} by a rotationally symmetric $\alpha$-stable L\'{e}vy process $\Lm_{t}$ so that
\begin{equation}\label{eqn:2}
\rmd X_t=-AX_t \rmd t+\Sigma \rmd \Lm_t.
\end{equation}
Under some mild conditions on $A$ and $\Sigma$, the OU process $X_t$ in \eqref{eqn:2} admits a unique stationary distribution that can be fully characterized; see e.g. \citet{Sato1984,Masuda2004}.

\paragraph{Algorithmic stability and generalization. } 
In this paper, we study the generalization of the continuous-time heavy-tailed SGD by using the tools of algorithmic stability. Several notions of stability have been defined in the literature of statistical learning theory \citep{bousquet2002stability,elisseeff2005stability}. We will use the notion of algorithmic stability of the randomized algorithm $\mathcal{A}$ defined in \citet{hardt2016train}. 
We denote the set  $\mathcal{X}_n$ as the set of all possible size  $n$ datapoints subsampled uniformly at random from $P_X$. 
\begin{definition}[\cite{hardt2016train}, Definition 2.1] \label{def:stability}
 For a loss function $f:\mathbb{R}^d \times \mathcal{X} \rightarrow \mathbb{R}$, an algorithm $\mathcal{A}$ is $\varepsilon$-uniformly stable if
 \begin{align}
     \varepsilon_{\text{stab}}(\mathcal{A}) := \sup_{X\cong \hat{X}}\sup_{z \in \mathcal{X}}~ \mathbb{E}\left[f(\mathcal{A}(X),z) - f(\mathcal{A}(\hat{X}),z) \right] \leq \varepsilon ,
 \end{align}
where the first supremum is taken over data $X, \hat{X} \in \mathcal{X}_n$   that differ by one element, denoted by $X\cong \hat{X}$.
\end{definition}
 Since its introduction in statistical learning theory in \citet{bousquet2002stability}, stability based arguments have been useful in deriving generalization bound for several learning algorithms \citep{belkin2004regularization,cortes2012algorithms,cheng2021algorithmic,maurer2005algorithmic} and have also been extended to get generalization bound for randomized algorithm like SGD and SGLD \citep{hardt2016train,raginsky2017non,kuzborskij2018data,mou2018generalization,chen2018stability,bassily2020stability,charles2018stability,lei2020fine,farghly2021time}. Here below, we provide a result from \citet{hardt2016train} which relates algorithmic stability with the generalization performance of a randomized algorithm.
 
\begin{theorem}[\citep{hardt2016train}, Theorem 2.2]
Suppose that $\mathcal{A}$ is an $\varepsilon$-uniformly stable algorithm, then the expected generalization error is bounded by 
\begin{align}
    \left|\mathbb{E}_{\mathcal{A},X}~\left[ \hat{F}(\mathcal{A}(X),X) - F(\mathcal{A}(X))\right]  \right| \leq \varepsilon.
\end{align}
\end{theorem}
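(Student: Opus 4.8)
The plan is to prove the bound by a classical symmetrization (``ghost sample'') argument that exploits the i.i.d.\ structure of the dataset together with uniform stability. Write $X = \{x_1,\dots,x_n\}$ and introduce an independent ghost dataset $X' = \{x_1',\dots,x_n'\}$ with each $x_i' \sim P_X$ drawn i.i.d.\ and independent of $X$ and of the internal randomness of $\mathcal{A}$. For each index $i$, let $X^{(i)}$ denote the dataset obtained from $X$ by replacing its $i$-th entry $x_i$ with $x_i'$; by construction $X^{(i)}$ differs from $X$ in exactly one element, i.e.\ $X \cong X^{(i)}$, so the uniform stability property from \Cref{def:stability} applies to this pair.

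First I would rewrite the true-risk term. Since $F(\theta) = \mathbb{E}_{x\sim P_X}[f(\theta,x)]$ and the ghost samples are independent of $\mathcal{A}(X)$, for every fixed $i$ one has $\mathbb{E}_{\mathcal{A},X}[F(\mathcal{A}(X))] = \mathbb{E}[f(\mathcal{A}(X), x_i')]$, and hence also equals the average $\frac1n \sum_{i=1}^n \mathbb{E}[f(\mathcal{A}(X), x_i')]$. The empirical-risk term is directly $\mathbb{E}_{\mathcal{A},X}[\hat{F}(\mathcal{A}(X),X)] = \frac1n \sum_{i=1}^n \mathbb{E}[f(\mathcal{A}(X), x_i)]$. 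The key step is then a relabeling: because the pairs $(X, x_i')$ and $(X^{(i)}, x_i)$ have the same joint law (they differ only by swapping the two i.i.d.\ coordinates $x_i \leftrightarrow x_i'$), I can replace $\mathbb{E}[f(\mathcal{A}(X), x_i')]$ by $\mathbb{E}[f(\mathcal{A}(X^{(i)}), x_i)]$.

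Subtracting the two representations, the expected generalization error becomes
\begin{align}
\mathbb{E}_{\mathcal{A},X}\!\left[\hat{F}(\mathcal{A}(X),X) - F(\mathcal{A}(X))\right]
= \frac1n \sum_{i=1}^n \mathbb{E}\!\left[f(\mathcal{A}(X), x_i) - f(\mathcal{A}(X^{(i)}), x_i)\right].
\end{align}
For each fixed data realization I would condition on everything except the randomness of $\mathcal{A}$, treat $z = x_i$ as the test point, and invoke $\varepsilon$-uniform stability for the neighboring pair $X \cong X^{(i)}$, which bounds the inner expectation over $\mathcal{A}$ by $\varepsilon$; taking the outer expectation over the data preserves this bound, so each summand is at most $\varepsilon$ and thus so is the average. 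Running the identical argument with the roles of $X$ and $X^{(i)}$ interchanged yields the matching lower bound $-\varepsilon$, and the two estimates together give the absolute-value bound.

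The main obstacle is the relabeling step: one must verify carefully that swapping $x_i$ with $x_i'$ leaves the joint distribution of the configuration invariant, and that the point at which the loss is evaluated, $z = x_i$, coincides with the single coordinate in which $X$ and $X^{(i)}$ disagree. This is precisely the configuration that the supremum over the test point $z$ in \Cref{def:stability} is designed to cover, so no additional assumption is needed. Once this identification is made, the remainder is a routine term-by-term application of the stability bound together with the two-sided (triangle-type) argument for the absolute value.
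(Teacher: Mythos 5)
Your proof is correct and takes essentially the same route as the source: the paper does not reprove this result but cites it from \cite{hardt2016train}, and the ghost-sample argument you give—swapping $x_i \leftrightarrow x_i'$ so that $(X,x_i')$ and $(X^{(i)},x_i)$ share the same joint law, then applying uniform stability term by term to the neighboring pair $X \cong X^{(i)}$—is precisely the proof of Theorem 2.2 there. The relabeling step you flag is indeed valid by exchangeability of the i.i.d.\ coordinates (together with independence of $\mathcal{A}$'s internal randomness from the data), and the two-sided bound follows as you say from the symmetry of the neighboring-dataset relation.
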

In several of recent works \citep{feldman2019high,bousquet2020sharper,klochkov2021stability}, high probability bounds have been obtained using algorithmic stability bounds.



\section{Algorithmic Stability of Heavy-Tailed SGD on Least Squares Regression}
\label{sec:char_function}

In this section, we will investigate the effects of heavy-tails on algorithmic stability. We consider the setting of least square regression with $f(\theta,(x,y)) = (\theta^\top x -y)^2/2$. 
We assume that we only have the access to the data generation distribution $P_{X}$ via the generated training samples and our goal is to learn a parameter vector $\theta \in \mathbb{R}^d$ which minimize the corresponding population risk. 
We denote the training data by the matrix $X=\left[x_1^\top , x_2^\top,\dots, x_i^\top,\dots , x_n^\top\right] \in \mathbb{R}^{n\times d}$ and $y = [y_1, y_2, \dots, y_i, \dots, y_n] \in \rset^{n}$, where $n$ is the number of data points, $d$ is the dimension of the problem, and $x_i \in \mathbb{R}^d$, $y_i \in \rset$ for all $i$.  Training data points are i.i.d.\ from the distribution $P_X$. We consider the ERM problem as defined in \eqref{eqn:risks}:
    $ \min_{\theta\in \mathbb{R}^{d}} \frac{1}{2n} \sum_{i=1}^n (\theta^\top x_i - y_i) ^2$.  

In the context of algorithmic stability, we assume that we have two training datasets $(X,y)$ and $(\hat{X},\hat{y})$ that differ in only one data point. Without loss of generality, we have 
\begin{align*}
\hat{X}=\left[x_1^\top , x_2^\top,\dots, \tilde{x}_i^\top,\dots,x_n^\top\right] \in \mathbb{R}^{n\times d},
\qquad
\hat{y} = [y_1, y_2, \dots, \tilde{y}_i, \dots, y_n]\in\mathbb{R}^{n}.
\end{align*}
For our ERM problem, we consider the continuous-time heavy-tailed stochastic gradient descent, which is represented by the following two SDEs that are driven by a rotationally symmetric $\alpha$-stable L\'{e}vy process $\Lm_{t}$ in $\mathbb{R}^d$,
\begin{align}
    \rmd \theta_t &= - \frac{1}{n} \left(X^\top X \theta_t - X^\top y \right)\rmd t + \Sigma \rmd \Lm_t, \label{eq:sde_invar_1}  \\
    \rmd \hat{\theta}_t &= -\frac{1}{n} \left(\hat{X}^\top \hat{X} \hat{\theta}_t - \hat{X}^\top \hat{y} \right) \rmd t + \Sigma \rmd \Lm_t, \label{eq:sde_invar_2}
\end{align}
where $\Sigma \in \mathbb{R}^{d\times d}$ is a real-valued matrix.
 
Under mild conditions, the SDEs \eqref{eq:sde_invar_1} and \eqref{eq:sde_invar_2} have unique strong solutions, which are Markov processes and they admit unique invariant distributions \citep{Sato1984}. Thanks to the linearity of the drifts of these SDEs, the stationary distribution is achieved very quickly, with an exponential rate \citep{xie2020ergodicity}. Hence, to ease our analysis, we will assume that we have two samples from the stationary distributions of \eqref{eq:sde_invar_1} and \eqref{eq:sde_invar_2}, say $\theta$ and $\hat{\theta}$. In other words, we set our learning algorithm such that it gives a random sample from the stationary distribution of the SDE determined by the dataset, i.e., $\mathcal{A}_{\text{cont}}((X,y)) = \theta$, and $\mathcal{A}_{\text{cont}}((\hat{X},\hat{y}))= \hat{\theta}$, where $\mathcal{A}_{\text{cont}}$ denotes the \emph{continuous-time} heavy-tailed SGD algorithm. In the rest of this section, we will derive stability bounds for this learning algorithm.

 
\vspace{-2pt} 
\subsection{Warm-up: the need for the surrogate loss}
\vspace{-2pt}

To motivate our analysis technique, let us first consider the following simple setting, where we set $d=1$, so that we have $f(\theta,(x,y)) = (x\theta-y)^2$. In this specific case, when $\alpha>1$, we can compute the stationary distributions of \eqref{eq:sde_invar_1} and \eqref{eq:sde_invar_2} in an explicit form. With a slight abuse of notation, the distribution of $\theta_t$ converges to a symmetric stable law: $(\delta/s) + \mathcal{S}\alpha\mathcal{S}((\alpha s )^{-1/\alpha} )$, where $s = (1/n)\sum_{i=1}^n x_i^2$ and $\delta = (1/n) \sum_{i=1}^n x_i y_i$ with $\delta/s$ being the mean (and the mode) of the stationary distribution, which coincides with the ordinary least-squares solution. 
Similarly, the distribution of $\hat{\theta}_t$ converges to $(\hat{\delta}/\hat{s}) + \mathcal{S}\alpha\mathcal{S}((\alpha \hat{s} )^{-1/\alpha} )$, where $\hat{\delta}$ and $\hat{s}$ are defined analogously. 

As a first observation, assume that we have a sample from the stationary distribution of $\theta_t$, such that $\theta \sim (\delta/s) + \mathcal{S}\alpha\mathcal{S}((\alpha s )^{-1/\alpha} )$. Considering this scheme as the algorithm, i.e., $\mathcal{A}_{\text{cont}}((X,y)) = \theta$, a simple calculation shows that 
\begin{align*}
    \mathbb{E}_{ \mathcal{A}_{\text{cont}} (X,y)}\left[f\left(\mathcal{A}_{\text{cont}}((X,y)),(X,y)\right)\right] 
    = \mathbb{E}_{\theta, (X,y)}\left[(1/n)\sum\nolimits_{i=1}^n(x_i\theta-y_i)^2\right] = +\infty,
\end{align*}
since the variance of $\mathcal{S}\alpha\mathcal{S}((\alpha s )^{-1/\alpha})$ is infinite whenever $\alpha<2$. Therefore, it is clear that we cannot expect any algorithmic stability in this scheme, as long as the stability is measured with respect to the squared loss. However, as we will show in the sequel, in turns out that if we instead measure the stability with respect to a surrogate loss function, which in this case would be $|x\theta-y|^p$ for some $p \in [1,\alpha)$, the algorithm becomes stable, even though it is based on a distribution that concentrates near the optimum for squared loss.

On the other hand, we notice that the means of the stationary distributions, i.e., $\delta/s$ and $\hat{\delta}/\hat{s}$ do not interact with the tail exponent $\alpha$. Since our main goal is to investigate the interplay between the tail behavior and algorithmic stability, we will ignore this term and assume that $y_i=0$ almost surely for all $i$ (otherwise non-zero $y_i$ will only introduce terms in the stability that do not depend on $\alpha$). This way, we fall back to the SDE given in \eqref{eqn:htou}. 

In the light of these two observations, for the general case where $d\geq 1$, we will use the following surrogate loss function to measure stability: 
\begin{align}
f(x) := f(\theta,x) := |\theta^\top x|^p, \quad \text{for some} \quad  p \in [1,2],    \label{eqn:surrogate}
\end{align}
which generalizes the original loss function. Note that, from now on we will drop the notation $\tilde{f}$ for denoting surrogate losses for simplicity and use a single notation for the loss function. 


\vspace{-2pt}
\subsection{Algorithmic stability analysis in the Fourier domain}
\vspace{-2pt}

For $d\geq 2$, unfortunately we cannot identify the stationary distributions of \eqref{eq:sde_invar_1} and \eqref{eq:sde_invar_2} in an explicit form. However, by using the theory of the characterization of the stationary distribution
for an Ornstein-Uhlenbeck process driven by a L\'{e}vy process in the literature (see \citet{Sato1984,Masuda2004} and the background review in the Appendix), in the next lemma, we show that we can
%
characterize the stationary distribution
of the Ornstein-Uhlenbeck process driven by a rotationally symmetric $\alpha$-stable L\'{e}vy process in a semi-explicit way:
\begin{equation}\label{char:stat:dist}
\rmd \theta_t=-A \theta_t\rmd t+\Sigma \rmd \Lm_t,
\end{equation}
where $A$ and $\Sigma$ are $d\times d$ real matrices.
%
\begin{lemma}\label{lem:char}
Assume that $A$ is a real symmetric matrix with all the eigenvalues being positive.
Then \eqref{char:stat:dist} admits a unique stationary distribution 
$\pi$ whose characteristic function is given by
\begin{equation}
\int_{\mathbb{R}^{d}}e^{\imagi  u^\top x}\pi(\rmd x)
=\exp\left(-\int_{0}^{\infty}\left\Vert\Sigma^{\top}e^{-sA}u\right\Vert_{2}^{\alpha}\rmd s\right).
\end{equation}
\end{lemma}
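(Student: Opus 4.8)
The plan is to solve the linear SDE \eqref{char:stat:dist} explicitly and read off the stationary law as a stochastic integral, then compute its characteristic function directly. By the variation-of-constants formula for linear SDEs driven by a L\'evy process, the solution started at $\theta_0$ is
\[
\theta_t = e^{-tA}\theta_0 + \int_0^t e^{-(t-s)A}\Sigma\,\rmd\Lm_s .
\]
Since $A$ is symmetric with strictly positive eigenvalues, $\|e^{-tA}\|\to 0$ exponentially fast, so the first term vanishes in probability and the stationary distribution should be the law of $\int_0^\infty e^{-sA}\Sigma\,\rmd\Lm_s$. First I would make this rigorous by showing that $\theta_t$ converges in distribution as $t\to\infty$ and that the limit is invariant; the existence and uniqueness of the invariant measure can be imported from the Sato--Masuda theory of L\'evy-driven OU processes cited above, so the real work is to identify the characteristic function.

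The central computation is the characteristic function of a stochastic integral of a deterministic matrix-valued integrand $g$ against $\Lm$. Using the stationarity and scaling of the increments, each increment satisfies $\mathbb{E}[e^{\imagi v^\top(\Lm_{s+h}-\Lm_s)}] = e^{-h\|v\|_2^\alpha}$; I would partition $[0,t]$, write the integral as a limit of Riemann sums $\sum_k g(s_k)(\Lm_{s_{k+1}}-\Lm_{s_k})$, and use the independence of the increments to factor the characteristic function into a product. Since $u^\top g(s_k)(\Lm_{s_{k+1}}-\Lm_{s_k})$ has characteristic function $\exp\bigl(-(s_{k+1}-s_k)\|g(s_k)^\top u\|_2^\alpha\bigr)$, the characteristic function of the sum equals $\exp\bigl(-\sum_k (s_{k+1}-s_k)\|g(s_k)^\top u\|_2^\alpha\bigr)$, which converges to
\[
\mathbb{E}\Bigl[\exp\Bigl(\imagi u^\top\!\int_0^t g(s)\,\rmd\Lm_s\Bigr)\Bigr] = \exp\Bigl(-\int_0^t \|g(s)^\top u\|_2^\alpha\,\rmd s\Bigr)
\]
as the mesh shrinks. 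Applying this with $g(s) = e^{-(t-s)A}\Sigma$, substituting $r = t-s$, and using $A=A^\top$ (so that $(e^{-rA})^\top = e^{-rA}$) gives $g(s)^\top u = \Sigma^\top e^{-rA}u$ and turns the exponent into $-\int_0^t\|\Sigma^\top e^{-rA}u\|_2^\alpha\,\rmd r$.

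Finally, I would let $t\to\infty$. Because the eigenvalues of $A$ are positive, $\|\Sigma^\top e^{-rA}u\|_2^\alpha$ decays exponentially, so the improper integral $\int_0^\infty\|\Sigma^\top e^{-sA}u\|_2^\alpha\,\rmd s$ converges and defines a continuous function of $u$ vanishing at $u=0$; by L\'evy's continuity theorem the limiting expression is a genuine characteristic function, and it must coincide with that of $\pi$. The step I expect to be the main obstacle is making the Riemann-sum limit for the stochastic integral fully rigorous --- in particular controlling the convergence of the approximating integrals and exchanging limit with expectation for the heavy-tailed, infinite-variance driving process when $\alpha<2$ --- rather than any of the algebra. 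This is precisely where invoking the Sato--Masuda characterization, which already records the characteristic function of the stationary law of a L\'evy-driven OU process in terms of the driver's L\'evy exponent, lets me sidestep the delicate $\mathrm{L}^2$ arguments that are unavailable in the heavy-tailed regime.
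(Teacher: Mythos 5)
Your proposal is correct, but it takes a genuinely different route from the paper for the central computation. The paper never solves the SDE: it verifies the hypotheses of the Sato characterization (Lemma~\ref{lem:general}) --- the eigenvalues of $A$ are positive and the $\alpha$-stable L\'evy measure satisfies $\int_{\Vert z\Vert>1}\log\Vert z\Vert\,\nu(\rmd z)<\infty$ --- then computes the driver's characteristic function $\varphi_{1}(u)=\mathbb{E}[e^{\imagi\langle u,\Sigma\Lm_{1}\rangle}]=e^{-\Vert\Sigma^{\top}u\Vert_{2}^{\alpha}}$ and substitutes it into the exponent formula $\exp\left(\int_{0}^{\infty}\log\varphi_{1}\left(e^{-sA^{\top}}u\right)\rmd s\right)$, using $A=A^{\top}$ to replace $e^{-sA^{\top}}$ by $e^{-sA}$. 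You instead derive the formula from scratch via the variation-of-constants solution $\theta_t=e^{-tA}\theta_0+\int_0^t e^{-(t-s)A}\Sigma\,\rmd\Lm_s$ and a Riemann-sum evaluation of the characteristic function of a stable stochastic integral with deterministic integrand; this is a standard and valid argument (cf.\ the theory of integrals of deterministic functions against stable processes in \cite{ST1994}), and it has the pedagogical virtue of explaining where the exponent comes from rather than quoting it. Your use of $\mathbb{E}[e^{\imagi v^{\top}(\Lm_{s+h}-\Lm_{s})}]=e^{-h\Vert v\Vert_{2}^{\alpha}}$ (exponent linear in $h$) is the correct normalization, consistent with the appendix computation of $\varphi_1$. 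Two remarks. First, the obstacle you flag --- exchanging the limit with the expectation for the infinite-variance driver --- is milder than you fear: once the Riemann sums converge in probability (which is how the stable integral of a continuous deterministic integrand is defined), the characteristic functions converge by bounded convergence, since $\left|e^{\imagi u^{\top}S}\right|\leq 1$; no $\mathrm{L}^{2}$ theory is needed anywhere, so you need not ``sidestep'' anything. Second, since you nonetheless invoke the Sato--Masuda theory for existence and uniqueness of $\pi$ (and again as a fallback for rigor), your argument ultimately rests on exactly the citation the paper applies directly; if you retain the direct route you should still check the log-moment condition $\int_{\Vert z\Vert>1}\log\Vert z\Vert\,\nu(\rmd z)<\infty$ when importing uniqueness, as the paper does, and your explicit computation then serves as an instructive but logically redundant verification of the formula that Lemma~\ref{lem:general} yields in one step.
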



While Lemma~\ref{lem:char} provides us information about the stationary distributions of the SDEs \eqref{eq:sde_invar_1} and \eqref{eq:sde_invar_2}, it considers the Fourier transforms of these distributions, which makes this setting not amenable to conventional algorithmic stability analysis tools. 

As a remedy, we perform the stability analysis directly in the Fourier domain and use the Fourier inversion theorem to compute stability bounds for continuous-time heavy-tailed SGD. 
Our main approach is based on the following observation. Let $g:\rset^d \mapsto \rset$ be a function, and $P$, $Q$ be random variables in $\rset^d$ with respective characteristic functions $\psi_P$ and $\psi_Q$. If the Fourier inversion theorem holds on $g$, then $g$ is the inverse Fourier transform of $\mathcal{F}g(\cdot)$. Hence, 
\begin{align}
   \mathbb{E}\left[g(P) - g(Q)\right] &=  \frac{1}{(2\pi)^d} \mathbb{E}\left[\int_{\mathbb{R}^d} \left(e^{\imagi  u^\top P} - e^{\imagi  u^\top Q}\right) \mathcal{F}g(u) ~\rmd u\right] \notag   
    \\
    &= \frac{1}{(2\pi)^d} \int_{\mathbb{R}^d} \mathbb{E}\left[e^{\imagi  u^\top P} - e^{\imagi  u^\top Q}\right] \mathcal{F}g(u) ~\rmd u  \notag \\
    &= \frac{1}{(2\pi)^d} \int_{\mathbb{R}^d}  \left( \psi_P(u) - \psi_Q(u)\right) \mathcal{F}g(u) ~\rmd u  \notag
    \\
    &\leq \frac{1}{(2\pi)^d} \int_{\mathbb{R}^d} |\psi_P(u) - \psi_Q(u)| |\mathcal{F}g(u)|~\rmd u. \label{eq:char_formulation}
\end{align}
Hence, \eqref{eq:char_formulation} enables us to utilize the result given in Lemma~\ref{lem:char} and hence gives us a way to perform stability analysis (as given in Definition~\ref{def:stability}) in the Fourier domain.



\paragraph{Algorithmic stability via characteristic function. }
By setting $y = \hat{y} = 0$ and invoking Lemma~\ref{lem:char}, 
the characteristic functions of stationary distributions corresponding to the SDEs \eqref{eq:sde_invar_1} and \eqref{eq:sde_invar_2} are respectively given as follows:
\begin{align}
    \psi_{\theta}(u) &= \exp\left( - \int_{0}^{\infty} \left\|\Sigma^\top e^{-s \frac{1}{n}({X}^\top {X})}u \right\|_2^\alpha \rmd s\right), \label{eq:char_sde_1}\\
    \psi_{\hat{\theta}}(u) &= \exp\left( - \int_{0}^{\infty} \left\|\Sigma^\top e^{-s \frac{1}{n}(\hat{X}^\top \hat{X})}u \right\|_2^\alpha \rmd s\right). \label{eq:char_sde_2}
\end{align}

From the Definition~\ref{def:stability} and from \eqref{eq:char_formulation} and \eqref{eqn:surrogate}, we have
\begin{align}
    \varepsilon_{\text{stab}} (\mathcal{A}_{\text{cont}}) &= \sup_{X\cong \hat{X}}\sup_{x \in \mathcal{X}} \mathbb{E}\left[\left|\theta^\top x\right|^p - \left|\hat{\theta}^\top x\right|^p  \right] \notag \\
    &= \sup_{X\cong \hat{X}}\sup_{x \in \mathcal{X}}  \frac{1}{(2\pi)^d} \int_{\mathbb{R}^d} |\psi_{\theta}(u) - \psi_{\hat{\theta}}(u)|\cdot \left|\mathcal{F}\left[|x^\top \cdot|^p\right](u)\right|~\rmd u.  \label{eq:stab_char}
\end{align}
In the remainder of this section, we will consider $\Sigma = I$ for convenience with $I$ being the identity matrix. However, we provide bounds showing the effect of $\Sigma$ in the Appendix. 

\paragraph{One-dimensional case ($d=1$). } 
We first discuss the case where $d=1$ and report it as a separate result since its proof is simpler and more instructive. Following \eqref{eq:char_formulation}, as a first step, we prove a lemma, which relates the characteristic functions of the stationary distributions by upper-bounding $|\psi_{\theta}(u) - \psi_{\hat{\theta}}(u)|$. 
For the sake of brevity, we present this result in the Appendix (Lemma~\ref{lem:1d_char_func}). 
%
%
%
%
By using this intermediate result, we next prove upper- and lower-bounds on the stability of the continuous time heavy-tailed SGD algorithm and discuss its behavior with respect to $\alpha$ and $p$.
\begin{theorem}\label{thm:1d_main}
Consider the one-dimensional loss function $f(x)  =|\theta x|^p$.  For any $x\sim P_X$, if we have $|x| > R $ with probability $\delta_1$ and for any $X$ sampled uniformly at random from the set $\mathcal{X}_n$, if we have $\|X\|_2^2 \leq \sigma^2 n$  with probability $\delta_2$. Then,
\begin{enumerate}
    \item[(i)] For $\alpha \in [1,2)$, the algorithm is not stable when $p \in [\alpha,2]$ i.e. $\varepsilon_{\text{stab}(\mathcal{A}_{\text{cont}})}$ diverges. When $\alpha = p =2$ then $ \varepsilon_{\text{stab}} (\mathcal{A}_{\text{cont}}) \leq \frac{R^4}{\pi \sigma^4 n}$ with probability  at least $1- \delta_1 -2\delta_2$.
    \item[(ii)] For $p\in [1,\alpha)$, we have the following upper bound for the algorithmic stability,
    \begin{align*}
         \varepsilon_{\text{stab}} (\mathcal{A}_{\text{cont}})
    &\leq    \frac{2R^{p+2}}{\pi\sigma^2 n }\Gamma(p+1) \cos\left( \frac{(p-1)\pi}{2}\right) \frac{1}{\alpha} \left( \frac{1}{\alpha \sigma^2 }\right)^{\frac{p}{\alpha}} \Gamma \left( 1 - \frac{p}{\alpha}\right)=: c(\alpha),
    \end{align*}
    which holds with probability at least $1- \delta_1-2\delta_2$. Furthermore, for some $\alpha_0 > 1$, if we have 
    \begin{align}
        \label{eqn:sigma_bound}
        \sigma^2 \geq \exp\left( 1 + \frac{2}{p} - \log \alpha_0 -  \phi\left( 1 - \frac{p}{\alpha_0}\right) \right),
    \end{align}
    where $\phi$ is the digamma function, then the map $\alpha\mapsto c(\alpha)$ is increasing for $\alpha \in [\alpha_0,2)$.
    \item[(iii)]  The stability bound is tight in $\alpha$.  
\end{enumerate}
\end{theorem}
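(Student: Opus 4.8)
The plan is to reduce the stability functional \eqref{eq:stab_char} to a single scalar Fourier integral that can be evaluated in closed form up to $\alpha$-independent constants. First I would specialize Lemma~\ref{lem:char} to $d=1$ with $\Sigma=1$: writing $a:=\tfrac1n\|X\|_2^2$ and $\hat a:=\tfrac1n\|\hat X\|_2^2$, the inner integrals in \eqref{eq:char_sde_1}--\eqref{eq:char_sde_2} evaluate to $\int_0^\infty e^{-s a\alpha}|u|^\alpha\,\mathrm ds=|u|^\alpha/(\alpha a)$, so the two stationary laws are symmetric $\alpha$-stable with $\psi_\theta(u)=\exp(-|u|^\alpha/(\alpha a))$ and $\psi_{\hat\theta}(u)=\exp(-|u|^\alpha/(\alpha\hat a))$, matching the warm-up scale $\mathcal S\alpha\mathcal S((\alpha a)^{-1/\alpha})$. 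Next I would record the tempered-distributional transform $\mathcal F[|\cdot|^p](u)=2\Gamma(p+1)\sin(\tfrac{p\pi}2)\,|u|^{-(p+1)}$, use $\sin(\tfrac{p\pi}2)=\cos(\tfrac{(p-1)\pi}2)$, and note $\mathcal F[|x\,\cdot|^p]=|x|^p\mathcal F[|\cdot|^p]$; applying \eqref{eq:char_formulation} and evenness, \eqref{eq:stab_char} becomes, on the event $|x|\le R$,
\[
\varepsilon_{\text{stab}}(\mathcal A_{\text{cont}})\le \frac{2R^{p}\Gamma(p+1)\cos(\tfrac{(p-1)\pi}2)}{\pi}\int_0^\infty \frac{|\psi_\theta(u)-\psi_{\hat\theta}(u)|}{u^{p+1}}\,\mathrm du,
\]
so all of (i)--(iii) reduce to estimating this scalar integral.

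For part (i) I would argue directly from the moment property quoted in the background: since $\theta\sim\mathcal S\alpha\mathcal S((\alpha a)^{-1/\alpha})$ with $\alpha<2$, one has $\mathbb E[|x\theta|^p]=|x|^p\,\mathbb E[|\theta|^p]=+\infty$ as soon as $p\ge\alpha$, so no finite stability constant can exist with respect to $|\theta x|^p$; equivalently the displayed integral diverges at the origin because $|\psi_\theta-\psi_{\hat\theta}|\asymp u^\alpha$ there while the weight is $u^{-(p+1)}$. For the degenerate corner $\alpha=p=2$ the law is Gaussian with $\mathbb E[\theta^2]=1/a$, so I would compute $\mathbb E[(\theta x)^2]-\mathbb E[(\hat\theta x)^2]=x^2(\tfrac1a-\tfrac1{\hat a})$ exactly, and bound $|x|\le R$, $a\hat a\ge\sigma^4$, and $|a-\hat a|\le R^2/n$ (the datasets differ in one point of magnitude at most $R$), which after applying the same Fourier constant yields the stated $R^4/(\pi\sigma^4 n)$.

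For part (ii) the core estimate is Lemma~\ref{lem:1d_char_func}: by the mean value theorem applied to $\lambda\mapsto e^{-\lambda u^\alpha}$ with $\lambda=1/(\alpha a)$, $\hat\lambda=1/(\alpha\hat a)$, I get $|\psi_\theta(u)-\psi_{\hat\theta}(u)|\le u^\alpha|\lambda-\hat\lambda|\,e^{-(\lambda\wedge\hat\lambda)u^\alpha}$, which vanishes like $u^\alpha$ at the origin (giving convergence precisely when $p<\alpha$) and decays at infinity. The substitution $v=u^\alpha$ turns the weighted integral into a Gamma integral, $\int_0^\infty u^{\alpha-p-1}e^{-(\lambda\wedge\hat\lambda)u^\alpha}\,\mathrm du=\tfrac1\alpha(\lambda\wedge\hat\lambda)^{-(1-p/\alpha)}\Gamma(1-\tfrac p\alpha)$. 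Inserting $|\lambda-\hat\lambda|\le R^2/(\alpha n\,a\hat a)$ and, on the high-probability event $a,\hat a\ge\sigma^2$ (whose complement for either dataset contributes $\delta_2$, and $|x|>R$ contributes $\delta_1$, giving $1-\delta_1-2\delta_2$ by a union bound), I would collect the powers of $\alpha$ and $\sigma^2$ to obtain $c(\alpha)$. The monotonicity claim is then calculus: writing $\log c(\alpha)=\mathrm{const}-\log\alpha-\tfrac p\alpha\log(\alpha\sigma^2)+\log\Gamma(1-\tfrac p\alpha)$ and differentiating, $\tfrac{d}{d\alpha}\log c\ge0$ reduces to $\log\sigma^2\ge \tfrac\alpha p+1-\log\alpha-\phi(1-\tfrac p\alpha)$; since $\tfrac\alpha p<\tfrac2p$ on $[\alpha_0,2)$ and $\alpha\mapsto\log\alpha+\phi(1-\tfrac p\alpha)$ is increasing (both summands increase, using monotonicity of the digamma $\phi$), the right-hand side is bounded above by $\tfrac2p+1-\log\alpha_0-\phi(1-\tfrac p{\alpha_0})$, so condition \eqref{eqn:sigma_bound} forces the derivative to be nonnegative throughout $[\alpha_0,2)$.

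For the tightness claim (iii) I would reverse the above: using convexity of $\lambda\mapsto e^{-\lambda u^\alpha}$ to get a matching lower bound $|\psi_\theta-\psi_{\hat\theta}|\gtrsim u^\alpha|\lambda-\hat\lambda|e^{-(\lambda\vee\hat\lambda)u^\alpha}$ on a suitable range of $u$, and choosing one admissible pair $X\cong\hat X$ and test point $x$ that make $|\lambda-\hat\lambda|$, $a$, $\hat a$ of the right order, the same $v=u^\alpha$ substitution produces a lower bound proportional to $\tfrac1\alpha(\alpha\sigma^2)^{-p/\alpha}\Gamma(1-\tfrac p\alpha)$, matching $c(\alpha)$ up to $\alpha$-independent factors. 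The hard part throughout will be analytic rather than algebraic: justifying the Fourier-domain pairing \eqref{eq:char_formulation} for the non-integrable, distributionally-transformed loss $|\theta x|^p$ (interchanging expectation with the $u$-integral and making sense of $\mathcal F[|\cdot|^p]$ near its $u=0$ singularity), and verifying that the singularity of the weight is exactly compensated by the $u^\alpha$-vanishing of $|\psi_\theta-\psi_{\hat\theta}|$, since this is precisely what produces the dichotomy between (i) and (ii) and pins down the exact constant $\Gamma(1-p/\alpha)$.
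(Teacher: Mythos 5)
Your proposal is correct in substance and, for the upper bounds, follows essentially the paper's own route: the same Fourier pairing \eqref{eq:char_formulation}--\eqref{eq:stab_char}, the same closed-form transform of $|\cdot|^p$, the same characteristic-function estimate (the paper's Lemma~\ref{lem:1d_char_func} bound $|\psi_\theta(u)-\psi_{\hat\theta}(u)|\le |u|^\alpha\,\frac{n|x_i^2-\tilde{x}_i^2|}{\alpha\|X\|_2^2\|\hat{X}\|_2^2}\,e^{-|u|^\alpha n/(\alpha\|X\|_2^2)}$ is exactly your mean-value bound with $\lambda=n/(\alpha\|X\|_2^2)$), the same substitution $t=c\,u^\alpha$ yielding $\tfrac1\alpha c^{-(1-p/\alpha)}\Gamma(1-\tfrac p\alpha)$, and the same digamma computation $\partial_\alpha\log\Lambda(\alpha)=\tfrac{p}{\alpha^2}\bigl[\log\alpha+\log\sigma^2-1-\tfrac\alpha p+\phi\bigl(1-\tfrac p\alpha\bigr)\bigr]$ in part (ii); indeed you make explicit the monotonicity of $\alpha\mapsto\log\alpha+\phi(1-p/\alpha)$ that the paper leaves implicit when passing from the single condition \eqref{eqn:sigma_bound} at $\alpha_0$ to positivity of the derivative on all of $[\alpha_0,2)$. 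Your one genuinely different move is in part (iii): the paper expands $1-e^{-|u|^\alpha[\cdot]}$ as an alternating series, integrates term by term, and lower-bounds the alternating sum by a geometric-series comparison before instantiating a discretely supported $P_X$; your convexity/mean-value lower bound $e^{-\lambda u^\alpha}-e^{-\hat\lambda u^\alpha}\ge(\hat\lambda-\lambda)\,u^\alpha e^{-\hat\lambda u^\alpha}$ holds for \emph{all} $u>0$ (no range restriction needed) and reaches the same $\tfrac1\alpha(\alpha\sigma^2)^{-p/\alpha}\Gamma(1-\tfrac p\alpha)$ order with far less bookkeeping, sidestepping the delicacy of the paper's series truncation. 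You still need the paper's closing construction (a discrete-support $P_X$ guaranteeing $|\lambda-\hat\lambda|\gtrsim c/n$ with constant probability) and a sign observation — $\psi_\theta-\psi_{\hat\theta}$ and $\mathcal{F}[|\cdot|^p]$ each have constant sign, so the modulus pairing is actually attained — which both you and the paper gloss over at the same level of rigor; similarly, in part (i) your infinite-moment argument shows both expectations are $+\infty$ (so the difference is ill-defined rather than literally divergent), and your supplementary observation that the integrand behaves like $u^{\alpha-p-1}$ at the origin is the right way to make ``diverges'' precise, matching the paper.

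One concrete caveat at the corner $\alpha=p=2$: your direct Gaussian computation is \emph{more} rigorous than the paper's, which pushes the formal identity $\mathcal{F}[|\theta x|^2]=-2x^2\delta(u)/u^2$ (not a bona fide tempered distribution; the correct transform involves $\delta''$) through the integral. But your exact identity $\mathbb{E}[(\theta x)^2]-\mathbb{E}[(\hat\theta x)^2]=x^2\bigl(\tfrac1a-\tfrac1{\hat a}\bigr)$ with $|x|\le R$, $a\hat a\ge\sigma^4$, $|a-\hat a|\le R^2/n$ gives $R^4/(\sigma^4 n)$ with \emph{no} factor $1/\pi$; your remark that ``applying the same Fourier constant'' recovers the stated $R^4/(\pi\sigma^4 n)$ is unsupported, since a direct moment computation has no Fourier prefactor and the paper's $1/\pi$ is an artifact of its formal delta manipulation. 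You should either state the (larger by a factor $\pi$) bound $R^4/(\sigma^4 n)$ or flag the discrepancy in the paper's constant; this affects only an absolute constant, not the $\alpha$-, $\sigma$-, or $n$-dependence that the theorem is about.
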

%
Informally, this result illuminates the following facts: (i) When subject to heavy tails, i.e., $\alpha<2$, the algorithm is stable only when a surrogate loss is used with $p<\alpha$. (ii) For $1\leq p<\alpha<2$, the stability level $\varepsilon_{\text{stab}}$ is upper-bounded by a function of $\alpha$, $p$, and the variance of the data distribution $\sigma^2$. Furthermore (and perhaps more surprisingly), for a given \emph{heavy-tailedness threshold} $\alpha_0 \in (1,2)$, if the data variance is sufficiently large as in \eqref{eqn:sigma_bound}, the stability bound becomes monotonically increasing for $\alpha \in [\alpha_0, 2)$, which indicates that as the algorithm becomes more stable it gets heavier-tailed. However, this relation holds as long as the heaviness of the tails does not exceed the threshold $\alpha_0$. (iii) We further show that, there exists a data distribution $P_X$ such that $\varepsilon_{\text{stab}}$ is lower-bounded by a function, which also depends on $\alpha$, $p$, and $\sigma^2$. In the proved lower-bound, the terms depending on $\alpha$ have the same order as of the ones given in the upper-bound of Theorem~\ref{thm:1d_main}. Hence our stability bound is tight in $\alpha$.  
Combined with point (ii), this result suggests that the generalization error might not be globally monotonic with respect to the heaviness of the tails under our modeling strategy. On the other hand, for a fixed data distribution where $\sigma^2$ is given, \eqref{eqn:sigma_bound} provides a `guideline' for choosing the optimal tail index $\alpha$ in the sense of algorithmic stability.

\paragraph{Multi-dimensional case ($d\geq 2$).  } 
Now we will focus our attention to the case of $d$ dimensions. We follow the same route as in Theorem~\ref{thm:1d_main}, where we first relate the characteristic functions of the stationary distributions. We also present this result in the Appendix (Lemma~\ref{lem:dd_char_func}).  
Based on Lemma~\ref{lem:dd_char_func}, we next provide stability bounds for the $d$-dimensional case. 
%
%
\begin{theorem}\label{thm:dd_main}
Consider $f(x) =|\theta^\top x|^p$ such that $\theta, x \in \mathbb{R}^d$. Assume that for almost all $x\sim P_X$, we have $\|x\|_2 \leq R $, for any $X$ sampled uniformly at random from the set $\mathcal{X}_n$, we have $\frac{1}{n} \| X^\top X u\|_2 \geq  \sigma_{\min} \|u\|_2$ for all $u\in \mathbb{R}^d$ and for any two $X\cong \hat{X}$ sampled from $\mathcal{X}_n$ generating two stochastic process given by SDEs in equations~\eqref{eq:sde_invar_1} and \eqref{eq:sde_invar_2}, we have $\|x_i x_i^\top - \tilde{x}_i \tilde{x}_i^\top \|_2  \leq 2\sigma$ holds with high probability. Then,
\begin{enumerate}
    \item[(i)] For $\alpha \in (1,2)$, the algorithm is not stable when $p \in [\alpha,2]$ i.e. $\varepsilon_{\text{stab}(\mathcal{A}_{\text{cont}})}$ diverges. When $\alpha = p =2$ then with high probability $ \varepsilon_{\text{stab}} (\mathcal{A}_{\text{cont}}) \leq \frac{2R^2}{\pi}  \frac{\sigma}{n   \sigma_{\min}^2}$.
    \item[(ii)] For $p\in [1,\alpha)$, we have the following upper bound for the algorithmic stability,
    \begin{align*}
         \varepsilon_{\text{stab}} (\mathcal{A}_{\text{cont}})&=  \frac{8R^p}{\pi} \frac{\sigma}{n\alpha^2 \sigma_{\min} } \Gamma(p+1) \cos\left( \frac{(p-1)\pi}{2}\right)  \left(\frac{1}{\alpha \sigma_{\min}}\right)^{\frac{p}{\alpha}} \Gamma\left(1-\frac{p}{\alpha}\right)= c(\alpha),
    \end{align*}
    which holds with high probability. Furthermore, for some $\alpha_0 > 1$, if we have $$\sigma_{\min} \geq \exp\left( 1 + \frac{4}{p} - \log \alpha_0 -  \phi\left( 1 - \frac{p}{\alpha_0}\right) \right),$$
    where $\phi$ is the digamma function, then the map $\alpha\rightarrow c(\alpha)$ is increasing for $\alpha \in [\alpha_0,2)$.
    \item[iii] The stability bound is tight in $\alpha$.
    \end{enumerate}
\end{theorem}
The conclusions of Theorem~\ref{thm:dd_main} are almost identical to the ones of Theorem~\ref{thm:1d_main}, though its proof requires a more careful analysis, especially for the lower-bound in (iii). The main differences here are that, we need the smallest eigenvalue of the covariance matrix of $P_X$, i.e., $\sigma_{\min}$ to be large enough, and we need a different condition on the second moment $\sigma$ of the distribution. Under these conditions, we obtain very similar stability and monotonicity properties. 

As a final remark, we note that our results do not require any non-trivial topological or statistical assumptions in comparison with \citet{simsekli2020hausdorff} and \citet{barsbey2021heavy} that suggested a globally monotonic relation for the generalization error and the tail exponent $\alpha$. On the other hand, the rate $1/n$ in our bounds are in line with the existing stability literature \citep{hardt2016train,maurer2005algorithmic}.

\paragraph{Finite time bound.} The result presented in Theorem~\ref{thm:dd_main} is for the case when $t\rightarrow \infty$ i.e. $\theta$ is sampled from the stationary distribution of the stochastic process corresponding to the SDE in equation~\eqref{eqn:htou}. However, in the Appendix~\ref{sec:finite:time:continuous}, we characterize the finite time distribution of a Lévy-driven OU process. We show that the characteristic function of the probability density corresponding to the SDE in equation~\eqref{eqn:htou} is given as,
\begin{align*}
    \psi_{\theta}(t,u) &= \exp\left( - \int_{0}^{t} \left\|\Sigma^\top e^{-s \frac{1}{n}({X}^\top {X})}u \right\|_2^\alpha \rmd s\right).
\end{align*}
If we observe carefully, we can follow the similar procedure to get the stability bound for finite time case as we did to obtain for $t\rightarrow \infty$. In particular, in Remark~\ref{rem:fintie_OU} in the appendix, we show that whenever $t = O(\frac{1}{\alpha \sigma_{\min}})$, the same monotonicity conclusions of Theorem~\ref{thm:dd_main} still hold. See Remark~\ref{rem:fintie_OU} for more details. 

\paragraph{Algorithmic stability for the Euler discretization.} Previously, we have provided results for the continuous-time case which can not be implemented in practice. Now, we derive a stability bound for the Euler discretization of the SDE \eqref{eqn:htou}. We consider the following scheme:
\begin{align}
    \theta_{k+1} &= \theta_{k}- \frac{\eta}{n} \left(X^\top X \theta_{k} - X^\top y \right) +\eta^{1/\alpha} \Sigma S_{k+1}, \label{eq:sde_invar_1:discrete_main}  \\
    \hat{\theta}_{k+1} &=\hat{\theta}_{k} -\frac{\eta}{n} \left(\hat{X}^\top \hat{X} \hat{\theta}_{k} - \hat{X}^\top \hat{y} \right) + \eta^{1/\alpha}\Sigma S_{k+1}. \label{eq:sde_invar_2:discrete_main}
\end{align}
To provide algorithmic stability guarantees for the discretization, 
we first identify the characteristic function of the stationary distribution of the discretization in Appendix~\ref{discrete:least:square} (Lemma~\ref{lem:discrete:time}). We then provide a stability bound based on these characteristic functions. We only present the result for $k\rightarrow \infty$ here, however, the result for any finite $k$ follows the same procedure as we have provided stability bound for characteristic function for any finite $k$ in Lemma~\ref{lem:discretized_SDE_char}. 
\begin{theorem}\label{thm:discrete_time_bound}
Consider $f(x) =|\theta^\top x|^p$ such that $\theta, x \in \mathbb{R}^d$. Assume that for almost all $x\sim P_X$, we have $\|x\|_2 \leq R $, for any $X$ sampled uniformly at random from the set $\mathcal{X}_n$, it holds that  $\frac{1}{n} \| X^\top X u\|_2 \geq  \sigma_{\min} \|u\|_2$ for all $u\in \mathbb{R}^d$ and for any two $X\cong \hat{X}$ sampled from $\mathcal{X}_n$ generating the stochastic process given in \eqref{eq:sde_invar_1:discrete_main} and \eqref{eq:sde_invar_2:discrete_main}, we have that  $\frac{1}{n}\hat{X}^\top\hat{X}$, $\|x_i x_i^\top - \tilde{x}_i \tilde{x}_i^\top \|_2  \leq 2\sigma$ holds  with high probability. Further assume that $\eta \leq \frac{1}{L}$ where $L$ is the maximum of largest eigenvalues of $\frac{1}{n}X^\top X$. Then, for $p\in [1,\alpha)$, we have 
\begin{align*}
    \varepsilon_{\text{stab}} \leq \frac{2R^p}{\pi} \Gamma(p+1) \cos\left( \frac{(p-1)\pi}{2}\right) \frac{\sigma \eta^{1+\frac{p}{\alpha}} (1-\eta \sigma_{\min})^{\alpha-1}}{n\alpha(1-(1-\eta\sigma_{\min})^{\alpha})^{1+\frac{p}{\alpha}}}\Gamma\left(1-\frac{p}{\alpha}\right)
\end{align*}
with high probability.
\end{theorem}
This theorem shows that the monotonicity behavior of algorithmic stability with respect to $\alpha$ can be more complicated. However, if the step-size $\eta$ is chosen small enough such that we can consider the approximation $(1 - \eta \sigma_{\min})^{\alpha} \approx 1- \eta \alpha \sigma_{\min}$ then the result from above Theorem~\ref{thm:discrete_time_bound} can be written as,
$\varepsilon_{\text{stab}} \leq \frac{2R^p}{\pi} \Gamma(p+1) \cos\left( \frac{(p-1)\pi}{2}\right) \frac{\sigma }{n\alpha^2\sigma_{\min}} \left(\frac{1}{\alpha \sigma_{\min}}\right)^{\frac{p}{\alpha}}\Gamma\left(1-\frac{p}{\alpha}\right)$,
with high probability. This expression is almost the same as the bound given in Theorem~\ref{thm:dd_main}. Hence, we get a similar behavior of $\varepsilon_{\text{stab}}$ with respect to $\alpha$ for the discretized SDEs when $\eta$ is small enough. Finally, this result is independent of $\eta$, which is perhaps not surprising as similar results also exist for SGD with strong convex losses \cite[Theorem 3.9]{hardt2016train}.

\begin{figure}[t]
    \centering
    \includegraphics[width=0.99\textwidth]{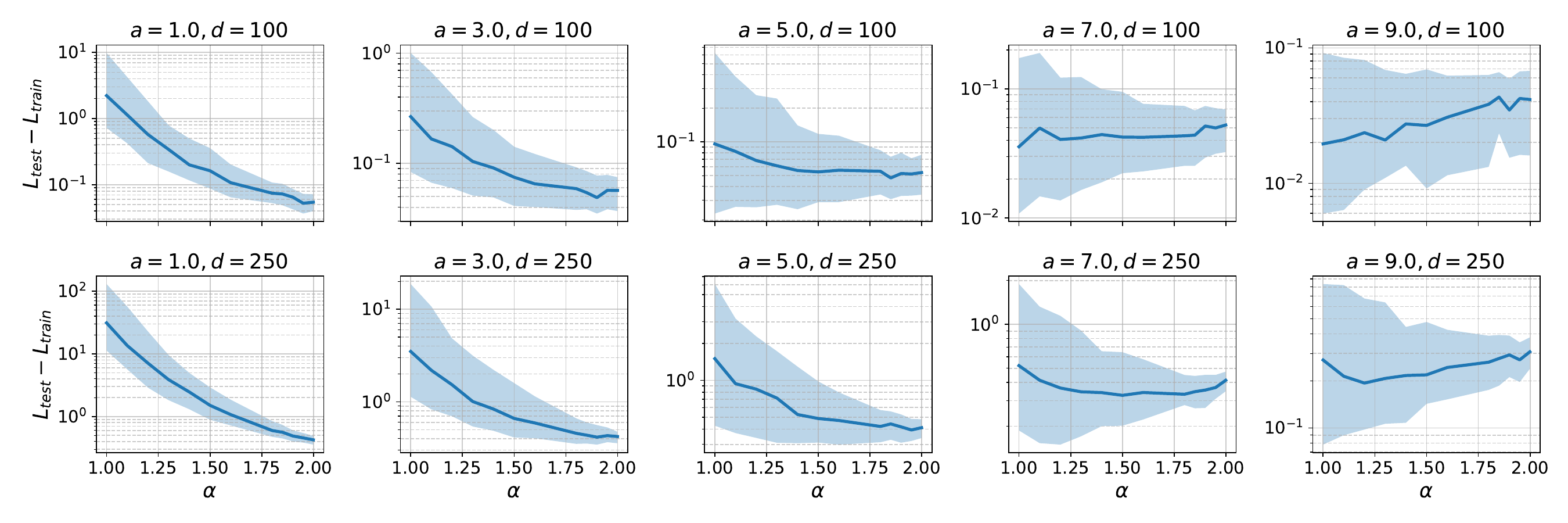}
    \vspace{-15pt}
    \caption{Results of the synthetic data experiments with varying $a$, $\alpha$, and $d$. Each experiment was repeated $200$ times with $n=1000$. The lines correspond to the median, and the shaded areas are the interquartile ranges.}
    \label{fig:synth_data}
\end{figure}

\vspace{-3pt}
\section{Experiments} \label{sec:expt}
\vspace{-3pt}
\textbf{Synthetic data. }
We first test the implications of the theoretical findings presented above with synthetic data experiments. We assume that $y=0$ and $P_X$ is a scaled uniform distribution $\mathcal{U}(-0.5a, 0.5a)$, with $a$ determining the range of the distribution. 
We simulate the SDE presented in \eqref{eq:sde_invar_1} by using the Euler-Maruyama discretization, which yields the following recursion:
\begin{align}
    \theta_{k+1} &= \theta_k - \eta \frac{1}{n} \left(X^\top X\right) \theta_k - \eta^{1/\alpha} E_{k+1}, \label{eq:gradient_descent} 
\end{align}
where $\eta$ is the learning-rate and each $E_{k} \in \mathbb{R}^d$ is a rotationally symmetric $\alpha$-stable random vector. 

In the experiments, we systematically vary $a$ as well as the tail-index of the additive noise, $\alpha$. For all experiments we set $\eta = 0.1$ and ran the algorithm for $3000$ iterations. We set $n=1000$ and varied $d$ to be $100$ or $250$. The order of the loss function $f$ was selected to be $p = 1$. For each experimental setting, we repeated the experiment $200$ times, where after sampling a population $N=100000$ observations, for each replication we sampled $n = 1000$ with replacement from within this population. The generalization error was computed to be the difference between loss computed on the replication sample of size $n$ and the population of size $N$. To prevent numerical issues, the noise was scaled with a constant of $0.1$ in all experiments, which corresponds to choosing $\Sigma = (1/10) I$.

The results are presented in Figure~\ref{fig:synth_data}, and corroborate the trend predicted by Theorem~\ref{thm:dd_main}. As $a$ grows, the variance of the input increases, leading the map $\alpha \mapsto c(\alpha)$ to become increasing for $\alpha \in [\alpha_0, 2)$ for some $\alpha_0$. Since $c(\alpha)$ is the upper bound for stability $\varepsilon_{\text{stab}}$, this leads to the observed `V-shaped' trend in generalization error for higher $a$ values, where the inflection point corresponds to $\alpha_0$ for a given experiment setting.\footnote{We note that the rather large error bars in Figure~\ref{fig:synth_data} are caused by the randomness coming from the heavy-tails (i.e., not by the randomness due to the choice of datasets). As we are essentially trying to compute the expectation of a heavy-tailed random variable by using a finite number of samples, these errors bars are not surprising as the task is notoriously difficult \citep{lugosi2019mean}.}


\textbf{Experiments on image data.} 
In our second set of experiments, we consider a real image classification task, where we use plain SGD \eqref{eqn:sgd} \emph{without} adding explicit heavy-tailed noise and monitor the effect of the heavy-tails that are \emph{inherently} introduced by SGD, as shown in \citet{gurbuzbalaban2020heavy,hodgkinson2021multiplicative}. In this context, we will view the SDE \eqref{eqn:htou} as a proxy to the original SGD recursion near a local minimum, so that a quadratic approximation would be pertinent.

Here, we train two fully connected neural networks (FCN) of different depths (4 vs.\ 6) as well as a convolutional neural network (CNN) on the \textsc{MNIST}, \textsc{CIFAR-10}, and \textsc{CIFAR-100} datasets \citep{mnist2010, cifar102009}. We train these models under different, constant learning rates ($\eta$) and with batch sizes ($b$) of $50$ or $100$, producing models trained under a wide range of $\eta/b$ values. The models are trained until convergence, where the convergence criteria for \textsc{MNIST} and \textsc{CIFAR-10} is a training negative log-likelihood (NLL) of $<5 \times 10^{-5}$ and a training accuracy of $100\%$, and for \textsc{CIFAR-100} these are a NLL of $<1 \times 10^{-2}$ and a training accuracy of $>99\%$.

For the estimation of the trained networks' tail indices, we used the multivariate estimator proposed in \cite[Corollary 2.4]{mohammadi2015estimating}\footnote{We note that this estimator has been shown to be consistent; yet, we do not have an non-asymptotic understanding of the esimates \citep{mohammadi2015estimating}.}, which is previously used in various related neural network research \citep{simsekli2020hausdorff,gurbuzbalaban2020heavy,zhou2020towards,barsbey2021heavy}. Since this estimator assumes a stable distribution, after convergence we obtained $1000$ iterations of SGD and computed the average to be used in this estimation, based on the generalized central limit theorem \cite[Corollary 11]{gurbuzbalaban2020heavy}, which demonstrated that the ergodic averages of heavy-tailed SGD iterates converge to a multivariate stable distribution. Before estimating the parameters, we centered the parameters with median values. Each layer's tail-index estimation was conducted separately, which were in turn averaged to produce a single tail-index for every model, as in \citet{barsbey2021heavy}. See the Appendix for further details.
\begin{wrapfigure}{r}{0.60\textwidth}
    \vspace{10pt}
    \includegraphics[width=0.60\textwidth]{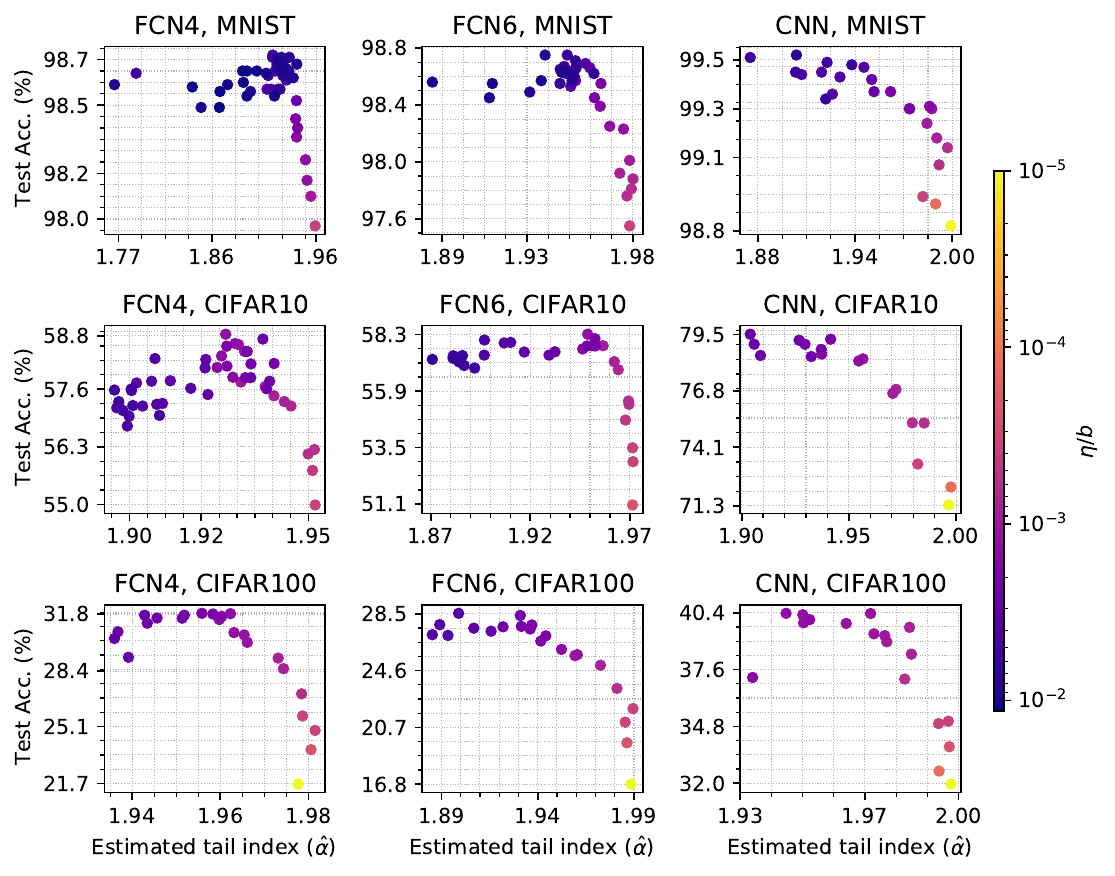}
    \vspace{-30pt}
    \caption{Test accuracy vs. mean estimated tail-index ($\hat{\alpha}$) for each model. Color: training $\eta/b$ ratio.}
    \vspace{-10pt}
    \label{fig:mnist_cifar10_cifar100}
  \end{wrapfigure}

Previous literature demonstrated that (i) training neural networks with larger $\eta/b$ values lead to heavy-tailed parameters \citep{gurbuzbalaban2020heavy} and (ii) networks with heavier-tailed parameters are more likely to generalize \citet{simsekli2020hausdorff, barsbey2021heavy}. Here, Figure~\ref{fig:mnist_cifar10_cifar100} demonstrates that networks with highest $\alpha$ (light-tails) consistently perform worst in terms of generalization and the performance improves as the $\alpha$ decreases until some \emph{threshold}. This outcome is in line with the predictions of our theoretical results, which suggest a `V-shaped' behavior for the relation between generalization and $\alpha$, as opposed to \citet{simsekli2020hausdorff, barsbey2021heavy}. As a final remark, here the values of $\alpha$ are larger compared with the synthetic experiments; however, we shall emphasize that such values for $\alpha$ still indicate strong heavy tails.

\section{Conclusion}

We established novel links between the tail behavior and generalization properties of SGD building on the notion of algorithmic stability. We focused on quadratic optimization and considered a heavy-tailed SDE previously proposed as a proxy to SGD dynamics. We then proved uniform stability bounds which uncover several phenomena about the effect of the heaviness of the tails on the generalization. We also established lower bounds which show that our stability bounds are tight in terms of the heaviness of the tails. We then extended our results to the finite-time, and to the discrete-time cases and showed that similar results hold. We finally supported our theory on a variety of experiments. Future work includes extending our work to explore the relation between distributional robustness and heavy-tails \cite{das2021heavy}.


\section*{Acknowledgment}
{A.R is supported by the a Marie Sklodowska-Curie
Fellowship (project NN-OVEROPT 101030817). M.G.'s research is supported in part by the grants Office of Naval Research Award Number N00014-21-1-2244, National Science Foundation (NSF) CCF-1814888 and NSF DMS-2053485. U.\c{S}.'s research is supported by the French government under management of Agence Nationale de la Recherche as part of the ``Investissements d'avenir'' program, reference ANR-19-P3IA-0001 (PRAIRIE 3IA Institute) and the European Research Council Starting Grant DYNASTY – 101039676.
L.Z. is grateful to the support from a Simons Foundation Collaboration Grant and the grants NSF DMS-2053454, NSF DMS-2208303 from the National Science Foundation.}

\bibliographystyle{plainnat}
\bibliography{heavy,levy,opt-ml}

\begin{thebibliography}{54}
\providecommand{\natexlab}[1]{#1}
\providecommand{\url}[1]{\texttt{#1}}
\expandafter\ifx\csname urlstyle\endcsname\relax
  \providecommand{\doi}[1]{doi: #1}\else
  \providecommand{\doi}{doi: \begingroup \urlstyle{rm}\Url}\fi

\bibitem[Ali et~al.(2020)Ali, Dobriban, and Tibshirani]{ali2020implicit}
Alnur Ali, Edgar Dobriban, and Ryan Tibshirani.
\newblock The implicit regularization of stochastic gradient flow for least
  squares.
\newblock In \emph{International Conference on Machine Learning}, pages
  233--244. PMLR, 2020.

\bibitem[Applebaum(2009)]{Applebaum}
David Applebaum.
\newblock \emph{L\'{e}vy Processes and Stochastic Calculus}.
\newblock Cambridge University Press, Cambridge, UK, second edition, 2009.

\bibitem[Barsbey et~al.(2021)Barsbey, Sefidgaran, Erdogdu, Richard, and
  \c{S}im\c{s}ekli]{barsbey2021heavy}
Melih Barsbey, Milad Sefidgaran, Murat~A Erdogdu, Ga{\"e}l Richard, and Umut
  \c{S}im\c{s}ekli.
\newblock Heavy {{Tails}} in {{SGD}} and {{Compressibility}} of
  {{Overparametrized Neural Networks}}.
\newblock In \emph{Advances in {{Neural Information Processing Systems}}},
  volume~34, pages 29364--29378. {Curran Associates, Inc.}, 2021.

\bibitem[Bassily et~al.(2020)Bassily, Feldman, Guzm{\'a}n, and
  Talwar]{bassily2020stability}
Raef Bassily, Vitaly Feldman, Crist{\'o}bal Guzm{\'a}n, and Kunal Talwar.
\newblock Stability of stochastic gradient descent on nonsmooth convex losses.
\newblock In \emph{Advances in Neural Information Processing Systems},
  volume~33, pages 4381--4391, 2020.

\bibitem[Belkin et~al.(2004)Belkin, Matveeva, and
  Niyogi]{belkin2004regularization}
Mikhail Belkin, Irina Matveeva, and Partha Niyogi.
\newblock Regularization and semi-supervised learning on large graphs.
\newblock In \emph{International Conference on Computational Learning Theory},
  pages 624--638. Springer, 2004.

\bibitem[Bertoin(1996)]{bertoin1996}
Jean Bertoin.
\newblock \emph{L\'{e}vy Processes}.
\newblock Cambridge University Press, Cambridge, UK, 1996.

\bibitem[Bousquet and Elisseeff(2002)]{bousquet2002stability}
Olivier Bousquet and Andr{\'e} Elisseeff.
\newblock Stability and generalization.
\newblock \emph{Journal of Machine Learning Research}, 2\penalty0
  (Mar):\penalty0 499--526, 2002.

\bibitem[Bousquet et~al.(2020)Bousquet, Klochkov, and
  Zhivotovskiy]{bousquet2020sharper}
Olivier Bousquet, Yegor Klochkov, and Nikita Zhivotovskiy.
\newblock Sharper bounds for uniformly stable algorithms.
\newblock In \emph{Conference on Learning Theory}, pages 610--626. PMLR, 2020.

\bibitem[Charles and Papailiopoulos(2018)]{charles2018stability}
Zachary Charles and Dimitris Papailiopoulos.
\newblock Stability and generalization of learning algorithms that converge to
  global optima.
\newblock In \emph{International Conference on Machine Learning}, pages
  745--754. PMLR, 2018.

\bibitem[Chen et~al.(2018)Chen, Jin, and Yu]{chen2018stability}
Yuansi Chen, Chi Jin, and Bin Yu.
\newblock Stability and convergence trade-off of iterative optimization
  algorithms.
\newblock \emph{arXiv preprint arXiv:1804.01619}, 2018.

\bibitem[Cont and Tankov(2004)]{Cont2004}
Rama Cont and Peter Tankov.
\newblock \emph{Financial Modelling with Jump Processes}.
\newblock Chapman and Hall/CRC, 2004.

\bibitem[Cortes et~al.(2012)Cortes, Mohri, and
  Rostamizadeh]{cortes2012algorithms}
Corinna Cortes, Mehryar Mohri, and Afshin Rostamizadeh.
\newblock Algorithms for learning kernels based on centered alignment.
\newblock \emph{Journal of Machine Learning Research}, 13:\penalty0 795--828,
  2012.

\bibitem[Elisseeff et~al.(2005)Elisseeff, Evgeniou, Pontil, and
  Kaelbing]{elisseeff2005stability}
Andre Elisseeff, Theodoros Evgeniou, Massimiliano Pontil, and Leslie~Pack
  Kaelbing.
\newblock Stability of randomized learning algorithms.
\newblock \emph{Journal of Machine Learning Research}, 6\penalty0 (3):\penalty0
  55--79, 2005.

\bibitem[Farghly and Rebeschini(2021)]{farghly2021time}
Tyler Farghly and Patrick Rebeschini.
\newblock Time-independent generalization bounds for {SGLD} in non-convex
  settings.
\newblock In \emph{Advances in Neural Information Processing Systems},
  volume~34, 2021.

\bibitem[Feldman and Vondrak(2019)]{feldman2019high}
Vitaly Feldman and Jan Vondrak.
\newblock High probability generalization bounds for uniformly stable
  algorithms with nearly optimal rate.
\newblock In \emph{Conference on Learning Theory}, pages 1270--1279. PMLR,
  2019.

\bibitem[Flatto(2019)]{flatto2019dixie}
Leopold Flatto.
\newblock The dixie cup problem and {FKG} inequality.
\newblock \emph{High Frequency}, 2\penalty0 (3-4):\penalty0 169--174, 2019.

\bibitem[Gelfand and Shilov(1969)]{gelfand1969generalized}
Izrail~Moiseevic Gelfand and Georgij~Evgenevic Shilov.
\newblock \emph{Generalized Functions. Vol. 1, Properties and Operations}.
\newblock Academic Press, 1969.

\bibitem[G\"{u}rb\"{u}zbalaban et~al.(2021)G\"{u}rb\"{u}zbalaban, {\c S}im{\c
  s}ekli, and Zhu]{gurbuzbalaban2020heavy}
Mert G\"{u}rb\"{u}zbalaban, Umut {\c S}im{\c s}ekli, and Lingjiong Zhu.
\newblock The heavy-tail phenomenon in {SGD}.
\newblock In \emph{International Conference on Machine Learning}, pages
  3964--3975. PMLR, 2021.

\bibitem[Hardt et~al.(2016)Hardt, Recht, and Singer]{hardt2016train}
Moritz Hardt, Ben Recht, and Yoram Singer.
\newblock Train faster, generalize better: Stability of stochastic gradient
  descent.
\newblock In \emph{International Conference on Machine Learning}, pages
  1225--1234. PMLR, 2016.

\bibitem[He and Tao(2020)]{he2020recent}
Fengxiang He and Dacheng Tao.
\newblock Recent advances in deep learning theory.
\newblock \emph{arXiv preprint arXiv:2012.10931}, 2020.

\bibitem[Hodgkinson and Mahoney(2021)]{hodgkinson2021multiplicative}
Liam Hodgkinson and Michael Mahoney.
\newblock Multiplicative noise and heavy tails in stochastic optimization.
\newblock In \emph{International Conference on Machine Learning}, pages
  4262--4274. PMLR, 2021.

\bibitem[Hodgkinson et~al.(2022)Hodgkinson, Simsekli, Khanna, and
  Mahoney]{hodgkinson2021generalization}
Liam Hodgkinson, Umut Simsekli, Rajiv Khanna, and Michael Mahoney.
\newblock Generalization bounds using lower tail exponents in stochastic
  optimizers.
\newblock In \emph{International Conference on Machine Learning}, pages
  8774--8795. PMLR, 2022.

\bibitem[Klochkov and Zhivotovskiy(2021)]{klochkov2021stability}
Yegor Klochkov and Nikita Zhivotovskiy.
\newblock Stability and deviation optimal risk bounds with convergence rate $ o
  (1/n) $.
\newblock In \emph{Advances in Neural Information Processing Systems},
  volume~34, 2021.

\bibitem[Krizhevsky(2009)]{cifar102009}
Alex Krizhevsky.
\newblock Learning multiple layers of features from tiny images.
\newblock Technical report, University of Toronto, 2009.

\bibitem[Kuzborskij and Lampert(2018)]{kuzborskij2018data}
Ilja Kuzborskij and Christoph Lampert.
\newblock Data-dependent stability of stochastic gradient descent.
\newblock In \emph{International Conference on Machine Learning}, pages
  2815--2824. PMLR, 2018.

\bibitem[Latorre et~al.(2021)Latorre, Dadi, Rolland, and
  Cevher]{latorre2021effect}
Fabian Latorre, Leello~Tadesse Dadi, Paul Rolland, and Volkan Cevher.
\newblock The effect of the intrinsic dimension on the generalization of
  quadratic classifiers.
\newblock In \emph{Advances in Neural Information Processing Systems},
  volume~34, 2021.

\bibitem[LeCun et~al.(2010)LeCun, Cortes, and Burges]{mnist2010}
Yann LeCun, Corinna Cortes, and CJ~Burges.
\newblock {MNIST} handwritten digit database.
\newblock \emph{ATT Labs [Online]. Available:
  http://yann.lecun.com/exdb/mnist}, 2, 2010.

\bibitem[Lei and Ying(2020)]{lei2020fine}
Yunwen Lei and Yiming Ying.
\newblock Fine-grained analysis of stability and generalization for stochastic
  gradient descent.
\newblock In \emph{International Conference on Machine Learning}, pages
  5809--5819. PMLR, 2020.

\bibitem[L{\'e}vy(1937)]{paul1937theorie}
Paul L{\'e}vy.
\newblock Th{\'e}orie de l'addition des variables al{\'e}atoires.
\newblock \emph{Gauthiers-Villars, Paris}, 1937.

\bibitem[Liu and Theodorou(2019)]{liu2019deepsurvey}
Guan-Horng Liu and Evangelos~A Theodorou.
\newblock Deep learning theory review: An optimal control and dynamical systems
  perspective.
\newblock \emph{arXiv preprint arXiv:1908.10920}, 2019.

\bibitem[Lugosi and Mendelson(2019)]{lugosi2019mean}
G{\'a}bor Lugosi and Shahar Mendelson.
\newblock Mean estimation and regression under heavy-tailed distributions: A
  survey.
\newblock \emph{Foundations of Computational Mathematics}, 19\penalty0
  (5):\penalty0 1145--1190, 2019.

\bibitem[Martin and Mahoney(2019)]{martin2019traditional}
Charles~H Martin and Michael~W Mahoney.
\newblock Traditional and heavy tailed self regularization in neural network
  models.
\newblock In \emph{International Conference on Machine Learning}, pages
  4284--4293. PMLR, 2019.

\bibitem[Martins(1994)]{Martins1994}
Emilia~P. Martins.
\newblock Estimating the rate of phenotypic evolution from comparative data.
\newblock \emph{The American Naturalist}, 144\penalty0 (2):\penalty0 193--209,
  1994.

\bibitem[Masuda(2004)]{Masuda2004}
Hiroki Masuda.
\newblock On multidimensional {O}rnstein-{U}hlenbeck processes driven by a
  general {L}\'{e}vy process.
\newblock \emph{Bernoulli}, 10\penalty0 (1):\penalty0 97--120, 2004.

\bibitem[Maurer and Jaakkola(2005)]{maurer2005algorithmic}
Andreas Maurer and Tommi Jaakkola.
\newblock Algorithmic stability and meta-learning.
\newblock \emph{Journal of Machine Learning Research}, 6\penalty0 (6):\penalty0
  967--994, 2005.

\bibitem[Mohammadi et~al.(2015)Mohammadi, Mohammadpour, and
  Ogata]{mohammadi2015estimating}
Mohammad Mohammadi, Adel Mohammadpour, and Hiroaki Ogata.
\newblock On estimating the tail index and the spectral measure of multivariate
  $\alpha$-stable distributions.
\newblock \emph{Metrika}, 78\penalty0 (5):\penalty0 549--561, 2015.

\bibitem[Mou et~al.(2018)Mou, Wang, Zhai, and Zheng]{mou2018generalization}
Wenlong Mou, Liwei Wang, Xiyu Zhai, and Kai Zheng.
\newblock Generalization bounds of {SGLD} for non-convex learning: Two
  theoretical viewpoints.
\newblock In \emph{Conference on Learning Theory}, pages 605--638. PMLR, 2018.

\bibitem[Paszke et~al.(2019)Paszke, Gross, Massa, Lerer, Bradbury, Chanan,
  Killeen, Lin, Gimelshein, Antiga, Desmaison, Kopf, Yang, DeVito, Raison,
  Tejani, Chilamkurthy, Steiner, Fang, Bai, and Chintala]{pytorch2019}
Adam Paszke, Sam Gross, Francisco Massa, Adam Lerer, James Bradbury, Gregory
  Chanan, Trevor Killeen, Zeming Lin, Natalia Gimelshein, Luca Antiga, Alban
  Desmaison, Andreas Kopf, Edward Yang, Zachary DeVito, Martin Raison, Alykhan
  Tejani, Sasank Chilamkurthy, Benoit Steiner, Lu~Fang, Junjie Bai, and Soumith
  Chintala.
\newblock Pytorch: An imperative style, high-performance deep learning library.
\newblock In H.~Wallach, H.~Larochelle, A.~Beygelzimer, F.~d\textquotesingle
  Alch\'{e}-Buc, E.~Fox, and R.~Garnett, editors, \emph{Advances in Neural
  Information Processing Systems 32}, pages 8024--8035. Curran Associates,
  Inc., 2019.

\bibitem[Pavliotis(2014)]{pavliotis2014stochastic}
Grigorios~A Pavliotis.
\newblock \emph{Stochastic Processes and Applications: Diffusion Processes, the
  {F}okker-{P}lanck and {L}angevin Equations}, volume~60.
\newblock Springer, New York, 2014.

\bibitem[Raginsky et~al.(2017)Raginsky, Rakhlin, and
  Telgarsky]{raginsky2017non}
Maxim Raginsky, Alexander Rakhlin, and Matus Telgarsky.
\newblock Non-convex learning via stochastic gradient {L}angevin dynamics: A
  nonasymptotic analysis.
\newblock In \emph{Conference on Learning Theory}, pages 1674--1703. PMLR,
  2017.

\bibitem[Samorodnitsky and Taqqu(1994)]{ST1994}
Gennady Samorodnitsky and Murad~S. Taqqu.
\newblock \emph{Stable Non-Gaussian Random Processes: Stochastic Models with
  Infinite Variance}.
\newblock Chapman \& Hall, New York, 1994.

\bibitem[Sathe and Upadhye(2022)]{satheEstimationParameters2020}
Aastha~M. Sathe and N.~S. Upadhye.
\newblock Estimation of the parameters of multivariate stable distributions.
\newblock \emph{Communications in Statistics - Simulation and Computation},
  51\penalty0 (10):\penalty0 5897--5914, 2022.

\bibitem[Sato and Yamazato(1984)]{Sato1984}
Ken-iti Sato and Makoto Yamazato.
\newblock Operator-self-decomposable distributions as limit distributions of
  processes of {O}rnstein-{U}hlenbeck type.
\newblock \emph{Stochastic Processes and their Applications}, 17:\penalty0
  73--100, 1984.

\bibitem[Simonyan and Zisserman(2015)]{simonyanVeryDeepConvolutional2015}
Karen Simonyan and Andrew Zisserman.
\newblock Very {Deep} {Convolutional} {Networks} for {Large}-{Scale} {Image}
  {Recognition}.
\newblock \emph{arXiv:1409.1556 [cs]}, April 2015.

\bibitem[{\c{S}}im{\c{s}}ekli et~al.(2019){\c{S}}im{\c{s}}ekli,
  G{\"u}rb{\"u}zbalaban, Nguyen, Richard, and Sagun]{csimcsekli2019heavy}
Umut {\c{S}}im{\c{s}}ekli, Mert G{\"u}rb{\"u}zbalaban, Thanh~Huy Nguyen,
  Ga{\"e}l Richard, and Levent Sagun.
\newblock On the heavy-tailed theory of stochastic gradient descent for deep
  neural networks.
\newblock \emph{arXiv preprint arXiv:1912.00018}, 2019.

\bibitem[{\c S}im{\c s}ekli et~al.(2020){\c S}im{\c s}ekli, Sener,
  Deligiannidis, and Erdogdu]{simsekli2020hausdorff}
Umut {\c S}im{\c s}ekli, Ozan Sener, George Deligiannidis, and Murat~A Erdogdu.
\newblock Hausdorff dimension, heavy tails, and generalization in neural
  networks.
\newblock In H.~Larochelle, M.~Ranzato, R.~Hadsell, M.~F. Balcan, and H.~Lin,
  editors, \emph{Advances in Neural Information Processing Systems}, volume~33,
  pages 5138--5151. Curran Associates, Inc., 2020.

\bibitem[Tzagkarakis et~al.(2018)Tzagkarakis, Nolan, and
  Tsakalides]{tzagkarakis2018compressive}
George Tzagkarakis, John~P Nolan, and Panagiotis Tsakalides.
\newblock Compressive sensing of temporally correlated sources using isotropic
  multivariate stable laws.
\newblock In \emph{2018 26th European Signal Processing Conference (EUSIPCO)},
  pages 1710--1714. IEEE, 2018.

\bibitem[Uhlenbeck and Ornstein(1930)]{OUpaper}
George~Eugene Uhlenbeck and Leonard~S. Ornstein.
\newblock On the theory of {B}rownian motion.
\newblock \emph{Physical Review}, 36\penalty0 (5):\penalty0 823--841, 1930.

\bibitem[Vasicek(1977)]{Vasicek1977}
Oldrich Vasicek.
\newblock An equilibrium characterization of the term structure.
\newblock \emph{Journal of Financial Economics}, 5\penalty0 (2):\penalty0
  177--188, 1977.

\bibitem[Wang et~al.(2021)Wang, G\"{u}rb\"{u}zbalaban, Zhu, \c{S}im\c{s}ekli,
  and Erdogdu]{wang2021convergence}
Hongjian Wang, Mert G\"{u}rb\"{u}zbalaban, Lingjiong Zhu, Umut
  \c{S}im\c{s}ekli, and Murat~A Erdogdu.
\newblock Convergence rates of stochastic gradient descent under infinite noise
  variance.
\newblock In \emph{Advances in Neural Information Processing Systems},
  volume~34, 2021.

\bibitem[Wu and Cheng(2021)]{cheng2021algorithmic}
Xinxing Wu and Qiang Cheng.
\newblock Algorithmic stability and generalization of an unsupervised feature
  selection algorithm.
\newblock In \emph{Advances in Neural Information Processing Systems},
  volume~34, 2021.

\bibitem[Xie and Zhang(2020)]{xie2020ergodicity}
Longjie Xie and Xicheng Zhang.
\newblock Ergodicity of stochastic differential equations with jumps and
  singular coefficients.
\newblock \emph{Annales de l'Institut Henri Poincar{\'e}, Probabilit{\'e}s et
  Statistiques}, 56\penalty0 (1):\penalty0 175--229, 2020.

\bibitem[Zhang et~al.(2020)Zhang, Karimireddy, Veit, Kim, Reddi, Kumar, and
  Sra]{zhang2020adam}
Jingzhao Zhang, Sai~Praneeth Karimireddy, Andreas Veit, Seungyeon Kim, Sashank
  Reddi, Sanjiv Kumar, and Suvrit Sra.
\newblock Why are adaptive methods good for attention models?
\newblock In \emph{Advances in Neural Information Processing Systems},
  volume~33, pages 15383--15393, 2020.

\bibitem[Zhou et~al.(2020)Zhou, Feng, Ma, Xiong, Hoi, and E]{zhou2020towards}
Pan Zhou, Jiashi Feng, Chao Ma, Caiming Xiong, Steven Chu~Hong Hoi, and Weinan
  E.
\newblock Towards theoretically understanding why {SGD} generalizes better than
  {ADAM} in deep learning.
\newblock In H.~Larochelle, M.~Ranzato, R.~Hadsell, M.~F. Balcan, and H.~Lin,
  editors, \emph{Advances in Neural Information Processing Systems}, volume~33,
  pages 21285--21296. Curran Associates, Inc., 2020.

\end{thebibliography}
\normalsize
\newpage
\onecolumn
\appendix
\begin{center}
\Large \bf Algorithmic Stability of Heavy-Tailed Stochastic Gradient Descent on Least Squares \vspace{3pt}\\ {\normalsize Appendix}
\end{center}
The Appendix is organized as follows:
\begin{itemize}
    \item In Section~\ref{sec:tech:background:OU}, we provide the background details about characterizing the stationary distribution of a L\'{e}vy-driven OU process. 
    \item  In Section~\ref{sec:finite:time:continuous}, we characterize the finite-time distribution of a L\'{e}vy-driven OU process.
    \item  In Section~\ref{discrete:least:square}, we characterize the distributions of a discrete-time L\'{e}vy-driven OU process. 
    \item In Section~\ref{ap:proof_1d}, we provide the proofs for the $1$-dimensional case. 
    \item In Section~\ref{ap:proof_d_dim}, we prove the results for least square in $d$-dimension.
    \item  In Section~\ref{ap:discretized_SDE}, we provide theory for the discretized SDE. 
    \item In Section~\ref{ap:general_sigma}, we extend the $d$-dimensional result for general preconditioner PSD $\Sigma$.
    \item In Section~\ref{ap:useful_results}, we discuss useful results which we utilize in proving our results for $d$-dimensional case. 
    \item In Section~\ref{ap:further_expts}, we provide further details about our experimental setup. 
\end{itemize}

\section{Characterizing the Stationary Distribution of a L\'{e}vy-Driven OU Process}\label{sec:tech:background:OU}

In this section, we review the technical background
of characterizing the stationary distribution of an Ornstein-Uhlenbeck process
driven by a general L\'{e}vy process.
Consider an Ornstein-Uhlenbeck process
driven by a general L\'{e}vy process
\begin{equation}\label{eqn:general:2}
\rmd \theta_{t}=-A\theta_{t}\rmd t+\rmd \mathrm{Z}_{t},
\end{equation}
where $\mathrm{Z}_{t}$ is a general L\'{e}vy process. 
One particular example is $\mathrm{Z}_{t}=\Sigma \Lm_{t}$
so that
\begin{equation}\label{eqn:2:2}
\rmd \theta_{t}=-A\theta_{t}\rmd t+\Sigma\rmd\Lm_{t}.
\end{equation}
Under some regularity conditions on $A$ and the L\'{e}vy measure of $Z$,
$\theta_{t}$ in \eqref{eqn:general:2} admits a unique invariant distribution $\pi$,
and the class of all possible $\pi$'s forms the class of all $A$-self-decomposable
distributions; see \cite{Masuda2004} and the references therein.

Let $d\in\mathbb{N}$ and $\mathrm{Z}_{t}$ is a $d$-dimensional L\'{e}vy process such that
$Z_{0}=0$ a.s. and $\mathrm{Z}_{t}$ admits the generating triplet $(b,C,\nu)$,
that is, $b\in\mathbb{R}^{d}$, $C$ is a $d\times d$ symmetric non-negative definite
matrix and $\nu$ is a $\sigma$-finite measure on $\mathbb{R}^{d}$ satisfying $\nu(\{0\})=0$
and $\int_{\mathbb{R}^{d}}\min(1,\Vert z\Vert^{2})\nu(dz)<\infty$
for which $\mathrm{Z}_{t}$ has the characteristic function
\begin{equation}
\varphi_{t}(u):=\mathbb{E}\left[e^{\imagi \langle u,\mathrm{Z}_{t}\rangle}\right]
=\exp\left(t\left\{\imagi u^{\top}b-\frac{1}{2}u^{\top}Cu+\int_{\mathbb{R}^{d}}
\left(e^{\imagi u^{\top}z}-1-\imagi u^{\top}z1_{\Vert z\Vert\leq 1}\right)\nu(dz)\right\}\right)
\end{equation}
for any $u\in\mathbb{R}^{d}$ and $t>0$.

\begin{lemma}[Theorems 4.1. and 4.2. in \cite{Sato1984}]\label{lem:general}
Assume that $A$ is a $d\times d$ matrix such that the real parts
of all its eigenvalues are positive. 
Moreover, assume that 
\begin{equation*}
\int_{\Vert z\Vert>1}\log\Vert z\Vert\nu(\rmd z)<\infty.
\end{equation*}
Then, $\theta_{t}$ in \eqref{eqn:general:2}
admits a unique invariant distribution $\pi$ whose characteristic function is given by
\begin{equation}
\int_{\mathbb{R}^{d}}e^{\imagi \langle u,x\rangle}\pi(\rmd x)=\exp\left(\int_{0}^{\infty}\log\varphi_{1}\left(e^{-sA^{\top}}u\right)\rmd s\right),
\end{equation}
for any $u\in\mathbb{R}^{d}$. In particular, the generating triplet of the limiting
distribution is $(b_{\infty},C_{\infty},\nu_{\infty})$, where
\begin{align}
&b_{\infty}=A^{-1}b+\int_{\mathbb{R}^{d}}\int_{0}^{\infty}e^{-sA}z\left(1_{\Vert e^{-sA}z\Vert\leq 1}-1_{\Vert z\Vert\leq 1}\right)\rmd s\nu(\rmd z),\label{b:eqn}
\\
&C_{\infty}=\int_{0}^{\infty}e^{-sA}Ce^{-sA^{\top}}\rmd s,\label{C:eqn}
\\
&\nu_{\infty}(E)=\int_{0}^{\infty}\nu\left(e^{sA}E\right)\rmd s,\qquad\text{for any $E\in\mathcal{B}(\mathbb{R}^{d})$}.\label{nu:eqn}
\end{align}
\end{lemma}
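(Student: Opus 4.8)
The plan is to obtain the invariant law as the weak limit of the explicit solution of the linear SDE \eqref{eqn:general:2}. The variation-of-constants formula gives the unique strong solution started at $\theta_0$ as
\begin{equation*}
\theta_t = e^{-tA}\theta_0 + \int_0^t e^{-(t-s)A}\,\rmd \mathrm{Z}_s .
\end{equation*}
Since every eigenvalue of $A$ has positive real part, $\|e^{-tA}\|\to 0$ exponentially fast, so the transient term contributes nothing in the limit, and the invariant distribution must coincide with the law of the limiting stochastic convolution $\int_0^\infty e^{-sA}\,\rmd\mathrm{Z}_s$, once this object is shown to exist.

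To compute its characteristic function I would exploit the independent, stationary increments of $\mathrm{Z}_t$. For a deterministic (matrix-valued) step function $f$, the integral $\int_0^t f(s)\,\rmd\mathrm{Z}_s$ is a sum of scaled independent increments, and the L\'evy--Khintchine identity $\psi := \log\varphi_1$ factorizes the characteristic function; passing to the limit of Riemann approximations yields, with $f(s)=e^{-(t-s)A}$,
\begin{equation*}
\mathbb{E}\!\left[e^{\imagi u^\top \int_0^t e^{-(t-s)A}\rmd \mathrm{Z}_s}\right] = \exp\!\left(\int_0^t \psi\!\left(e^{-(t-s)A^\top}u\right)\rmd s\right) = \exp\!\left(\int_0^t \psi\!\left(e^{-rA^\top}u\right)\rmd r\right),
\end{equation*}
where the last equality is the deterministic substitution $r=t-s$. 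Letting $t\to\infty$ reproduces the claimed formula $\exp\big(\int_0^\infty \log\varphi_1(e^{-sA^\top}u)\,\rmd s\big)$.

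To extract the generating triplet $(b_\infty, C_\infty, \nu_\infty)$, I would insert the L\'evy--Khintchine form of $\psi$ into $\int_0^\infty \psi(e^{-sA^\top}u)\,\rmd s$ and regroup the three contributions. The linear term becomes $\imagi\, u^\top\big(\int_0^\infty e^{-sA}\rmd s\big)b = \imagi\, u^\top A^{-1}b$; the Gaussian term integrates to $-\tfrac12 u^\top C_\infty u$ with $C_\infty=\int_0^\infty e^{-sA}Ce^{-sA^\top}\rmd s$; and in the jump term the substitution $w=e^{-sA}z$ together with Fubini produces the pushforward measure $\nu_\infty(E)=\int_0^\infty \nu(e^{sA}E)\,\rmd s$. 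The one subtle bookkeeping point is the compensator: the canonical truncation uses $\mathbf{1}_{\|w\|\le 1}$, whereas the transformed integrand still carries $\mathbf{1}_{\|z\|\le 1}$, so the mismatch $\mathbf{1}_{\|e^{-sA}z\|\le 1}-\mathbf{1}_{\|z\|\le 1}$ has to be reassigned to the drift, and this reassignment yields exactly the correction double integral appearing in \eqref{b:eqn}.

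The hard part will be the analytic justifications rather than the algebra. First, one must show that the improper integral $\int_0^\infty e^{-sA}\,\rmd\mathrm{Z}_s$ converges in distribution to a proper random variable; this is precisely where the hypothesis $\int_{\|z\|>1}\log\|z\|\,\nu(\rmd z)<\infty$ is indispensable. Concretely, verifying that $\nu_\infty$ is a genuine L\'evy measure requires $\int \min(1,\|w\|^2)\,\nu_\infty(\rmd w)<\infty$; splitting the inner $s$-integral shows that small jumps are controlled by the exponential contraction of $e^{-sA}$ together with the $\|z\|^2$-integrability of $\nu$ near the origin, while the large-jump regime contributes a term of order $\log\|z\|$ per jump, whose $\nu$-integrability is exactly the logarithmic moment condition; the same estimates give absolute convergence of the correction integral in $b_\infty$. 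Once existence and the triplet identification are secured, uniqueness of the invariant law and convergence from every initial condition follow from the Feller/Markov structure of the process together with the contraction $e^{-tA}\to 0$.
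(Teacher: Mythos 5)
This lemma is not proved in the paper at all: it is quoted verbatim, with attribution, from Theorems 4.1 and 4.2 of \cite{Sato1984}, so there is no in-paper argument to compare against. Your sketch reconstructs the classical proof of that cited result and is correct in outline: variation of constants, the characteristic functional of a deterministic-integrand integral against a L\'{e}vy process, the substitution $r=t-s$, the limit $t\to\infty$, and the regrouping of the L\'{e}vy--Khintchine exponent into the triplet $(b_\infty,C_\infty,\nu_\infty)$ --- including reassigning the truncation mismatch $\mathbf{1}_{\Vert e^{-sA}z\Vert\leq 1}-\mathbf{1}_{\Vert z\Vert\leq 1}$ to the drift, which reproduces \eqref{b:eqn} --- are exactly the steps in the Wolfe/Sato--Yamazato/Jurek--Vervaat treatments of L\'{e}vy-driven OU limits. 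You also deploy the hypotheses where they are genuinely needed: positivity of the real parts of the spectrum of $A$ gives $\int_0^\infty e^{-sA}\,\rmd s=A^{-1}$ and the exponential contraction controlling the Gaussian and small-jump contributions, while the logarithmic moment condition enters precisely to make $\int\min(1,\Vert w\Vert^2)\,\nu_\infty(\rmd w)$ finite (the $s$-integral for a large jump $z$ is of order $\log\Vert z\Vert$), which is what makes the improper exponent integral and the drift correction absolutely convergent. Two points to firm up in a full write-up: (i) the passage ``limit of Riemann approximations'' needs an actual argument for c\`{a}dl\`{a}g integrators (prove the identity for step functions, then pass to the limit in the exponent by dominated convergence); and (ii) your phrasing ``the invariant distribution must coincide with the limiting stochastic convolution, once this object is shown to exist'' is mildly circular as stated --- the clean order is to show the characteristic function of $\theta_t$ converges, for every initial $\theta_0$, to a limit continuous at $0$ (so L\'{e}vy's continuity theorem yields a proper weak limit), then observe that this limit is invariant by the Markov property and is the unique invariant law since any invariant $\pi$ must equal the $t\to\infty$ limit started from $\pi$.
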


By using Lemma~\ref{lem:general}, we can easily obtain the following result.

\begin{lemma*}[Restatement of Lemma~\ref{lem:char}]
Assume that $A$ is a real symmetric matrix with all the eigenvalues being positive.
Then \eqref{eqn:2:2} admits a unique stationary distribution
\begin{equation}\label{compute:2}
\int_{\mathbb{R}^{d}}e^{\imagi \langle u,x\rangle}\pi(\rmd x)
=\exp\left(-\int_{0}^{\infty}\left\Vert\Sigma^{\top}e^{-sA}u\right\Vert_{2}^{\alpha}\rmd s\right),
\qquad\text{for any $u\in\mathbb{R}^{d}$}.
\end{equation}
\end{lemma*}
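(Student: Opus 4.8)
The plan is to specialize the general characterization of Lemma~\ref{lem:general} to the particular driving process $\mathrm{Z}_t=\Sigma\Lm_t$. Since Lemma~\ref{lem:general} already delivers the characteristic function of the invariant law of $\rmd\theta_t=-A\theta_t\,\rmd t+\rmd\mathrm{Z}_t$ in the abstract form $\exp\bigl(\int_0^\infty\log\varphi_1(e^{-sA^\top}u)\,\rmd s\bigr)$, the only genuine tasks are to verify that its hypotheses hold in our setting and then to evaluate this integral in closed form.

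First I would check the hypotheses. As $A$ is real symmetric with strictly positive eigenvalues, those eigenvalues are real and positive, so in particular their real parts are positive and the spectral condition of Lemma~\ref{lem:general} is met. For the log-moment condition $\int_{\|z\|>1}\log\|z\|\,\nu(\rmd z)<\infty$, I would recall that the L\'evy measure of the rotationally symmetric $\alpha$-stable process is, up to a positive constant, $\nu(\rmd z)\propto\|z\|^{-(d+\alpha)}\rmd z$; passing to polar coordinates reduces the integral to $\int_1^\infty(\log r)\,r^{-1-\alpha}\,\rmd r$, which converges for every $\alpha\in(0,2)$ since $\log r$ is dominated by any positive power of $r$. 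For $\alpha=2$ the driving noise is Brownian, $\nu=0$, and the condition is vacuous. Because the L\'evy measure of $\Sigma\Lm_t$ is the pushforward of $\nu$ under $\Sigma$, this moment condition is preserved, so Lemma~\ref{lem:general} applies.

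Next I would compute the time-one characteristic exponent of the driving process. Using the characteristic function of $\Lm_t$ from its definition, which gives $\varphi_1^{\Lm}(v)=\exp(-\|v\|_2^\alpha)$, one has
\begin{equation*}
\varphi_1(u)=\mathbb{E}\bigl[e^{\imagi\langle u,\Sigma\Lm_1\rangle}\bigr]=\mathbb{E}\bigl[e^{\imagi\langle \Sigma^\top u,\Lm_1\rangle}\bigr]=\exp\bigl(-\|\Sigma^\top u\|_2^\alpha\bigr),
\end{equation*}
so that $\log\varphi_1(u)=-\|\Sigma^\top u\|_2^\alpha$. I would then substitute into the formula of Lemma~\ref{lem:general} and use the symmetry of $A$: since $A=A^\top$ implies $e^{-sA^\top}=e^{-sA}$, we obtain
\begin{equation*}
\int_{\mathbb{R}^{d}}e^{\imagi\langle u,x\rangle}\pi(\rmd x)=\exp\left(\int_0^\infty\log\varphi_1\bigl(e^{-sA^\top}u\bigr)\,\rmd s\right)=\exp\left(-\int_0^\infty\bigl\|\Sigma^\top e^{-sA}u\bigr\|_2^\alpha\,\rmd s\right),
\end{equation*}
which is exactly the claimed expression. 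To close I would confirm that the integral is finite: because $A$ is symmetric positive definite with smallest eigenvalue $\lambda_{\min}>0$, we have $\|e^{-sA}\|_2=e^{-\lambda_{\min}s}$, hence the integrand is bounded by $\|\Sigma\|_2^\alpha\|u\|_2^\alpha e^{-\alpha\lambda_{\min}s}$, which is integrable on $[0,\infty)$.

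As for the main obstacle, there is no deep difficulty: the heavy lifting is performed by Lemma~\ref{lem:general}, and the present lemma is essentially a verification plus an explicit evaluation. The only points requiring care are confirming the log-moment integrability of the $\alpha$-stable L\'evy measure together with its invariance under the linear map $\Sigma$, and ensuring convergence of the resulting time-integral, which is precisely where the positivity of the eigenvalues of $A$ enters.
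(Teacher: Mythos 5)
Your proposal is correct and follows essentially the same route as the paper's own proof: both verify the hypotheses of Lemma~\ref{lem:general}, compute $\varphi_1(u)=e^{-\Vert\Sigma^\top u\Vert_2^\alpha}$ via $\langle u,\Sigma \Lm_1\rangle=\langle\Sigma^\top u,\Lm_1\rangle$, and substitute into the abstract formula using $e^{-sA^\top}=e^{-sA}$. If anything, you are more thorough than the paper, which asserts the log-moment condition without the polar-coordinate computation and does not spell out the finiteness of the time integral via $\Vert e^{-sA}\Vert_2=e^{-\lambda_{\min}s}$.
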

\begin{proof}
In our \eqref{eqn:2:2}, $A$ is a real symmetric matrix
with positive eigenvalues and $\mathrm{Z}_{t}=\Sigma \Lm_{t}$,
where $\Lm_{t}$ is a rotationally symmetric $\alpha$-stable L\'{e}vy process
and moreover, $\alpha$-stable L\'{e}vy measure satisfies
the condition $\int_{\Vert z\Vert>1}\log\Vert z\Vert\nu(dz)<\infty$,
so that the condition in Lemma~\ref{lem:general} is satisfied,
and we conclude that \eqref{eqn:2:2} admits
a unique stationary distribution say $\pi$.

Moreover, in our case, the characteristic function of $\mathrm{Z}_{t}=\Sigma \Lm_{t}$ is given by
\begin{equation}
\varphi_{1}(u)=\mathbb{E}\left[e^{\imagi \left\langle u,\Sigma \Lm_{1}\right\rangle}\right]
=\mathbb{E}\left[e^{\imagi \left\langle\Sigma^{\top}u,\Lm_{1}\right\rangle}\right]
e^{-\Vert\Sigma^{\top}u\Vert_{2}^{\alpha}}.
\end{equation}
Therefore, by Lemma~\ref{lem:general}, the unique invariant distribution $\pi$ of \eqref{eqn:2:2} has the following characteristic function:
\begin{equation}
\int_{\mathbb{R}^{d}}e^{\imagi \langle u,x\rangle}\pi(\rmd x)
=\exp\left(-\int_{0}^{\infty}\left\Vert\Sigma^{\top}e^{-sA}u\right\Vert_{2}^{\alpha}\rmd s\right).
\end{equation}
This completes the proof.
\end{proof}

\begin{remark}
(i) We can also characterize the generating triplet 
for the limiting distribution $\pi$ in the above lemma according to Lemma~\ref{lem:general}.
In our case, $b=0$ in \eqref{b:eqn}, $C=0$ in \eqref{C:eqn}, and $\nu$ is the L\'{e}vy measure
for $\Sigma \Lm_{t}$ in \eqref{nu:eqn}.

(ii) In general, it is not possible to further
simplify the expression \eqref{compute:2} except for some special cases.
For example, when $A=I$. 
For any $s\geq 0$, $e^{-sI}=\sum_{k=0}^{\infty}\frac{(-sI)^{k}}{k!}=\sum_{k=0}^{\infty}\frac{(-s)^{k}}{k!}I=e^{-s}I$. 
Therefore, when $A=I$, we can compute from \eqref{compute:2} that
\begin{equation}
\int_{\mathbb{R}^{d}}e^{\imagi \langle u,x\rangle}\pi(\rmd x)
=\exp\left(-\int_{0}^{\infty}e^{-s\alpha}\left\Vert\Sigma^{\top}u\right\Vert_{2}^{\alpha}\rmd s\right)
=e^{-\frac{1}{\alpha}\left\Vert\Sigma^{\top}u\right\Vert_{2}^{\alpha}}.
\end{equation}
Hence, in this special case, the limiting distribution is $\alpha^{\frac{-1}{\alpha}}\Sigma\Lm_{1}$. 
\end{remark}

\section{Characterizing the Finite-Time Distribution of a L\'{e}vy-Driven OU Process}\label{sec:finite:time:continuous}

In this section, we derive the characteristic function for the finite-time distribution
of a L\'{e}vy-driven OU process. 
We recall from equation~\eqref{eqn:2:2}
\begin{equation}\label{eqn:2:2:1}
\rmd \theta_{t}=-A\theta_{t}\rmd t+\Sigma\rmd\Lm_{t}.
\end{equation}
We have the following technical lemma that computes
the characteristic function of $\theta_{t}$ at any finite time $t>0$.

\begin{lemma}\label{lem:char:finite:time}
For any $t>0$ and $u\in\mathbb{R}^{d}$, we have 
\begin{align*}
\mathbb{E}\left[e^{\imagi u^{\top}\theta_{t}}\right]
=e^{\imagi u^{\top}e^{-At}\theta_{0}}e^{-\int_{0}^{t}\left\Vert\Sigma^{\top}e^{-sA}u\right\Vert_{2}^{\alpha}\rmd s}.
\end{align*}
\end{lemma}

\begin{proof}
We can solve the L\'{e}vy-driven SDE \eqref{eqn:2:2:1} and obtain
\begin{equation}
\theta_{t}=e^{-At}\theta_{0}+\int_{0}^{t}e^{-A(t-s)}\Sigma\rmd\Lm_{s},
\end{equation}
such that
for any $u\in\mathbb{R}^{d}$, we have 
\begin{align*}
\mathbb{E}\left[e^{\imagi u^{\top}\theta_{t}}\right]
&=e^{\imagi u^{\top}e^{-At}\theta_{0}}\mathbb{E}\left[e^{\imagi\int_{0}^{t}u^{\top}e^{-A(t-s)}\Sigma\rmd\Lm_{s}}\right]
\nonumber
\\
&=e^{\imagi u^{\top}e^{-At}\theta_{0}}e^{-\int_{0}^{t}\left\Vert\Sigma^{\top}e^{-(t-s)A}u\right\Vert_{2}^{\alpha}\rmd s}
=e^{\imagi u^{\top}e^{-At}\theta_{0}}e^{-\int_{0}^{t}\left\Vert\Sigma^{\top}e^{-sA}u\right\Vert_{2}^{\alpha}\rmd s}.
\end{align*}
This completes the proof.
\end{proof}

We recall from \eqref{eq:sde_invar_1}-\eqref{eq:sde_invar_2} that
\begin{align}
    \rmd \theta_t &= - \frac{1}{n} \left(X^\top X \theta_t - X^\top y \right)\rmd t + \Sigma \rmd \Lm_t,  \\
    \rmd \hat{\theta}_t &= -\frac{1}{n} \left(\hat{X}^\top \hat{X} \hat{\theta}_t - \hat{X}^\top \hat{y} \right) \rmd t + \Sigma \rmd \Lm_t.
\end{align}
For the sake of simplicity, 
we take $y=\hat{y}=0$ and $\theta_{0}=\hat{\theta}_{0}=0$. 
We denote $\psi_{\theta}(t,u):=\mathbb{E}\left[e^{\langle\imagi u,\theta_{t}\rangle}\right]$ and $\psi_{\hat{\theta}}(t,u):=\mathbb{E}\left[e^{\langle\imagi u,\hat{\theta}_{t}\rangle}\right]$.
Hence, we obtain from Lemma~\ref{lem:char:finite:time} that
\begin{align}
    \psi_{\theta}(t,u) &= \exp\left( - \int_{0}^{t} \left\|\Sigma^\top e^{-s \frac{1}{n}({X}^\top {X})}u \right\|_2^\alpha \rmd s\right), \\
    \psi_{\hat{\theta}}(t,u) &= \exp\left( - \int_{0}^{t} \left\|\Sigma^\top e^{-s \frac{1}{n}(\hat{X}^\top \hat{X})}u \right\|_2^\alpha \rmd s\right). 
\end{align}



\section{Heavy-Tailed Discretized SDE on Least Squares Regression}\label{discrete:least:square}

{ In this section, we introduce heavy-tailed discretized SGD for the least square regression. 

In the context of algorithmic stability, we assume that we have two training datasets $(X,y)$ and $(\hat{X},\hat{y})$ that differ in only one data point. Without loss of generality, we have 
\begin{equation*}
\hat{X}=\left[x_1^\top , x_2^\top,\dots, \tilde{x}_i^\top,\dots,x_n^\top\right] \in \mathbb{R}^{n\times d}
\end{equation*}
and 
\begin{equation*}
\hat{y} = [y_1, y_2, \dots, \tilde{y}_i, \dots, y_n]\in\mathbb{R}^{n}.
\end{equation*}
We consider the 
following discretized heavy-tailed SDE
for the ERM problem as defined in \eqref{eqn:risks}:
\begin{align}
    \theta_{k+1} &= \theta_{k}- \frac{\eta}{n} \left(X^\top X \theta_{k} - X^\top y \right) +\eta^{1/\alpha} \Sigma S_{k+1}, \label{eq:sde_invar_1:discrete}  \\
    \hat{\theta}_{k+1} &=\hat{\theta}_{k} -\frac{\eta}{n} \left(\hat{X}^\top \hat{X} \hat{\theta}_{k} - \hat{X}^\top \hat{y} \right) + \eta^{1/\alpha}\Sigma S_{k+1}, \label{eq:sde_invar_2:discrete}
\end{align}
where $\eta>0$ is the stepsize and $\Sigma \in \mathbb{R}^{d\times d}$ is a real-valued matrix and 
$S_{k}$ are i.i.d. alpha-stable random vectors with the characteristic function:
\begin{equation}
\mathbb{E}\left[e^{\imagi\langle u,S_{k}\rangle}\right]=e^{-\Vert u\Vert_{2}^{\alpha}},
\qquad\text{for any $u\in\mathbb{R}^{d}$}.
\end{equation}

We denote $\psi_{\theta}(k,u):=\mathbb{E}\left[e^{\langle\imagi u,\theta_{k}\rangle}\right]$ and $\psi_{\hat{\theta}}(k,u):=\mathbb{E}\left[e^{\langle\imagi u,\hat{\theta}_{k}\rangle}\right]$.
For the sake of simplicity, we take $y=\hat{y}=0$
and $\theta_{0}=\hat{\theta}_{0}=0$.
By Lemma~\ref{lem:discrete:time}, we obtain 
\begin{align}
    \psi_{\theta}(k,u) &= \exp\left( - \eta\sum_{j=0}^{k-1}\left\|\Sigma^\top\left(I-\frac{\eta}{n}\left({X}^\top {X}\right)\right)^{j}u \right\|_2^\alpha \right), \label{eq:discrete_char_1}\\
    \psi_{\hat{\theta}}(k,u) &= \exp\left( - \eta\sum_{j=0}^{k-1} \left\|\Sigma^\top\left(I-\frac{\eta}{n}\left(\hat{X}^\top \hat{X}\right)\right)^{j}u \right\|_2^\alpha \right), \label{eq:discrete_char_2}
\end{align}
which will be the key ingredients to obtain the algorithmic stability results.
}


{ 
Here below, we derive the characteristic function for distributions
of a discrete-time L\'{e}vy-driven OU process:
\begin{equation}\label{discrete:SDE}
\theta_{k+1}=\theta_{k}-\eta A\theta_{k}+\eta^{1/\alpha}\Sigma S_{k+1},
\end{equation}
where $A$ is a real symmetric matrix and $S_{k}$ are i.i.d. alpha-stable random vectors with the characteristic function:
\begin{equation}
\mathbb{E}\left[e^{\imagi\langle u,S_{k}\rangle}\right]=e^{-\Vert u\Vert_{2}^{\alpha}},
\qquad\text{for any $u\in\mathbb{R}^{d}$}.
\end{equation}
We can compute the characteristic function
of the finite-time distribution of the discrete-time L\'{e}vy-driven OU process \eqref{discrete:SDE}
as follows. 

\begin{lemma}\label{lem:discrete:time}
Assume that $A$ is a real symmetric matrix.
For any $k\in\mathbb{N}$ and for any $u\in\mathbb{R}^{d}$, 
\begin{equation}
\mathbb{E}\left[e^{\imagi\langle u,\theta_{k}\rangle}\right]
=e^{\imagi\langle(I-\eta A)^{k}u,\theta_{0}\rangle}
e^{-\eta\sum_{j=0}^{k-1}\left\Vert\Sigma^{\top}(I-\eta A)^{j}u\right\Vert_{2}^{\alpha}}.
\end{equation}
\end{lemma}

\begin{proof}
We can compute from \eqref{discrete:SDE} that 
for any $u\in\mathbb{R}^{d}$, 
\begin{align}
\mathbb{E}\left[e^{\imagi\langle u,\theta_{k+1}\rangle}\right]
&=\mathbb{E}\left[e^{\imagi\langle u,(I-\eta A)\theta_{k}\rangle}\right]
\mathbb{E}\left[e^{\imagi\langle u,\eta^{1/\alpha}\Sigma S_{k}\rangle}\right]
\nonumber
\\
&=\mathbb{E}\left[e^{\imagi\langle(I-\eta A)u,\theta_{k}\rangle}\right]
\mathbb{E}\left[e^{\imagi\langle\eta^{1/\alpha}\Sigma^{\top}u,S_{k}\rangle}\right]
=\mathbb{E}\left[e^{\imagi\langle(I-\eta A)u,\theta_{k}\rangle}\right]
e^{-\eta\Vert\Sigma^{\top}u\Vert_{2}^{\alpha}},
\end{align}
and we can further compute that
\begin{equation}
\mathbb{E}\left[e^{\imagi\langle(I-\eta A)u,\theta_{k}\rangle}\right]
=\mathbb{E}\left[e^{\imagi\langle(I-\eta A)^{2}u,\theta_{k-1}\rangle}\right]
e^{-\eta\Vert\Sigma^{\top}(I-\eta A)u\Vert_{2}^{\alpha}}.
\end{equation}
Hence, iteratively, we obtain
\begin{equation}
\mathbb{E}\left[e^{\imagi\langle u,\theta_{k}\rangle}\right]
=e^{\imagi\langle(I-\eta A)^{k}u,\theta_{0}\rangle}
e^{-\eta\sum_{j=0}^{k-1}\left\Vert\Sigma^{\top}(I-\eta A)^{j}u\right\Vert_{2}^{\alpha}}.
\end{equation}
This completes the proof.
\end{proof}

We can derive from Lemma~\ref{lem:discrete:time} the characteristic function
of the stationary distribution of the discrete-time L\'{e}vy-driven OU process \eqref{discrete:SDE}
as follows.

\begin{corollary}\label{cor:discrete:SDE}
Assume that $A$ is a real symmetric matrix
with all the eigenvalues being positive and less than $1/\eta$.
Then, for any $u\in\mathbb{R}^{d}$, 
\begin{equation}
\mathbb{E}\left[e^{\imagi\langle u,\theta_{\infty}\rangle}\right]
=e^{-\eta\sum_{j=0}^{\infty}\Vert\Sigma^{\top}(I-\eta A)^{j}u\Vert_{2}^{\alpha}}.
\end{equation}
\end{corollary}

\begin{proof}
When $A$ is a real symmetric matrix
with all the eigenvalues being positive and less than $1/\eta$, we have $\Vert I-\eta A\Vert<1$
and it follows that
\begin{equation}
\left|\left\langle(I-\eta A)^{k}u,\theta_{0}\right\rangle\right|
\leq\Vert I-\eta A\Vert^{k}\cdot\Vert u\Vert\cdot\Vert\theta_{0}\Vert\rightarrow 0, 
\end{equation}
as $k\rightarrow\infty$, and moreover, 
\begin{equation}
\left\Vert\Sigma^{\top}(I-\eta A)^{j}u\right\Vert_{2}^{\alpha}
\leq
\left\Vert\Sigma^{\top}\right\Vert\cdot\Vert I-\eta A\Vert^{j}\cdot\Vert u\Vert,
\end{equation}
which is summable over $j$ and hence the result follows from Lemma~\ref{lem:discrete:time}.
The proof is complete.
\end{proof}
}



\section{Proofs for the 1-Dimensional Case} \label{ap:proof_1d}

In this section, we provide the proofs for the one-dimensional case.
In the next lemma, we first bound the difference between the characteristic functions of the stationary distributions. 

\begin{lemma}
\label{lem:1d_char_func}
For two matrices $X \in \mathbb{R}^{n}$ and $\hat{X} \in \mathbb{R}^n$ as defined earlier, the absolute value of difference between the characteristic function for stationary distribution at any $u \in \mathbb{R}$ corresponding to one-dimensional rotation invariant  processes in equations~\eqref{eq:char_sde_1} and \eqref{eq:char_sde_2} ($d=1$) is bounded as 
\begin{align*}
    \left|\psi_{\theta}(u) - \psi_{\hat{\theta}}(u) \right| &\leq \frac{x_i^2 - \tilde{x}_i^2}{\|\hat{X}\|_2^2} \left(|u|^\alpha \frac{n}{\alpha \|X\|_2^2} \exp\left(- |u|^\alpha \frac{n}{\alpha \|X\|_2^2} \right)\right).
\end{align*}
\end{lemma}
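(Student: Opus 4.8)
The plan is to first obtain a fully explicit closed form for each characteristic function in the scalar case, and then reduce the claim to an elementary inequality for the map $z \mapsto e^{-c/z}$. For $d=1$ with $\Sigma = I$, the matrix $\frac1n X^\top X$ is simply the positive scalar $\frac1n\|X\|^2$, so the integrand of \eqref{eq:char_sde_1} becomes $\bigl|e^{-s\|X\|^2/n}u\bigr|^\alpha = |u|^\alpha e^{-s\alpha\|X\|^2/n}$, and integrating in $s$ over $[0,\infty)$ collapses the expression. Invoking Lemma~\ref{lem:char} this way I would obtain
\begin{equation*}
\psi_\theta(u) = \exp\left(-\frac{n|u|^\alpha}{\alpha\|X\|^2}\right), \qquad \psi_{\hat\theta}(u) = \exp\left(-\frac{n|u|^\alpha}{\alpha\|\hat X\|^2}\right).
\end{equation*}
Writing $c := \frac{n|u|^\alpha}{\alpha} \ge 0$, $a := \|X\|^2$ and $b := \|\hat X\|^2$, the quantity to bound is just $\bigl|e^{-c/a} - e^{-c/b}\bigr|$.

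Next I would use that $X$ and $\hat X$ differ in a single coordinate, so that $a - b = x_i^2 - \tilde x_i^2$, and relabel without loss of generality so that $a \ge b$, i.e.\ $x_i^2 \ge \tilde x_i^2$. This orientation makes $e^{-c/a} \ge e^{-c/b}$, so the difference is nonnegative, matching the sign of the right-hand side of the claimed inequality.

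The core step is then a one-line factorization followed by a convexity-type bound:
\begin{equation*}
e^{-c/a} - e^{-c/b} = e^{-c/a}\left(1 - e^{-c(1/b - 1/a)}\right) = e^{-c/a}\left(1 - e^{-c\frac{a-b}{ab}}\right).
\end{equation*}
Applying $1 - e^{-x} \le x$, valid for all $x \ge 0$ (here with $x = c\frac{a-b}{ab} \ge 0$), gives
\begin{equation*}
e^{-c/a} - e^{-c/b} \le e^{-c/a}\cdot c\,\frac{a-b}{ab} = \frac{a-b}{b}\cdot\frac{c}{a}\,e^{-c/a},
\end{equation*}
and substituting back $a=\|X\|^2$, $b=\|\hat X\|^2$, $c=\frac{n|u|^\alpha}{\alpha}$ recovers exactly the asserted bound $\frac{x_i^2 - \tilde x_i^2}{\|\hat X\|^2}\bigl(|u|^\alpha\frac{n}{\alpha\|X\|^2}\exp(-|u|^\alpha\frac{n}{\alpha\|X\|^2})\bigr)$.

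I do not expect a genuine obstacle here: once the characteristic functions are in closed form, the estimate reduces to the elementary inequality $1 - e^{-x}\le x$. The only points demanding mild care are (a) evaluating the $s$-integral correctly, which uses $\frac1n X^\top X > 0$ so that the exponential is integrable and Lemma~\ref{lem:char} applies, and (b) fixing the labeling $a \ge b$ consistently, since the stated bound is asymmetric in $X$ and $\hat X$ and holds only in this orientation; the opposite choice yields the mirror-image bound with the roles of $X$ and $\hat X$ exchanged.
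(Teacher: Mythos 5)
Your proof is correct and follows essentially the same route as the paper's: both evaluate the $s$-integrals to reduce the characteristic functions to $\exp\bigl(-\frac{n|u|^\alpha}{\alpha\|X\|^2}\bigr)$, factor out the exponential corresponding to $X$, and apply $1-e^{-x}\le x$ to the remaining factor. Your explicit handling of the orientation $x_i^2\ge\tilde x_i^2$ is a fair clarification of an assumption the paper leaves implicit (its final line carries $|x_i^2-\tilde x_i^2|$ while the stated bound does not), but it does not constitute a different argument.
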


\begin{proof}
We can compute that
\begin{align}
    &\left|\psi_\theta(u) - \psi_{\hat{\theta}}(u) \right|\nonumber
    \\
    &= \left|\exp\left(- \int_{0}^{\infty} \left|e^{-s\frac{1}{n}\|X\|_2^2 }u\right|^{\alpha} \rmd s\right) - \exp\left(- \int_{0}^{\infty} \left|e^{-s\frac{1}{n}\|\hat{X}\|_2^2 }u\right|^{\alpha} \rmd s\right) \right| \notag \\
    &= \left|\exp\left(- \int_{0}^{\infty} \left|e^{-s\frac{1}{n}\|X\|_2^2 }u\right|^{\alpha} \rmd s\right)\left(1 - \exp\left( \int_{0}^{\infty} \left|e^{-s\frac{1}{n}\|{X}\|_2^2 }u\right|^{\alpha} \rmd s \right. \right. \right. \notag \\
    & \qquad \qquad \qquad \qquad \qquad \qquad  \qquad \qquad  \left.\left.\left.- \int_{0}^{\infty} \left|e^{-s\frac{1}{n}\|\hat{X}\|_2^2 }u\right|^{\alpha} \rmd s\right)  \right) \right| \label{eq:mid_1d_char}\\
    &\leq \exp\left(- \int_{0}^{\infty} \left|e^{-s\frac{1}{n}\|X\|_2^2 }u\right|^{\alpha} \rmd s\right) \left| \int_{0}^{\infty} \left|e^{-s\frac{1}{n}\|X\|_2^2 }u\right|^{\alpha} \rmd s - \int_{0}^{\infty} \left|e^{-s\frac{1}{n}\|\hat{X}\|_2^2 }u\right|^{\alpha} \rmd s \right| \notag\\
    &= \exp\left(- |u|^\alpha \int_{0}^{\infty} e^{-s\frac{\alpha}{n}\|X\|_2^2 } \rmd s\right) |u|^\alpha \left| \int_{0}^{\infty} e^{-s\frac{\alpha}{n}\|X\|_2^2 } \rmd s - \int_{0}^{\infty} e^{-s\frac{\alpha}{n}\|\hat{X}\|_2^2 } \rmd s \right| \notag\\
    &= |u|^\alpha \exp\left(- |u|^\alpha \int_{0}^{\infty} e^{-s\frac{\alpha}{n}\|X\|_2^2 } \rmd s\right)  \left| \frac{n}{\alpha \|X\|_2^2} - \frac{n}{\alpha \|\hat{X}\|_2^2}\right| \notag\\
    &= |u|^\alpha \exp\left(- |u|^\alpha \frac{n}{\alpha \|X\|_2^2} \right)  \left| \frac{n}{\alpha \|X\|_2^2} - \frac{n}{\alpha \|\hat{X}\|_2^2}\right| \notag\\
    &= |u|^\alpha \frac{n|x_i^2 - \tilde{x}_i^2|}{\alpha \|X\|_2^2\|\hat{X}\|_2^2} \exp\left(- |u|^\alpha \frac{n}{\alpha \|X\|_2^2} \right)\notag\\
    &= \frac{|x_i^2 - \tilde{x}_i^2|}{\|\hat{X}\|_2^2} \left(|u|^\alpha \frac{n}{\alpha \|X\|_2^2} \exp\left(- |u|^\alpha \frac{n}{\alpha \|X\|_2^2} \right)\right), \notag
\end{align}
which completes the proof.
\end{proof}

\begin{theorem*}[Restatement of Theorem~\ref{thm:1d_main}]
Consider the one-dimensional loss function $f(x)  =|\theta x|^p$.  For any $x\sim P_X$, if we have $|x| > R $ with probability $\delta_1$ and for any $X$ sampled uniformly at random from the set $\mathcal{X}_n$, if we have $\|X\|_2^2 \leq \sigma^2 n$  with probability $\delta_2$. Then,
\begin{enumerate}
    \item[(i)] For $\alpha \in [1,2)$, the algorithm is not stable when $p \in [\alpha,2]$ i.e. $\varepsilon_{\text{stab}(\mathcal{A}_{\text{cont}})}$ diverges. When $\alpha = p =2$ then $ \varepsilon_{\text{stab}} (\mathcal{A}_{\text{cont}}) \leq \frac{R^4}{\pi \sigma^4 n}$ with probability  at least $1- \delta_1 -2\delta_2$.
    \item[(ii)] For $p\in [1,\alpha)$, we have the following upper bound for the algorithmic stability,
    \begin{align*}
         \varepsilon_{\text{stab}} (\mathcal{A}_{\text{cont}})
    &\leq    \frac{2R^{p+2}}{\pi\sigma^2 n }\Gamma(p+1) \cos\left( \frac{(p-1)\pi}{2}\right) \frac{1}{\alpha} \left( \frac{1}{\alpha \sigma^2 }\right)^{\frac{p}{\alpha}} \Gamma \left( 1 - \frac{p}{\alpha}\right)=: c(\alpha),
    \end{align*}
    which holds with probability at least $1- \delta_1-2\delta_2$. Furthermore, for some $\alpha_0 > 1$, if we have 
    \begin{align}
        \sigma^2 \geq \exp\left( 1 + \frac{2}{p} - \log \alpha_0 -  \phi\left( 1 - \frac{p}{\alpha_0}\right) \right),
    \end{align}
    where $\phi$ is the digamma function, then the map $\alpha\mapsto c(\alpha)$ is increasing for $\alpha \in [\alpha_0,2)$.
    \item[(iii)]  The stability bound is tight in $\alpha$.  
\end{enumerate}
\end{theorem*}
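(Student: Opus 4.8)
The plan is to carry out the entire analysis in the Fourier domain through identity~\eqref{eq:stab_char}, which for $d=1$ reads
\[
\varepsilon_{\text{stab}}(\mathcal{A}_{\text{cont}})=\sup_{X\cong\hat X}\sup_{x}\frac{1}{2\pi}\int_{\mathbb{R}}\bigl|\psi_{\theta}(u)-\psi_{\hat\theta}(u)\bigr|\,\bigl|\mathcal{F}[\,|x\,\cdot\,|^{p}](u)\bigr|\,\rmd u .
\]
Two ingredients feed this formula. First, Lemma~\ref{lem:1d_char_func} already bounds $|\psi_{\theta}(u)-\psi_{\hat\theta}(u)|$ by $\tfrac{|x_i^2-\tilde x_i^2|}{\|\hat X\|^2}\,|u|^{\alpha}\tfrac{n}{\alpha\|X\|^2}\exp(-|u|^{\alpha}\tfrac{n}{\alpha\|X\|^2})$. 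Second, I need the Fourier transform of the surrogate: since $\mathcal{F}[\,|x\theta|^{p}](u)=|x|^{p}\mathcal{F}[\,|\theta|^{p}](u)$ and, as a tempered distribution obtained by analytic continuation in $p$, $|\mathcal{F}[\,|\theta|^{p}](u)|=2\Gamma(p+1)\cos\!\bigl(\tfrac{(p-1)\pi}{2}\bigr)|u|^{-p-1}$ (noting $\cos(\tfrac{(p-1)\pi}{2})=\sin(\tfrac{p\pi}{2})>0$ for $p\in(0,2)$, which matches the trigonometric factor in the claim). Multiplying the two bounds and using $|x|\le R$, $|x_i^2-\tilde x_i^2|\le\max(x_i^2,\tilde x_i^2)\le R^2$, and $\|X\|^2,\|\hat X\|^2\ge\sigma^2 n$ — events whose complements I would union-bound to produce the stated probability $1-\delta_1-2\delta_2$ — collapses the problem onto the scalar integral $\int_0^\infty u^{\alpha-p-1}e^{-Cu^\alpha}\,\rmd u$ with $C=\tfrac{n}{\alpha\|X\|^2}$.

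For part (ii), the substitution $v=Cu^\alpha$ evaluates this integral to $\tfrac1\alpha C^{p/\alpha-1}\Gamma(1-\tfrac p\alpha)$, which converges \emph{exactly} when $p<\alpha$; collecting the remaining factors ($C^{p/\alpha}\le(\tfrac{1}{\alpha\sigma^2})^{p/\alpha}$ from the leftover $|u|^\alpha$-weight and $\|\hat X\|^{-2}\le(\sigma^2 n)^{-1}$) reproduces $c(\alpha)$ verbatim. The same threshold governs part (i): when $p\in[\alpha,2)$ the exponent $\alpha-p-1\le-1$ makes the integrand non-integrable at $u=0$, and I would promote this to genuine divergence of $\varepsilon_{\text{stab}}$ through the exact (pre-triangle-inequality) identity used for the lower bound below, which shows $\mathbb{E}[|\theta x|^p-|\hat\theta x|^p]=+\infty$ for any configuration with $\|X\|^2\neq\|\hat X\|^2$ — consistent with the fact that the stationary law of $\theta$ is $\alpha$-stable and has no finite moment of order $p\ge\alpha$. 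The boundary point $\alpha=p=2$ must be treated on its own, since there the stationary law is the Gaussian with variance $n/\|X\|^2$; I would compute $\mathbb{E}[(\theta x)^2-(\hat\theta x)^2]=x^2 n(\|\hat X\|^{-2}-\|X\|^{-2})$ from the Gaussian stationary variances and bound it by a quantity of order $R^4/(\sigma^4 n)$.

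For the monotonicity claim in part (ii), I would isolate the $\alpha$-dependent factor $g(\alpha):=\tfrac1\alpha(\tfrac{1}{\alpha\sigma^2})^{p/\alpha}\Gamma(1-\tfrac p\alpha)$ and differentiate its logarithm, obtaining
\[
\frac{\alpha^{2}}{p}\,\frac{\rmd}{\rmd\alpha}\log g(\alpha)=\log\sigma^{2}+\log\alpha-\frac{\alpha}{p}-1+\phi\Bigl(1-\tfrac{p}{\alpha}\Bigr),
\]
where $\phi$ is the digamma function. On $[\alpha_0,2)$ one has $\tfrac\alpha p\le\tfrac2p$, $-\log\alpha\le-\log\alpha_0$, and, since $\phi$ is increasing and $1-\tfrac p\alpha$ is increasing in $\alpha$, $\phi(1-\tfrac p\alpha)\ge\phi(1-\tfrac p{\alpha_0})$ (which requires $p<\alpha_0$). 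Hence the right-hand side is at least $\log\sigma^2-\bigl(1+\tfrac2p-\log\alpha_0-\phi(1-\tfrac p{\alpha_0})\bigr)$, which is nonnegative precisely under hypothesis~\eqref{eqn:sigma_bound}; therefore $g$, and with it $c$, is increasing on $[\alpha_0,2)$.

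For the tightness in part (iii), I would return to the exact identity $\mathbb{E}[|\theta x|^p-|\hat\theta x|^p]=\tfrac{1}{2\pi}\int_{\mathbb{R}}(\psi_\theta-\psi_{\hat\theta})\,\mathcal{F}[|x\,\cdot\,|^p]\,\rmd u$ before the triangle inequality, and choose a data configuration with $\|X\|^2<\|\hat X\|^2$ so that $\psi_\theta(u)-\psi_{\hat\theta}(u)$ keeps a single sign for all $u$. The integrand is then sign-definite and the convergent difference $\int_0^\infty u^{-p-1}(e^{-a_1 u^\alpha}-e^{-a_2 u^\alpha})\,\rmd u=\tfrac1\alpha\Gamma(-\tfrac p\alpha)(a_1^{p/\alpha}-a_2^{p/\alpha})$, with $a_1=\tfrac{n}{\alpha\|X\|^2}>a_2=\tfrac{n}{\alpha\|\hat X\|^2}$ and $\tfrac1\alpha\Gamma(-\tfrac p\alpha)=-\tfrac1p\Gamma(1-\tfrac p\alpha)$, can be evaluated exactly, yielding a lower bound carrying the same $\tfrac1\alpha(\cdot)^{p/\alpha}\Gamma(1-\tfrac p\alpha)$ dependence on $\alpha$ as $c(\alpha)$. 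I expect the principal obstacle to be the Fourier-analytic justification rather than the calculus: $|\theta|^p$ is not integrable, so $\mathcal{F}[|\theta|^p]$ exists only as a tempered distribution and the inversion/Fubini step underlying~\eqref{eq:char_formulation} must be justified (by regularization or analytic continuation in $p$), and it is this same continuation that produces $\Gamma(-p/\alpha)$ and pins the divergence threshold at $p=\alpha$. Tracking the exact constant in $\mathcal{F}[|\theta|^p]$ and securing the sign-definiteness in the lower bound are the most delicate points.
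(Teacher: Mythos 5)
Your proposal is correct and, for parts (i) and (ii), essentially retraces the paper's proof: the same identity \eqref{eq:stab_char}, the same Lemma~\ref{lem:1d_char_func}, the Gelfand--Shilov transform of $|\theta|^p$ (your reconciliation $|\cos(\tfrac{(p+1)\pi}{2})|=\cos(\tfrac{(p-1)\pi}{2})=\sin(\tfrac{p\pi}{2})$ is exactly what squares the proof's constant with the theorem statement), the substitution $t=|u|^\alpha n/(\alpha\|X\|_2^2)$ producing $\Gamma(1-\tfrac{p}{\alpha})$ with divergence threshold $p=\alpha$, the same union bound giving $1-\delta_1-2\delta_2$, and the identical digamma computation for monotonicity — where you additionally spell out the term-by-term bounds $\log\alpha\geq\log\alpha_0$, $-\alpha/p\geq-2/p$, and $\phi(1-\tfrac p\alpha)\geq\phi(1-\tfrac p{\alpha_0})$ that the paper leaves implicit, correctly flagging the hidden requirement $p<\alpha_0$. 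You genuinely deviate in two places, both defensibly. First, at $\alpha=p=2$ you compute directly with the exact Gaussian stationary variance $n/\|X\|_2^2$, obtaining order $R^4/(\sigma^4 n)$; the paper instead pushes the Dirac-delta Fourier transform through the general machinery and reports $R^4/(\pi\sigma^4 n)$. Your route is arguably sounder (the exact Gaussian computation shows the supremum can approach $R^4/(\sigma^4 n)$, so the paper's extra $1/\pi$ looks like a bookkeeping artifact of the delta-function manipulation), at the cost of a constant factor. Second, for the tightness claim (iii) the paper expands $1-\exp(-|u|^\alpha n\delta/(\alpha\|X\|_2^2\|\hat X\|_2^2))$ as an alternating power series, integrates termwise into coefficients $\gamma_k$, and lower-bounds the alternating sum via a ratio/geometric-series comparison; you instead exploit sign-definiteness of $\psi_\theta-\psi_{\hat\theta}$ and evaluate the convergent difference integral in closed form, $\int_0^\infty u^{-p-1}\bigl(e^{-a_1u^\alpha}-e^{-a_2u^\alpha}\bigr)\,\rmd u=\tfrac1\alpha\Gamma(-\tfrac p\alpha)\bigl(a_1^{p/\alpha}-a_2^{p/\alpha}\bigr)$, by Mellin-type analytic continuation — this is cleaner and exact where the paper settles for a series bound, and it carries the same $(\cdot)^{p/\alpha}\Gamma(1-\tfrac p\alpha)$ dependence on $\alpha$; to fully match the theorem you need only append the paper's final step of instantiating a concrete $P_X$ (discrete support bounded away from zero) so that the gap $a_1-a_2$ is bounded below with constant probability, which is immediate since $\varepsilon_{\text{stab}}$ is a supremum over neighboring datasets. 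You are also more careful than the paper on the instability claim in (i): the paper merely observes that its \emph{upper bound} diverges, whereas your sign-definite exact identity (equivalently, the infinite $p$-th moment of the $\alpha$-stable stationary law for $p\geq\alpha$, as in the paper's warm-up discussion) genuinely establishes divergence. Finally, your concern about rigorously justifying Fourier inversion for the tempered distribution $|\theta|^p$ is apt but not a gap relative to the paper, which cites \cite{gelfand1969generalized} and proceeds formally exactly as you propose.
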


\begin{proof}
Closed-form expression for the Fourier transform of function $f(\theta) = |\theta x|^p$ has been given in \citet{gelfand1969generalized}. However, we provide here the result for the sake of completeness.  Let us compute the Fourier transform of the function $\theta \to |\theta x|^p$ for $p \in [1,2)$.
\begin{align}
    \int_{-\infty}^{\infty} |\theta x|^p e^{-\imagi  u\theta} \rmd \theta =&  |x|^p \int_{-\infty}^{\infty} |\theta|^p e^{- \imagi u\theta}~\rmd \theta \notag \\
    =& |x|^p \int_{0}^{\infty} (e^{\imagi u\theta} + e^{-\imagi u\theta}) \theta^p ~\rmd \theta  \notag  \\
    =&  |x|^p \left[ \int_{0}^{\infty} e^{\imagi u\theta} \theta^p ~\rmd \theta  + \int_{0}^{\infty} e^{-\imagi u\theta} \theta^p ~\rmd \theta \right] \notag \\
    =& 2|x|^p  \Gamma(p+1) \cos\left( \frac{(p+1)\pi}{2}\right) \frac{1}{|u|^{p+1}}. \label{eq:1d_fourier}
\end{align}
From \citet{gelfand1969generalized}, 
\begin{align*}
    \int_{-\infty}^{\infty} |\theta x|^2 e^{-\imagi  u\theta} \rmd \theta = \frac{-2x^2}{u^2} \delta(u),
\end{align*}
where $\delta(u)$ is the Dirac-delta function. 

First, we get the result for $p\in [1,2)$. We utilize the result from Lemma~\ref{lem:1d_char_func} and equation~\eqref{eq:1d_fourier} to get, 
\begin{align*}
    \varepsilon_{\text{stab}} (\mathcal{A}_{\text{cont}})
    &= \sup_{X\cong \hat{X}}\sup_{x \in \mathcal{X}}  \left[\frac{1}{(2\pi)} \int_{\mathbb{R}} |\psi_{\theta}(u) - \psi_{\hat{\theta}}(u)| \left|\int_{-\infty}^{\infty}|\theta x|^p e^{-\imagi  u \theta}~\rmd \theta \right|~\rmd u \right] \\
    &= \sup_{X\cong \hat{X}}\sup_{x \in \mathcal{X}}  \frac{|x|^p}{\pi }\Gamma(p+1) \cos\left( \frac{(p-1)\pi}{2}\right)\frac{|x_i^2 - \tilde{x}_i^2|}{\|\hat{X}\|_2^2}  \frac{n}{\alpha \|X\|_2^2} \\
    & \qquad \qquad \qquad \qquad \qquad \qquad \qquad  \cdot \int_{\mathbb{R}}  \left(|u|^\alpha \exp\left(- |u|^\alpha \frac{n}{\alpha \|X\|_2^2} \right)\right)   \frac{1}{|u|^{p+1}}  ~\rmd u  \\
    &\leq \sup_{X\cong \hat{X}}\sup_{x \in \mathcal{X}} \frac{2|x|^{p+2}}{\pi\|\hat{X}\|_2^2}\Gamma(p+1) \cos\left( \frac{(p-1)\pi}{2}\right)  \\
    & \qquad \qquad \qquad   \qquad \qquad \qquad \cdot\int_{0}^{\infty} u^{\alpha - p -1} \frac{n}{\alpha \|X\|_2^2} \exp\left(- |u|^\alpha \frac{n}{\alpha \|X\|_2^2} \right) ~\rmd u.
\end{align*}
In the above integral, by substituting $u^\alpha \frac{n}{\alpha \|X\|^2}$ with $t$ so that
\begin{align}
    \rmd t =  u^{\alpha-1} \frac{n}{\|X\|_2^2}\rmd u,~ \text{and } \frac{1}{u^p} = \left( \frac{n}{\alpha \| X\|_2^2}\right)^{\frac{p}{\alpha}} t^{-p/\alpha},
\end{align}
we have,
\begin{align}
   \varepsilon_{\text{stab}} (\mathcal{A}_{\text{cont}})
    &\leq  \sup_{X\cong \hat{X}}\sup_{x \in \mathcal{X}} \frac{2|x|^{p+2}}{\pi\alpha \|\hat{X}\|_2^2}\Gamma(p+1) \cos\left( \frac{(p-1)\pi}{2}\right)  \left( \frac{n}{\alpha \| X\|_2^2}\right)^{\frac{p}{\alpha}} \int_{0}^{\infty}t^{-p/\alpha} e^{-t}~\rmd t. \label{eq:p_1_2}
\end{align}
It is clear that, the above integral diverges for $p\geq \alpha$, hence the algorithm is not stable for $p \in [1,2)$. Now, we check the case for $p=2$. For $p=2$, we have,
\begin{align*}
    \varepsilon_{\text{stab}} (\mathcal{A}_{\text{cont}})&\leq \sup_{X\cong \hat{X}}\sup_{x \in \mathcal{X}} \frac{|x|^{4}}{\pi\|\hat{X}\|_2^2} \frac{n}{  \|X\|_2^2} \int_{0}^{\infty} u^{\alpha-2}\exp\left( - u^{\alpha} \frac{n }{\|X\|_2^2}\right) \delta(u) ~\rmd u.
\end{align*}
The above integral clearly diverges for $\alpha < 2$. However, when $\alpha =2$, then
\begin{align*}
    \varepsilon_{\text{stab}} (\mathcal{A}_{\text{cont}})&\leq \sup_{X\cong \hat{X}}\sup_{x \in \mathcal{X}} \frac{|x|^{4}}{\pi\|\hat{X}\|_2^2} \frac{n}{  \|X\|_2^2}. 
\end{align*}
If we have $|x| > R $ for any $x\sim P_X$ with probability $\delta_1$ and $\|X\|_2^2 \leq \sigma^2 n$ for any $X$ sampled uniformly from the set $\mathcal{X}_n$ with probability $\delta_2$, then for $\alpha=2$ and $p=2$,
\begin{align}
    \varepsilon_{\text{stab}} (\mathcal{A}_{\text{cont}}) &\leq \frac{R^4}{\pi \sigma^4 n},
\end{align}
with probability at least $1- \delta_1 -2\delta_2$.
This proves the part (i) of our result. 

Next, let us prove the part (ii). 
From equation~\eqref{eq:p_1_2}, for $p < \alpha$, we have
\begin{align*}
    \varepsilon_{\text{stab}} (\mathcal{A}_{\text{cont}})
    &\leq  \sup_{X\cong \hat{X}}\sup_{x \in \mathcal{X}} \frac{2|x|^{p+2}}{\pi\alpha \|\hat{X}\|_2^2}\Gamma(p+1) \cos\left( \frac{(p-1)\pi}{2}\right)  \left( \frac{n}{\alpha \| X\|_2^2}\right)^{\frac{p}{\alpha}} \int_{0}^{\infty}t^{-p/\alpha} e^{-t}~\rmd t \\
    &=  \sup_{X\cong \hat{X}}\sup_{x \in \mathcal{X}} \frac{2|x|^{p+2}}{\pi \alpha \|\hat{X}\|_2^2}\Gamma(p+1) \cos\left( \frac{(p-1)\pi}{2}\right)  \left( \frac{n}{\alpha \| X\|_2^2}\right)^{\frac{p}{\alpha}} \Gamma \left( 1 - \frac{p}{\alpha}\right).
\end{align*}
If we have $|x| > R $ for any $x\sim P_X$ with probability $\delta_1$ and $\|X\|_2^2 \leq \sigma^2 n$ for any $X$ sampled uniformly at random from the set $\mathcal{X}_n$ with probability $\delta_2$, then with probability at least $1- \delta_1 - 2\delta_2$, the following holds:
\begin{align*}
  \varepsilon_{\text{stab}} (\mathcal{A}_{\text{cont}})
    &\leq    \frac{2R^{p+2}}{\pi\sigma^2 n }\Gamma(p+1) \cos\left( \frac{(p-1)\pi}{2}\right) \frac{1}{\alpha} \left( \frac{1}{\alpha \sigma^2 }\right)^{\frac{p}{\alpha}} \Gamma \left( 1 - \frac{p}{\alpha}\right)= c(\alpha).
\end{align*}
Now, consider the function,
\begin{align*}
    \Lambda(\alpha ) =  \frac{1}{\alpha} \left( \frac{1}{\alpha \sigma^2 }\right)^{\frac{p}{\alpha}} \Gamma \left( 1 - \frac{p}{\alpha}\right).
\end{align*}
We can compute that
\begin{align*}
    \partial_{\alpha} \log \Lambda(\alpha) = \frac{p}{\alpha^2} \left[ \log \alpha  + \log \sigma^2 -1 - \frac{\alpha}{p} +  \phi\left( 1 - \frac{p}{\alpha}\right) \right], 
\end{align*}
where $\phi$ is the digamma function. For any arbitrary $\alpha_0$, if we choose $$\sigma^2 \geq \exp\left( 1 + \frac{2}{p} - \log \alpha_0 -  \phi\left( 1 - \frac{p}{\alpha_0}\right) \right),$$ then $\partial_{\alpha} \log \Lambda(\alpha) >0 $ for  $\alpha \in [\alpha_0,2)$. Hence, for all $\alpha_1,\alpha_2 \in [\alpha_0,2)$,  $\alpha_1 < \alpha_2 \Rightarrow \Lambda(\alpha_1) \leq \Lambda(\alpha_2)$. This proves that $c(\alpha)$ is an increasing map in $\alpha$. \\

(iii). Now we show that the bound on the stability is tight for some appropriately chosen $P_X$. Note that we have,
\begin{align*}
    \varepsilon_{\text{stab}} (\mathcal{A}_{\text{cont}})
    &= \sup_{X\cong \hat{X}}\sup_{x \in \mathcal{X}}  \left[\frac{1}{(2\pi)} \int_{\mathbb{R}} \left|\psi_{\theta}(u) - \psi_{\hat{\theta}}(u)\right| \left|\int_{-\infty}^{\infty}|\theta x|^p e^{-\imagi  u \theta}~\rmd \theta \right|~\rmd u \right] \\
    &=\sup_{X\cong \hat{X}}\sup_{x \in \mathcal{X}}  \underbrace{\left[\frac{1}{\pi} \int_{0}^\infty \left|\psi_{\theta}(u) - \psi_{\hat{\theta}}(u)\right| \left|\int_{-\infty}^{\infty}|\theta x|^p e^{-\imagi  u \theta}~\rmd \theta \right|~\rmd u \right]}_{:=\Phi_{x,X,\hat{X}}(\mathcal{A}_{\text{cont}})}.
\end{align*}

Hence, let us consider $u\geq 0$.
From equation~\eqref{eq:mid_1d_char}, we have
\begin{align*}
    \left|\psi_{\theta}(u) - \psi_{\hat{\theta}}(u)\right| &= \Bigg|\exp\left(- \int_{0}^{\infty} \left|e^{-s\frac{1}{n}\|X\|_2^2 }u\right|^{\alpha} \rmd s\right)
    \\
    &\qquad \qquad \cdot
    \left(1 - \exp\left( \int_{0}^{\infty} \left|e^{-s\frac{1}{n}\|{X}\|_2^2 }u\right|^{\alpha} \rmd s 
    - \int_{0}^{\infty} \left|e^{-s\frac{1}{n}\|\hat{X}\|_2^2 }u\right|^{\alpha} \rmd s\right)  \right) \Bigg| \\
    &= \exp\left(- \int_{0}^{\infty} \left|e^{-s\frac{1}{n}\|X\|_2^2 }u\right|^{\alpha} \rmd s\right)\left(1 - \exp\left( - |u|^{\alpha}\left[ \frac{n}{\alpha \|\hat{X}\|_2^2 }   - \frac{n}{\alpha \|{X}\|_2^2 } \right] \right)  \right)  \\
    &=\exp\left(- |u|^\alpha \frac{n}{\alpha \|X\|_2^2}\right)\left(1 - \exp\left( - |u|^{\alpha}\left[ \frac{n \overbrace{(x_i^2 - \tilde{x}_i^2)}^{:=\delta}}{\alpha \|X\|_2^2 \|\hat{X}\|_2^2 }    \right] \right)  \right) \\
    &=\exp\left(- |u|^\alpha \frac{n}{\alpha \|X\|_2^2}\right) \left[ \sum_{k=1}^{\infty} \frac{(-1)^{k+1}}{k!} |u|^{k \alpha}\left( \frac{ n \delta}{\alpha \|X \|_2^2 \| \hat{X}\|_2^2} \right)^{k} \right].
\end{align*}
Hence,
\begin{align}
    \Phi_{x,X,\hat{X}}(\mathcal{A}_{\text{cont}}) &=    \left[\frac{1}{\pi} \int_{0}^\infty |\psi_{\theta}(u) - \psi_{\hat{\theta}}(u)| \left|\int_{-\infty}^{\infty}|\theta x|^p e^{-\imagi  u \theta}~\rmd \theta \right|~\rmd u \right] \notag \\
    &=   \frac{2|x|^p}{\pi}  \Gamma(p+1) \cos\left( \frac{(p+1)\pi}{2}\right) \int_{0}^{\infty} \exp\left(- u^\alpha \frac{n}{\alpha \|X\|_2^2}\right) \notag \\
    & \qquad \qquad \qquad \qquad \qquad \cdot \left[ \sum_{k=1}^{\infty} \frac{(-1)^{k+1}}{k!} u^{k \alpha-p - 1}\left( \frac{ n \delta}{\alpha \|X \|_2^2 \| \hat{X}\|_2^2} \right)^{k} \right] \rmd u. \label{eq:intermediate_lower_1_d}
\end{align}
To simplify the above term, we do need to compute the integral: 
\begin{align*}
    \int_{0}^{\infty} \exp\left( - \frac{u^\alpha n}{\alpha \| X\|_2^2} \right) \frac{u^{k\alpha}}{u^{p+1}} ~\rmd u.
\end{align*}
Let us use the substitution:
\begin{align*}
    \frac{u^\alpha n}{\alpha \| X\|_2^2} = t
\end{align*}
such that
\begin{align*}
\rmd t =   \frac{u^{\alpha - 1} n}{  \|X\|_2^2} \rmd u \text{~~~~and ~} u = \left(\frac{\alpha \| X\|_2^2}{n}\right)^{1/\alpha} t^{1/\alpha}.
\end{align*}
Hence,
\begin{align*}
    \int_{0}^{\infty} \exp\left( - \frac{u^\alpha n}{\alpha \|X\|_2^2} \right) \frac{u^{k\alpha}}{u^{p+1}} ~\rmd u &= \int_{0}^{\infty} e^{-t} u^{\alpha(k-1) - p} \frac{\|X\|_2^2}{n} ~\rmd t \\
    &= \frac{\|X\|_2^2}{n} \left(\frac{\alpha \|X \|_2^2}{n}\right)^{k-1 - \frac{p}{\alpha}} \int_{0}^{\infty} e^{-t} t^{k-1 - \frac{p}{\alpha}} \rmd t \\
    & = \frac{\|X\|_2^2}{n} \left(\frac{\alpha \|X \|_2^2}{n}\right)^{k-1 - \frac{p}{\alpha}} \Gamma\left( k - \frac{p}{\alpha}\right).
\end{align*}
This implies, 
\begin{align*}
    \Phi_{x,X,\hat{X}}(\mathcal{A}_{\text{cont}}) &=  \frac{2|x|^p}{\pi}  \Gamma(p+1) \cos\left( \frac{(p+1)\pi}{2}\right)\int_{0}^{\infty}  \exp\left( - \frac{|u|^\alpha n}{\alpha \| X\|_2^2} \right)  \\ 
    &\qquad \qquad \qquad \qquad \qquad \qquad \cdot\sum_{k=1}^{\infty} \left[ \frac{(-1)^{k+1}}{k!} u^{k \alpha - p - 1}\left( \frac{ n \delta}{\alpha \|X \|_2^2 \| \hat{X}\|_2^2} \right)^{k} \right] ~\rmd u \\
    &=  \frac{2|x|^p}{\pi}  \Gamma(p+1) \cos\left( \frac{(p+1)\pi}{2}\right) 
    \\
    &\qquad \qquad \qquad \qquad \qquad \qquad \cdot\sum_{k=1}^{\infty}\left[\frac{(-1)^{k+1}}{k!} \frac{\delta^k}{\alpha \|\hat{X}\|_2^{2k}} \left( \frac{n}{\alpha \|X\|_2^2} \right)^{\frac{p}{\alpha}} \Gamma\left(k - \frac{p}{\alpha}\right)\right] \\
    &=   \frac{2|x|^p}{\pi}  \Gamma(p+1) \cos\left( \frac{(p+1)\pi}{2}\right) \sum_{k=1}^{\infty} (-1)^{k+1} \gamma_k,
\end{align*}
where $\gamma_k = \frac{1}{k!}\frac{\delta^k}{\alpha \|\hat{X}\|_2^{2k}} \left( \frac{n}{\alpha \|X\|_2^2} \right)^{\frac{p}{\alpha}} \Gamma\left(k - \frac{p}{\alpha}\right)$.
Let us now compute $\frac{\gamma_{k+1}}{\gamma_k}$. We have  
\begin{align*}
   \frac{\gamma_{k+1}}{\gamma_k} = \frac{\delta}{\|\hat{X}\|_2^2} \frac{\Gamma\left(k+1 - \frac{p}{\alpha}\right)}{(k+1)\Gamma\left(k - \frac{p}{\alpha}\right)} = \frac{\delta}{\|\hat{X}\|_2^2} \frac{k-\frac{p}{\alpha}}{k+1} = \frac{\delta}{\|\hat{X}\|_2^2} \left( 1 - \frac{1+\frac{p}{\alpha}}{k+1}\right).
\end{align*}
Hence,   we have the following,
\begin{align}
    \Phi_{x,X,\hat{X}}(\mathcal{A}_{\text{cont}}) &=   \frac{2|x|^p}{\pi}  \Gamma(p+1) \cos\left( \frac{(p+1)\pi}{2}\right) \left[ \gamma_1\sum_{k=1}^{\infty} \left(-\frac{\delta}{\|\hat{X}\|_2^2}\right)^{k-1}  \right. \notag \\
   &\qquad \qquad \qquad \left.- \gamma_1\left(1+\frac{p}{\alpha}\right)\sum_{k=1}^{\infty} \left(-\frac{\delta}{\|\hat{X}\|_2^2}\right)^{k-1}\prod_{j=1}^{k-1} \left(\frac{1}{j+1} \right)\right] \notag \\
   &\geq   \frac{2|x|^p}{\pi}  \Gamma(p+1) \cos\left( \frac{(p+1)\pi}{2}\right)   \gamma_1\sum_{k=1}^{\infty} \left(-\frac{\delta}{\|\hat{X}\|_2^2}\right)^{k-1} \notag \\
   &=  \frac{2|x|^p}{\pi} \Gamma(p+1) \cos\left( \frac{(p+1)\pi}{2}\right) \frac{\gamma_1}{1+ \frac{\delta}{\|\hat{X}\|_2^2}} \notag \\
   &\geq   \frac{2|x|^p}{\pi}  \Gamma(p+1) \cos\left( \frac{(p+1)\pi}{2}\right) \frac{\delta}{\alpha \|\hat{X}\|_2^{2}} \left( \frac{n}{\alpha \|X\|_2^2} \right)^{\frac{p}{\alpha}} \Gamma\left(1 - \frac{p}{\alpha}\right). \label{eq:just_before_final_lower_bo} 
\end{align}
Now, let us assume that $P_X$ is a distribution with discrete support in range $\sigma^2$ to $R$ with $C$ number of support points equally spaced. Hence, with probability $(1 - 1/C)$, $\delta \geq c$ for some positive constant $c$. Hence, with with high probability,
\begin{align*}
    \Phi_{x,X,\hat{X}}(\mathcal{A}_{\text{cont}}) \geq  \frac{2\sigma^{2p}c}{R^2 \pi  n}  \Gamma(p+1) \cos\left( \frac{(p+1)\pi}{2}\right) \frac{1}{\alpha }\left( \frac{1}{\alpha R^2} \right)^{\frac{p}{\alpha}} \Gamma\left(1 - \frac{p}{\alpha}\right).
\end{align*}
This completes the proof. 
\end{proof}

\section{Proofs for Least-Square in $d$-Dimension} \label{ap:proof_d_dim}

In this section, we provide the proofs for least-square in $d$-dimension.
We start by proving the following lemma, relating the characteristic functions of the two distributions.

\begin{lemma}
\label{lem:dd_char_func}
For two matrices $X \in \mathbb{R}^{n\times d}$ and $\hat{X} \in \mathbb{R}^{n \times d}$ as defined earlier, the absolute value of difference between the characteristic functions of the stationary distributions at any $u \in \mathbb{R}^d$ corresponding to $d$-dimensional rotation invariant  processes in equations~\eqref{eq:char_sde_1} and \eqref{eq:char_sde_2}  is bounded as 
   \begin{align*}
    \left|\psi_{\theta}(u) - \psi_{\hat{\theta}}(u) \right| &\leq \frac{2(\sigma_1+\sigma_2)\|u\|_{2}^{\alpha}}{n \alpha^2 \sigma_{\min}^2}  \exp\left(- \frac{\| u\|_{2}^{\alpha}}{\alpha \sigma_{\min}} \right),
\end{align*}
where $\sigma_{\min}$ is the smaller of the smallest of singular values of the matrices $\frac{1}{n}X^\top X$ and $\frac{1}{n}\hat{X}^\top \hat{X}$, and $x_i x_i^\top - \tilde{x}_i {\tilde{x}_i}^\top = \sigma_1 v_1 v_1^\top + \sigma_2 v_2 v_2^\top$ where $v_1$ and $v_2$ are orthogonal vectors.
\end{lemma}

\begin{proof}
We can compute that
\begin{align}
    &\left|\psi_{\theta}(u)- \psi_{\hat{\theta}}(u)\right| 
    \notag\\
    &= \left| \exp\left(- \int_{0}^{\infty} \left\|e^{-s\frac{1}{n} X^\top X }u\right\|_{2}^{\alpha} \rmd s\right) - \exp\left(- \int_{0}^{\infty} \left\|e^{-s\frac{1}{n} \hat{X}^\top\hat{X}  }u\right\|_{2}^{\alpha} \rmd s\right) \right| \notag \\
    &\leq \underbrace{\exp\left(- \int_{0}^{\infty} \left\|e^{-s\frac{1}{n} X^\top X }u\right\|_{2}^{\alpha} \rmd s\right)}_{:=B} \underbrace{\left| \int_{0}^{\infty} \left\|e^{-s\frac{1}{n} X^\top X }u\right\|_{2}^{\alpha} \rmd s - \int_{0}^{\infty} \left\|e^{-s\frac{1}{n} \hat{X}^\top \hat{X}  }u\right\|_{2}^{\alpha} \rmd s \right|}_{:=C}. \notag
\end{align}
We first consider the term $C$ in the above equation. From Lemma~\ref{lem:ab_alpha}, we have for two positive numbers $a$ and $b$, and for some $1\leq \alpha \leq 2$, we have
\begin{align*}
    |a^\alpha - b^\alpha | \leq |a-b|(a^{\alpha-1}+b^{\alpha-1}).
\end{align*}
Now,
\begin{align*}
    C &= \left| \int_{0}^{\infty} \left\|e^{-s\frac{1}{n} X^\top X }u\right\|_{2}^{\alpha} \rmd s - \int_{0}^{\infty} \left\|e^{-s\frac{1}{n} \hat{X}^\top \hat{X} }u\right\|_{2}^{\alpha} \rmd s \right| \\
    &=\left| \int_{0}^{\infty} \left( \left\|e^{-s\frac{1}{n} X^\top X }u\right\|_{2}^{\alpha} - \left\|e^{-s\frac{1}{n} \hat{X}^\top \hat{X} }u\right\|_{2}^{\alpha} \right) \rmd s \right| \\
    &\leq \int_{0}^{\infty} \left| \left \|e^{-s\frac{1}{n} X^\top X }u\right\|_{2} - \left\|e^{-s\frac{1}{n} \hat{X}^\top \hat{X} }u\right\|_2 \right| \left( \left \|e^{-s\frac{1}{n} X^\top X }u\right\|_{2}^{\alpha-1} + \left\|e^{-s\frac{1}{n} \hat{X}^\top \hat{X} }u\right\|_2^{\alpha-1} \right) \rmd s \\
    &\leq \int_{0}^{\infty}  \left \|e^{-s\frac{1}{n} X^\top X }u - e^{-s\frac{1}{n} \hat{X}^\top \hat{X} }u\right\|_2  \left( \left \|e^{-s\frac{1}{n} X^\top X }u\right\|_{2}^{\alpha-1} + \left\|e^{-s\frac{1}{n} \hat{X}^\top \hat{X}}u\right\|_2^{\alpha-1} \right) \rmd s \\
    &= \int_{0}^{\infty}  \left \|e^{-s\frac{1}{n} X^\top X }\left(I - e^{s\frac{1}{n} X^\top X - s\frac{1}{n} \hat{X}^\top \hat{X} }\right)u\right\|_2  
    \\
    &\qquad\qquad\qquad\qquad\cdot\left( \left \|e^{-s\frac{1}{n} X^\top X }u\right\|_{2}^{\alpha-1} + \left\|e^{-s\frac{1}{n} \hat{X}^\top \hat{X}}u\right\|_2^{\alpha-1} \right) \rmd s.
\end{align*}
Now, we have from the definitions, 
\begin{align*}
   X^\top X - \hat{X}^\top \hat{X} = x_i x_i^\top - \tilde{x}_i \tilde{x}_i^\top.
\end{align*}
Hence,
\begin{align*}
    C &\leq \int_{0}^{\infty}  \left \|e^{-s\frac{1}{n} X^\top X }\left(I - e^{s\frac{1}{n} \left(x_i x_i^\top - \tilde{x}_i \tilde{x}_i^\top\right) }\right)u\right\|_2  \left( \left \|e^{-s\frac{1}{n} X^\top X }u\right\|_{2}^{\alpha-1} + \left\|e^{-s\frac{1}{n} \hat{X}^\top \hat{X}}u\right\|_2^{\alpha-1} \right) \rmd s .
\end{align*}
We recall that the 2-norm  $\|\cdot\|_{2}$ for a matrix $D \in \mathbb{R}^{d\times d}$ is defined as follows:
\begin{align*}
    \|D\|_{2} = \sup_{u\in\mathbb{R}^{d}: \|u\|_{2}=1} \|Du\|_2
\end{align*}
for $u \in \mathbb{R}^d$.
We notice that $$ \min\left(\frac{1}{n}\left\|\left(X^\top X\right) u \right\|_2, \frac{1}{n}\left\|\left(\hat{X}^\top \hat{X}\right) u \right\|_2 \right) \geq \sigma_{\min}\|u\|_{2}.$$ 
Hence,
\begin{align}
    C &\leq \int_{0}^{\infty}  \left \|e^{-s\frac{1}{n} X^\top X }u\right\|_{2}\left\|I - e^{-s\frac{1}{n} \left(\tilde{x}_i \tilde{x}_i^\top -  {x}_i x_{i}^{\top}\right) }\right\|_{2}  
    \\
    &\qquad\qquad\qquad\qquad\cdot\left( \left \|e^{-s\frac{1}{n} X^\top X }u\right\|_{2}^{\alpha-1} + \left\|e^{-s\frac{1}{n} \hat{X}^\top \hat{X}}u\right\|_2^{\alpha-1} \right) \rmd s \notag \\
    &=\int_{0}^{\infty} \underbrace{\left\|I - e^{s\frac{1}{n} \left(x_i x_i^\top - \tilde{x_i}\tilde{x_i}^\top\right) }\right\|_{2} }_{:=D} \underbrace{\left( \left \|e^{-s\frac{1}{n} X^\top X }u\right\|_{2}^{\alpha} +  \left \|e^{-s\frac{1}{n} X^\top X }u\right\|_{2} \left\|e^{-s\frac{1}{n} \hat{X}^\top \hat{X}}u\right\|_2^{\alpha-1} \right)}_{:=E} \rmd s. \label{eq:upper_bnd_C}
\end{align}

Let us consider the term $D$ in \eqref{eq:upper_bnd_C} first. $\tilde{x}_i {\tilde{a}_i}^\top  - x_i x_i^\top$ is a rank 2 matrix. Consider the two non-zero eigenvalues of this matrix are $\sigma_1$ and $\sigma_2$. Hence, $\tilde{x}_i {\tilde{x}_i}^\top  - x_i x_i^\top = \sigma_1 v_1 v_1^\top + \sigma_2 v_2 v_2^\top$ where $v_1$ and $v_2$ are the eigenvectors.  Then,
\begin{align*}
   I - e^{-s\frac{1}{n} \left(\tilde{x}_i {\tilde{a}_i}^\top  - x_i x_i^\top\right) } = \left(1 - e^{\frac{-s\sigma_1}{n}}\right) v_1 v_1^\top  + \left(1 - e^{\frac{-s\sigma_2}{n}}\right) v_2v_2^\top.
\end{align*}
Hence,
\begin{align*}
    D = \left\|I - e^{-s\frac{1}{n} \left(\tilde{x}_i {\tilde{a}_i}^\top  - x_i x_i^\top\right) }\right\|_{2} &\leq \left\|\left(1 - e^{\frac{-s\sigma_1}{n}}\right) v_1 v_1^\top \right\|_{2} + \left\|\left(1 - e^{\frac{-s\sigma_2}{n}}\right) v_2v_2^\top\right\|_{2} \\
    &\leq \frac{s\sigma_1}{n}  + \frac{s\sigma_2}{n}, 
\end{align*}
where $v_1$ and $v_2$ are orthogonal vectors with $\|v_1\|_2= \|v_2\|_2 = 1$ and $v_1^\top v_2 = 0$. By definition, we have 
\begin{align*}
    \frac{1}{n} \left\| X^\top X u\right\|_2 \geq  \sigma_{\min}\| u\|_2, \quad\text{and}\quad
    \frac{1}{n} \left\| \hat{X}^\top \hat{X} u\right\|_2 \geq  \sigma_{\min}\| u\|_2.
\end{align*}
This gives,
\begin{align*}
    \left\| e^{-s\frac{1}{n}X^\top X}u \right\|_2 \leq e^{-s \sigma_{\min}},\quad \text{and}\quad
    \left\| e^{-s\frac{1}{n}\hat{X}^\top \hat{X}}u \right\|_2 \leq e^{-s \sigma_{\min}},
\end{align*}
which implies that the $E$ term in \eqref{eq:upper_bnd_C} can be bounded as:
\begin{align}
    E \leq 2 \| u\|_2^{\alpha} e^{-s \alpha \sigma_{\min}}.
\end{align}
Therefore,
\begin{align}
    C &\leq \frac{2(\sigma_1+\sigma_2)\|u\|_{2}^{\alpha}}{n} \int_{0}^{\infty}  s e^{-s \alpha \sigma_{\min}} \rmd s = \frac{2(\sigma_1+\sigma_2)\|u\|_{2}^{\alpha}}{n \alpha^2 \sigma_{\min}^2}. \label{eq:final_C_bound}
\end{align}

Hence, we have
\begin{align}
    |\psi_1(u) - \psi_2(u)| &\leq \frac{2(\sigma_1+\sigma_2)\|u\|_{2}^{\alpha}}{n \alpha^2 \sigma_{\min}^2}  \exp\left(- \int_{0}^{\infty} \left\|e^{-s\frac{1}{n} X^\top X }u\right\|_{2}^{\alpha} \rmd s\right) \notag  \\
    &\leq \frac{2(\sigma_1+\sigma_2)\|u\|_{2}^{\alpha}}{n \alpha^2 \sigma_{\min}^2}  \exp\left(- \| u\|_{2}^{\alpha} \int_{0}^{\infty} e^{-s\alpha \sigma_{\min} }  \rmd s\right), \label{eq:bound_B}
\end{align}
where the last inequality is due to the definition of $\sigma_{\min}$. Hence we conclude that 
\begin{align}
    |\psi_1(u) - \psi_2(u)| &\leq \frac{2(\sigma_1+\sigma_2)\|u\|_{2}^{\alpha}}{n \alpha^2 \sigma_{\min}^2}  \exp\left(- \frac{\| u\|_{2}^{\alpha}}{\alpha \sigma_{\min}} \right),
\end{align}
which completes the proof.
\end{proof}

\begin{theorem*}[Restatement of Theorem~\ref{thm:dd_main}] Consider  the $d$-dimensional loss function $f(x) =|\theta^\top x|^p$ such that $\theta, x \in \mathbb{R}^d$.  For any $x\sim P_X$ if $\|x\|_2 \leq R $,  for any $X$ sampled uniformly at random from the set $\mathcal{X}_n$, if  $\frac{1}{n} \| X^\top X u\|_2 \geq  \sigma_{\min} \|u\|_2$ for $u\in \mathbb{R}^d$ and for any two $X\cong \hat{X}$ sampled from $\mathcal{X}_n$   generating two stochastic process given by SDEs in equations~\eqref{eq:sde_invar_1} and \eqref{eq:sde_invar_2}, $\|x_i x_i^\top - \tilde{x}_i \tilde{x}_i^\top \|_2  \leq 2\sigma$ holds  with high probability. Then,
\begin{enumerate}
   \item[(i)] For $\alpha \in (1,2)$, the algorithm is not stable when $p \in [\alpha,2]$ i.e. $\varepsilon_{\text{stab}(\mathcal{A}_{\text{cont}})}$ diverges. When $\alpha = p =2$ then with high probability $ \varepsilon_{\text{stab}} (\mathcal{A}_{\text{cont}}) \leq \frac{2R^2}{\pi}  \frac{\sigma}{n   \sigma_{\min}^2}$.
    \item[(ii)] For $p\in [1,\alpha)$, we have the following upper bound for the algorithmic stability,
    \begin{align*}
         \varepsilon_{\text{stab}} (\mathcal{A}_{\text{cont}})&=  \frac{8R^p}{\pi} \frac{\sigma}{n\alpha^2 \sigma_{\min} } \Gamma(p+1) \cos\left( \frac{(p-1)\pi}{2}\right)  \left(\frac{1}{\alpha \sigma_{\min}}\right)^{\frac{p}{\alpha}} \Gamma\left(1-\frac{p}{\alpha}\right)= c(\alpha),
    \end{align*}
    which holds with high probability. Furthermore, for some $\alpha_0 > 1$, if we have $$\sigma_{\min} \geq \exp\left( 1 + \frac{4}{p} - \log \alpha_0 -  \phi\left( 1 - \frac{p}{\alpha_0}\right) \right),$$
    where $\phi$ is the digamma function, then the map $\alpha\rightarrow c(\alpha)$ is increasing for $\alpha \in [\alpha_0,2)$.
    \item[iii] The stability bound is tight in $\alpha$. 
    \end{enumerate}
\end{theorem*}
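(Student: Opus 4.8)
The plan is to run the Fourier-domain stability machinery of \eqref{eq:stab_char} with the characteristic-function estimate of Lemma~\ref{lem:dd_char_func} playing the role that Lemma~\ref{lem:1d_char_func} played in the one-dimensional case, so that the whole argument parallels the proof of Theorem~\ref{thm:1d_main} once the correct Fourier transform is in hand. Concretely I would start from
\begin{align*}
\varepsilon_{\text{stab}}(\mathcal{A}_{\text{cont}}) = \sup_{X\cong\hat{X}}\sup_{x\in\mathcal{X}}\frac{1}{(2\pi)^d}\int_{\mathbb{R}^d}\left|\psi_{\theta}(u)-\psi_{\hat{\theta}}(u)\right|\left|\mathcal{F}\!\left[|x^\top\cdot|^p\right](u)\right|\rmd u,
\end{align*}
and the first, and most delicate, step is to compute $\mathcal{F}[|x^\top\cdot|^p]$ in $d$ dimensions.

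The key observation is that $\theta\mapsto|x^\top\theta|^p$ is a \emph{ridge} (plane-wave) function: it is constant on every hyperplane orthogonal to $x$. Setting $\hat{x}:=x/\|x\|_2$ and decomposing $u = w\hat{x}+u_\perp$ with $w=\langle u,\hat{x}\rangle$ and $u_\perp\perp\hat{x}$, the transform factorizes and concentrates on the line $\mathbb{R}\hat{x}$,
\begin{align*}
\mathcal{F}\!\left[|x^\top\cdot|^p\right](u) = (2\pi)^{d-1}\delta^{(d-1)}(u_\perp)\,\|x\|_2^p\,\mathcal{F}_1\!\left[|\cdot|^p\right](w),
\end{align*}
where $\mathcal{F}_1[|\cdot|^p](w)=2\Gamma(p+1)\cos((p+1)\pi/2)|w|^{-p-1}$ is the one-dimensional transform of \eqref{eq:1d_fourier}. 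Since the Lemma~\ref{lem:dd_char_func} bound on $|\psi_{\theta}(w\hat{x})-\psi_{\hat{\theta}}(w\hat{x})|$ depends on $w$ only through $\|w\hat{x}\|_2=|w|$, integrating the $(d-1)$-dimensional Dirac mass against the remaining integrand collapses the $d$-dimensional integral to a single radial integral over $w\in\mathbb{R}$ and cancels $(2\pi)^{d-1}$ against $(2\pi)^d$. From here the computation mirrors the one-dimensional proof: I would insert $|\psi_{\theta}-\psi_{\hat{\theta}}|\le \frac{2(\sigma_1+\sigma_2)|w|^\alpha}{n\alpha\sigma_{\min}}e^{-|w|^\alpha/(\alpha\sigma_{\min})}$, evaluate $\int_0^\infty w^{\alpha-p-1}e^{-w^\alpha/(\alpha\sigma_{\min})}\rmd w$ via the substitution $t=w^\alpha/(\alpha\sigma_{\min})$ to obtain $\Gamma(1-p/\alpha)$ together with explicit powers of $\alpha\sigma_{\min}$, and finally apply the hypotheses $\|x\|_2\le R$ and $\sigma_1+\sigma_2\le\|x_ix_i^\top-\tilde{x}_i\tilde{x}_i^\top\|_2\le2\sigma$. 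The Gamma integral converges precisely when $p<\alpha$ and diverges for $p\ge\alpha$, which yields parts (i) and (ii); the borderline $\alpha=p=2$ case is treated, exactly as in one dimension, through the Dirac-delta form of $\mathcal{F}_1[|\cdot|^2]$, for which the surviving integrand is finite only at $\alpha=2$.

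For the monotonicity claim in (ii) I would isolate $\Lambda(\alpha):=\frac{1}{\alpha}(\alpha\sigma_{\min})^{-p/\alpha}\Gamma(1-p/\alpha)$ from $c(\alpha)$ and differentiate its logarithm, obtaining
\begin{align*}
\partial_\alpha\log\Lambda(\alpha)=\frac{p}{\alpha^2}\Bigl[\log\alpha+\log\sigma_{\min}-1-\tfrac{\alpha}{p}+\phi\bigl(1-\tfrac{p}{\alpha}\bigr)\Bigr],
\end{align*}
which is the one-dimensional expression with $\sigma^2$ replaced by $\sigma_{\min}$; the stated threshold on $\sigma_{\min}$ makes the bracket positive throughout $[\alpha_0,2)$, so $\alpha\mapsto c(\alpha)$ is increasing there.

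The genuinely harder part will be the matching lower bound in (iii). My plan is to construct an explicit $P_X$ for which the multidimensional problem degenerates to an effectively one-dimensional one — taking the flipped direction $x$ aligned with an eigenvector of $\frac1n X^\top X$ and a discrete, well-separated support as in the one-dimensional lower bound — so that $\psi_{\theta}-\psi_{\hat{\theta}}$ admits the same power-series expansion $\sum_k\frac{(-1)^{k+1}}{k!}(\cdots)^k$ as in \eqref{eq:intermediate_lower_1_d}; integrating term by term against the ridge transform and bounding the resulting alternating series from below by its $k=1$ term then reproduces a lower bound whose $\alpha$-dependence matches $c(\alpha)$. The main obstacle is exactly this construction: unlike in dimension one, $\frac1n X^\top X$ and $\frac1n\hat{X}^\top\hat{X}$ need not commute, so the exponent $\int_0^\infty\|e^{-sA}u\|_2^\alpha\,\rmd s$ does not factor cleanly, and one must orient the perturbation and choose the support of $P_X$ so that the characteristic-function difference stays exactly tractable while keeping $\sigma_{\min}$, $R$, and the perturbation size in the regime required by the upper bound. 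A secondary technical point, present already in the upper bound, is the rigorous justification of the Dirac-delta ridge identity and the term-by-term integration, which I would discharge by a standard mollification/limiting argument.
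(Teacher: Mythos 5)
Your proposal follows essentially the same route as the paper's proof: your ridge-function factorization of $\mathcal{F}\left[|x^\top\cdot|^p\right]$ is exactly Lemma~\ref{lem:ft_theta_x} (which the paper phrases as a change of variables $u=Av$ with $Ae_1=x/\|x\|_2$), and the collapse to a radial integral, the substitution $t=w^\alpha/(\alpha\sigma_{\min})$ yielding $\Gamma(1-p/\alpha)$, the $\alpha=p=2$ borderline via the Dirac form, and the digamma monotonicity criterion all coincide with the paper's computation. For (iii), your planned degeneration to an effectively one-dimensional problem is precisely the paper's construction: a $P_X$ with discrete, axis-aligned support making $X^\top X$ and $\hat{X}^\top\hat{X}$ simultaneously diagonal (hence commuting, which resolves exactly the non-commutativity obstacle you flag), after which the Dirac mass selects the perturbed coordinate and the alternating series from \eqref{eq:intermediate_lower_1_d} is bounded below by its $k=1$ term, as in your plan.
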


\begin{proof}
We have $d$-dimensional loss function for an $x \in \mathbb{R}^d$ sampled uniformly at random from $P_X$, $f(\theta) = |\theta^\top x|^p $ Let us denote the Fourier transform of $f$, $\mathcal{F}f(u)$ as $h(u)$. For an orthogonal matrix $A$ such that $Ae_1 = \frac{x}{\|x\|_2}$, we have from the results in Lemma~\ref{lem:ft_theta_x},  
\begin{align}
    h(Au) =  2 \|x\|_2^p (2\pi)^{d-1} \delta(u_2,\cdots,u_d) \Gamma(p+1) \cos\left( \frac{(p+1)\pi}{2}\right) \frac{1}{|u_1|^{p+1}} \quad\text{for~} p\in[1,2), \label{eq:fou_1_d}
\end{align}
and
\begin{align}
    h(Au) = 2\|x\|_2^p (2\pi)^{d-1} \delta(u_1,u_2,\cdots,u_d) \frac{2}{u_1^2} \quad\text{for~} p=2, \label{eq:fou_2_d}
\end{align}
where $\delta$ is the Dirac-delta function. Let us first consider the case when $p \in [1,2)$.  From equation~\eqref{eq:stab_char}, 
\begin{align*}
    \varepsilon_{\text{stab}} (\mathcal{A}_{\text{cont}})
    &= \sup_{X\cong \hat{X}}\sup_{x \in \mathcal{X}}  \frac{1}{(2\pi)^d} \int_{\mathbb{R}^d} \left|\psi_{\theta}(u) - \psi_{\hat{\theta}}(u)\right| |h(u)|~\rmd u  \\
    & = \sup_{X\cong \hat{X}}\sup_{x \in \mathcal{X}} \frac{1}{(2\pi)^d} \int_{\mathbb{R}^d} \left|\psi_{\theta}(u) - \psi_{\hat{\theta}}(u)\right| \left| \int_{\mathbb{R}^d} |\theta^\top x|^p e^{\imagi  u^\top \theta}~\rmd \theta\right|~\rmd u \\
    &= \sup_{X\cong \hat{X}}\sup_{x \in \mathcal{X}} \frac{1}{(2\pi)^d} \int_{\mathbb{R}^d} \frac{2(\sigma_1+\sigma_2)\|u\|_{2}^{\alpha}}{n \alpha^2 \sigma_{\min}^2}  
 \exp\left(- \frac{\| u\|_{2}^{\alpha}}{\alpha \sigma_{\min}} \right) \left| \int_{\mathbb{R}^d} |\theta^\top x|^p e^{\imagi  u^\top \theta}~\rmd \theta \right|~\rmd u.
\end{align*}
In the above equation, let us apply the change of variable $u=Av$ and use result from Lemma~\ref{lem:ft_theta_x} (equations~\eqref{eq:fou_1_d}) and we get the following, 
\begin{align*}
  \varepsilon_{\text{stab}} (\mathcal{A}_{\text{cont}})&=  \sup_{X\cong \hat{X}}\sup_{x \in \mathcal{X}} \frac{1}{(2\pi)^d} \int_{\mathbb{R}^d} \frac{2(\sigma_1+\sigma_2)\|Av\|_{2}^{\alpha}}{n \alpha^2 \sigma_{\min}^2}  \exp\left(- \frac{\| Av\|_{2}^{\alpha}}{\alpha \sigma_{\min}} \right) \left| \int_{\mathbb{R}^d} |\theta^\top x|^p e^{\imagi  (Av)^\top \theta}~\rmd \theta \right|~\rmd v \\
  &=  \sup_{X\cong \hat{X}}\sup_{x \in \mathcal{X}} \frac{1}{(2\pi)^d} \int_{\mathbb{R}^d} \frac{2(\sigma_1+\sigma_2)\|v\|_{2}^{\alpha}}{n \alpha^2 \sigma_{\min}^2}  \exp\left(- \frac{\| v\|_{2}^{\alpha}}{\alpha \sigma_{\min}} \right) \left| h(Av) \right|~\rmd v \\
  &= \sup_{X\cong \hat{X}}\sup_{x \in \mathcal{X}} \frac{1}{(2\pi)^d} \int_{\mathbb{R}^d} \left[ \left( \frac{2(\sigma_1+\sigma_2)\|v\|_{2}^{\alpha}}{n \alpha^2 \sigma_{\min}^2}  \exp\left(- \frac{\| v\|_{2}^{\alpha}}{\alpha \sigma_{\min}} \right) \right) \right. ~ \\
  &\qquad  \left.  \cdot\left(\left|  2 \|x\|_2^p (2\pi)^{d-1} \delta(v_2,\cdots,v_d) \Gamma(p+1) \cos\left( \frac{(p+1)\pi}{2}\right) \frac{1}{|v_1|^{p+1}} \right| \right)\right]~\rmd v \\
  &= \sup_{X\cong \hat{X}}\sup_{x \in \mathcal{X}} \frac{2\|x\|_2^p}{\pi} \Gamma(p+1) \cos\left( \frac{(p-1)\pi}{2}\right) \frac{(\sigma_1+\sigma_2)}{n\alpha^2 \sigma_{\min}^2} \\
  &\qquad \qquad \qquad \qquad \qquad \qquad \qquad \qquad\cdot\int_{-\infty}^{\infty}   |v_1|^\alpha \exp\left(  \frac{-|v_1|^\alpha}{\alpha \sigma_{\min}}\right)  \frac{1}{|v_1|^{p+1}} ~\rmd v_1 \\
  & = \sup_{X\cong \hat{X}}\sup_{x \in \mathcal{X}} \frac{4\|x\|_2^p}{\pi} \frac{(\sigma_1+\sigma_2)}{n\alpha^2 \sigma_{\min}^2} \Gamma(p+1) \cos\left( \frac{(p-1)\pi}{2}\right)\\
  &\qquad \qquad \qquad \qquad \qquad \qquad \qquad \qquad   \cdot\int_{0}^{\infty} v_1^{\alpha-p-1} \exp\left(  \frac{-v_1^\alpha}{\alpha \sigma_{\min}}\right)~\rmd v_1.
\end{align*}
In the above integral, by substituting $ \frac{v_1^\alpha}{\alpha \sigma_{\min}}$ with $t$ so that
\begin{align}
    \rmd t =  v_1^{\alpha-1} \frac{1}{\sigma_{\min}}\rmd v_1,~ \text{and } \frac{1}{v_1^p} = \left( \frac{1}{\alpha \sigma_{\min}}\right)^{\frac{p}{\alpha}} t^{-p/\alpha},
\end{align}
we have,
\begin{align}
 \varepsilon_{\text{stab}} (\mathcal{A}_{\text{cont}})&= \sup_{X\cong \hat{X}}\sup_{x \in \mathcal{X}} \frac{4\|x\|_2^p}{\pi} \frac{(\sigma_1+\sigma_2)}{n\alpha^2 \sigma_{\min}} \nonumber
 \\
 &\qquad\qquad\qquad\cdot\Gamma(p+1) \cos\left( \frac{(p-1)\pi}{2}\right)  \left(\frac{1}{\alpha \sigma_{\min}}\right)^{\frac{p}{\alpha}} \int_{0}^{\infty} t^{-p/\alpha} e^{-t}~\rmd t. \label{eq:p_1_2_d}
\end{align}
It is clear that, the above integral diverge for $p\geq \alpha$, hence the algorithm is not stable for $p \in [1,2)$. Now, we check the case for $p=2$. For $p=2$, we have,
\begin{align*}
    \varepsilon_{\text{stab}} (\mathcal{A}_{\text{cont}})
    &= \sup_{X\cong \hat{X}}\sup_{x \in \mathcal{X}} \frac{1}{(2\pi)^d} \int_{\mathbb{R}^d} \frac{2(\sigma_1+\sigma_2)\|u\|_{2}^{\alpha}}{n \alpha^2 \sigma_{\min}^2}  \exp\left(- \frac{\| u\|_{2}^{\alpha}}{\alpha \sigma_{\min}} \right) \left| \int_{\mathbb{R}^d} |\theta^\top x|^2 e^{\imagi  u^\top \theta}~\rmd \theta \right|~\rmd u.
\end{align*}
In the above equation, we make change of variable $u=Av$ and use result from Lemma~\ref{lem:ft_theta_x} (equations~\eqref{eq:fou_2_d}), we get the following,
\begin{align*}
  \varepsilon_{\text{stab}} (\mathcal{A}_{\text{cont}})&=  \sup_{X\cong \hat{X}}\sup_{x \in \mathcal{X}} \frac{1}{(2\pi)^d} \int_{\mathbb{R}^d} \frac{2(\sigma_1+\sigma_2)\|Av\|_{2}^{\alpha}}{n \alpha^2 \sigma_{\min}^2}  \\
  &\qquad\qquad\qquad\qquad\qquad\qquad\cdot\exp\left(- \frac{\| Av\|_{2}^{\alpha}}{\alpha \sigma_{\min}} \right) \left| \int_{\mathbb{R}^d} |\theta^\top x|^2 e^{\imagi  (Av)^\top \theta}~\rmd \theta \right|~\rmd v \\
  &=  \sup_{X\cong \hat{X}}\sup_{x \in \mathcal{X}} \frac{1}{(2\pi)^d} \int_{\mathbb{R}^d} \frac{2(\sigma_1+\sigma_2)\|v\|_{2}^{\alpha}}{n \alpha^2 \sigma_{\min}^2}  \exp\left(- \frac{\| v\|_{2}^{\alpha}}{\alpha \sigma_{\min}} \right) \left| h(Av) \right|~\rmd v  \\
  &= \sup_{X\cong \hat{X}}\sup_{x \in \mathcal{X}} \frac{2}{\pi} \int_{\mathbb{R}^d} \frac{(\sigma_1+\sigma_2)\|v\|_{2}^{\alpha}}{n \alpha^2 \sigma_{\min}^2}  \exp\left(- \frac{\| v\|_{2}^{\alpha}}{\alpha \sigma_{\min}} \right) \|x\|_2^2  \delta(v_1,v_2,\cdots,v_d) \frac{2}{v_1^2} ~\rmd v.
\end{align*}
The above integral clearly diverges for $\alpha < 2$. However, when $\alpha =2$, then
\begin{align*}
    \varepsilon_{\text{stab}} (\mathcal{A}_{\text{cont}})&\leq \frac{\|x\|_2^2}{\pi}  \frac{(\sigma_1+\sigma_2)}{n   \sigma_{\min}^2} . 
\end{align*}
Now, if $\sigma$ is the upper bound on $\sigma_1$ and $\sigma_2$ for all $X\cong \hat{X} \in \mathcal{X}_n$ and $\|x\|_2 \leq R$ for $x\sim P_X$ with high probability, then, 
\begin{align*}
    \varepsilon_{\text{stab}} (\mathcal{A}_{\text{cont}})&\leq \frac{2R^2}{\pi}  \frac{\sigma}{n   \sigma_{\min}^2},
\end{align*}
holds with high probability. This proves the part (i) of our claim. 

Next, we will prove part (ii) when $p< \alpha$. We have from equation~\eqref{eq:p_1_2_d},
\begin{align*}
 \varepsilon_{\text{stab}} (\mathcal{A}_{\text{cont}})&= \sup_{X\cong \hat{X}}\sup_{x \in \mathcal{X}} \frac{4\|x\|_2^p}{\pi} \frac{(\sigma_1+\sigma_2)}{n\alpha^2 \sigma_{\min} } \Gamma(p+1) \cos\left( \frac{(p-1)\pi}{2}\right)  \left(\frac{1}{\alpha \sigma_{\min}}\right)^{\frac{p}{\alpha}} \int_{0}^{\infty} t^{-p/\alpha} e^{-t}~\rmd t \\
 &=\sup_{X\cong \hat{X}}\sup_{x \in \mathcal{X}} \frac{4\|x\|_2^p}{\pi} \frac{(\sigma_1+\sigma_2)}{n\alpha^2 \sigma_{\min} } \Gamma(p+1) \cos\left( \frac{(p-1)\pi}{2}\right)  \left(\frac{1}{\alpha \sigma_{\min}}\right)^{\frac{p}{\alpha}} \Gamma\left(1-\frac{p}{\alpha}\right).
\end{align*}
Now, if $\sigma$ is the upper bound on $\sigma_1$ and $\sigma_2$ for all $X\cong \hat{X} \in \mathcal{X}_n$ and $\|x\|_2 \leq R$ for $x\sim P_X$ with high probability then,
\begin{align*}
 \varepsilon_{\text{stab}} (\mathcal{A}_{\text{cont}})&=  \frac{8R^p}{\pi} \frac{\sigma}{n\alpha^2 \sigma_{\min} } \Gamma(p+1) \cos\left( \frac{(p-1)\pi}{2}\right)  \left(\frac{1}{\alpha \sigma_{\min}}\right)^{\frac{p}{\alpha}} \Gamma\left(1-\frac{p}{\alpha}\right)
\end{align*}
holds with high probability.  Now, consider the function,
\begin{align*}
    \Lambda(\alpha ) =  \frac{1}{\alpha^2} \left( \frac{1}{\alpha \sigma_{\min} }\right)^{\frac{p}{\alpha}} \Gamma \left( 1 - \frac{p}{\alpha}\right).
\end{align*}
We can compute that
\begin{align*}
    \partial_{\alpha} \log \Lambda(\alpha) = \frac{p}{\alpha^2} \left[ \log \alpha  + \log \sigma_{\min} -1 - \frac{2\alpha}{p} +  \phi\left( 1 - \frac{p}{\alpha}\right) \right], 
\end{align*}
where $\phi$ is the digamma function. For any arbitrary $\alpha_0$, if we choose $$\sigma_{\min} \geq \exp\left( 1 + \frac{4}{p} - \log \alpha_0 -  \phi\left( 1 - \frac{p}{\alpha_0}\right) \right),$$ then $\partial_{\alpha} \log \Lambda(\alpha) >0 $ for  $\alpha \in [\alpha_0,2)$. Hence, for all $\alpha_1,\alpha_2 \in [\alpha_0,2)$,  $\alpha_1 < \alpha_2 \Rightarrow \Lambda(\alpha_1) \leq \Lambda(\alpha_2)$. This proves that $c(\alpha)$ is an increasing map in $\alpha$.

This completes the proof till part (ii). Now, we will prove tightness result in $\alpha$. Let us have the following construction. Consider a one-dimensional distribution $P_{X}$ supported in a ring such that the density function $\int _{A}p(x)\rmd x \leq \eta$ such that $A = \{x: |x| \geq \sqrt{\sigma_{\min}   d \log d } \text{~or~}   |x|\leq R\}$. The empirical covariance matrix $X^\top X$ is a  diagonal matrix. Hence, from the results in \citet{flatto2019dixie}. with high probability $1- \delta$, we have
\begin{align*}
    \frac{1}{n} \left\|X^\top X u \right\|_2\geq \sigma_{\min} \|u\|_2 .
\end{align*}
Exact expression for $\delta$ is given in \citet{flatto2019dixie}. Similarly, for the dataset $\hat{X}$, the similar condition holds,
\begin{align*}
    \frac{1}{n} \left\|\hat{X}^\top \hat{X} u \right\|_2\geq \sigma_{\min} \|u\|_2^2
\end{align*}
with high probability $1- \delta$.
 We have,
\begin{align}
    &\left|\psi_{\theta}(u)- \psi_{\hat{\theta}}(u)\right| 
    \notag\\
    &= \left| \exp\left(- \int_{0}^{\infty} \left\|e^{-s\frac{1}{n} X^\top X }u\right\|_{2}^{\alpha} \rmd s\right) - \exp\left(- \int_{0}^{\infty} \left\|e^{-s\frac{1}{n} \hat{X}^\top\hat{X}  }u\right\|_{2}^{\alpha} \rmd s\right) \right| \notag \\
    &=\exp\left(- \int_{0}^{\infty} \left\|e^{-s\frac{1}{n} X^\top X }u\right\|_{2}^{\alpha} \rmd s\right) \left| 1 - \exp\left(- \int_{0}^{\infty} \left|\left\|e^{-s\frac{1}{n} \hat{X}^\top \hat{X} }u\right\|_{2}^{\alpha} - \left\|e^{-s\frac{1}{n} {X}^\top {X} }u\right\|_{2}^{\alpha} \right| \rmd s\right) \right|.
\end{align}
From equation~\eqref{eq:stab_char}, 
\begin{align*}
    \varepsilon_{\text{stab}} (\mathcal{A}_{\text{cont}})
    &= \sup_{X\cong \hat{X}}\sup_{x \in \mathcal{X}}  \frac{1}{(2\pi)^d} \int_{\mathbb{R}^d} \left|\psi_{\theta}(u) - \psi_{\hat{\theta}}(u)\right| |h(u)|~\rmd u  \\
    & = \sup_{X\cong \hat{X}}\sup_{x \in \mathcal{X}} \frac{1}{(2\pi)^d} \int_{\mathbb{R}^d} \left|\psi_{\theta}(u) - \psi_{\hat{\theta}}(u)\right| \left| \int_{\mathbb{R}^d} |\theta^\top x|^p e^{\imagi  u^\top \theta}~\rmd \theta\right|~\rmd u.
\end{align*}
In the above equation, let us apply the change of variable $u=Av$ where $A$ is the orthogonal matrix defined earlier and we get the following, 
\begin{align*}
 \varepsilon_{\text{stab}} (\mathcal{A}_{\text{cont}})
    &=    \sup_{X\cong \hat{X}}\sup_{x \in \mathcal{X}}  \frac{1}{(2\pi)^d} \int_{\mathbb{R}^d} \left|\psi_{\theta}(Av) - \psi_{\hat{\theta}}(Av)\right| \left| \int_{\mathbb{R}^d} |\theta^\top x|^p e^{\imagi  (Av)^\top \theta}~\rmd \theta \right|~\rmd v \\
    &=\sup_{X\cong \hat{X}}\sup_{x \in \mathcal{X}}  \frac{1}{(2\pi)^d} \int_{\mathbb{R}^d} \left|\psi_{\theta}(Av) - \psi_{\hat{\theta}}(Av)\right| \left| h(Av) \right|~\rmd v 
\end{align*}
Since, $A$ is orthogonal matrix, we can see that 
\begin{align*}
     \left|\psi_{\theta}(Av) - \psi_{\hat{\theta}}(Av)\right| =  \left|\psi_{\theta}(v) - \psi_{\hat{\theta}}(v)\right|.
\end{align*}
Hence,
\begin{align*}
   \varepsilon_{\text{stab}} (\mathcal{A}_{\text{cont}})
    &=  \sup_{X\cong \hat{X}}\sup_{x \in \mathcal{X}}  \frac{1}{(2\pi)^d} \int_{\mathbb{R}^d} \left|\psi_{\theta}(v) - \psi_{\hat{\theta}}(v)\right| \left| h(Av) \right|~\rmd v \\
    &=\sup_{X\cong \hat{X}}\sup_{x \in \mathcal{X}}  \frac{1}{(2\pi)^d} \int_{\mathbb{R}^d} \left|\psi_{\theta}(v) - \psi_{\hat{\theta}}(v)\right|   \Bigg(\Bigg|  2 \|x\|_2^p (2\pi)^{d-1} \delta(v_2,\cdots,v_d) 
    \\
    &\qquad\qquad\qquad\qquad\qquad\qquad\qquad\qquad\cdot\Gamma(p+1) \cos\left( \frac{(p+1)\pi}{2}\right) \frac{1}{|v_1|^{p+1}} \Bigg| \Bigg) \rmd v.
\end{align*}
Let us denote
\begin{align*}
    \Phi_{x,X,\hat{X}}(\mathcal{A}_{\text{cont}}) &:=\frac{1}{(2\pi)^d} \int_{\mathbb{R}^d} \left|\psi_{\theta}(v) - \psi_{\hat{\theta}}(v)\right|   \Bigg(\Bigg|  2 \|x\|_2^p (2\pi)^{d-1} \delta(v_2,\cdots,v_d)  \\
    &\qquad\qquad\qquad\qquad\qquad\qquad\qquad\qquad\cdot\Gamma(p+1) \cos\left( \frac{(p+1)\pi}{2}\right) \frac{1}{|v_1|^{p+1}} \Bigg| \Bigg) \rmd v.
\end{align*}
Now, we use the property of Dirac-delta function. 
From our construction, $X^\top X$ and $\hat{X}^\top \hat{X}$ are diagonal matrices. Let us denote $X^\top X = \text{diag}(a_1, a_2, \cdots , a_d)$. Similarly, we denote $\hat{X}^\top \hat{X} = \text{diag}(\hat{a}_1, \hat{a}_2, \cdots , \hat{a}_d)$. 
 Hence,
we have
\begin{align*}
   \exp\left(- \int_{0}^{\infty} \left\|e^{-s\frac{1}{n} X^\top X }v \right\|_{2}^{\alpha} \rmd s\right) &= \exp\left( - \int_{0}^{\infty} \left(\sum_{i=1}^d e^{-2 (s/n) a_i} v_i^2 \right)^{\frac{\alpha}{2}}\rmd s \right).
\end{align*}

From the construction, the matrix $X^\top X$ and $\hat{X}^\top X$ are both diagonal and differ at two diagonal elements with probability $(1-1/d)$. They differ at one diagonal element with probability $1/d$. Let's assume that $x_i$ has non-zero element at dimension 1 and $\tilde{x}_i$ has non-zero element either at dimension 1 or at 2 (without loss of generality). Hence, with high probability,
\begin{align*}
   & \left| 1 - \exp\left(- \int_{0}^{\infty} \left|\left\|e^{-s\frac{1}{n} \hat{X}^\top \hat{X} }u\right\|_{2}^{\alpha} - \left\|e^{-s\frac{1}{n} {X}^\top {X} }u\right\|_{2}^{\alpha} \right| \rmd s\right) \right| \\
    &= 1 - \exp\left( - \int_{0}^{\infty}\left| \left(\sum_{i=1}^d e^{-2(s/n)a_i}v_i^2\right)^{\alpha/2} - \left(\sum_{i=1}^d e^{-2(s/n)\hat{a}_i}v_i^2\right)^{\alpha/2} \right| \rmd s\right)  
\end{align*}

Combining everything together and using the property of Dirac-delta function we get,
\begin{align*}
    \Phi_{x,X,\hat{X}}(\mathcal{A}_{\text{cont}})
    &=  \frac{1}{2\pi } \int_{-\infty}^{\infty}\Bigg[ \exp\left( - \int_{0}^{\infty}    e^{- (s\alpha/n)  a_1} |v_1|^\alpha  \rmd s \right) 
    \\
    &\qquad\qquad\qquad\cdot\left[ 1 - \exp\left(- \int_{0}^{\infty} \left(e^{-(s\alpha/n) a_1} - e^{-(s\alpha/n) \tilde{a}_1}\right) |v_1|^\alpha \rmd s \right) \right]
    \\
    &\qquad\qquad\qquad\qquad\qquad\cdot  \Bigg(\Bigg|  2 \|x\|_2^p     \Gamma(p+1) \cos\left( \frac{(p+1)\pi}{2}\right) \frac{1}{|v_1|^{p+1}} \Bigg| \Bigg)\Bigg] \rmd v_1 \\
    &=   \frac{2\|x\|_2^p}{\pi } \Gamma(p+1) \cos\left( \frac{(p+1)\pi}{2}\right) \int_{0}^{\infty}\Bigg[ \exp\left( - \int_{0}^{\infty}    e^{- (s\alpha/n)  a_1} v_1^\alpha  \rmd s \right) 
    \\
    &\qquad\qquad\qquad\cdot \frac{1}{v_1^{p+1}} \left[ 1 - \exp\left(- \int_{0}^{\infty} \left(e^{-(s\alpha/n) a_1} - e^{-(s\alpha/n) \tilde{a}_1}\right) v_1^\alpha \rmd s \right) \right] \Bigg] \rmd v_1 \\
    &=   \frac{2\|x\|_2^p}{\pi } \Gamma(p+1) \cos\left( \frac{(p+1)\pi}{2}\right) \int_{0}^{\infty} \Bigg[ \exp \left(-v_1^\alpha\frac{n}{\alpha a_1}\right)  \\
    &\qquad \qquad \cdot \frac{1}{v_1^{p+1}} \left[ 1 - \exp\left(-v_1^{\alpha} \Big[ \frac{n}{\alpha a_1} - \frac{n}{\alpha \hat{a}_1} \Big] \right) \right] \Bigg] \rmd v_1 \\
    &=   \frac{2\|x\|_2^p}{\pi } \Gamma(p+1) \cos\left( \frac{(p+1)\pi}{2}\right) \int_{0}^{\infty}\Bigg[\frac{1}{v_1^{p+1}}\exp \left(-v_1^\alpha\frac{n}{\alpha a_1}\right)  \\
    &\qquad \qquad \qquad \qquad \cdot \left[1 - \exp\left(\frac{n (\|x_i\|_2^2 - \|\tilde{x}_i\|_2^2)}{\alpha a_1 \hat{a}_1}\right)\right]\Bigg] \rmd v_1.
\end{align*}
Let us denote $\delta := \| x_i\|_2^2 - \| \tilde{x}_i\|_2^2$. Hence,
\begin{align*}
    \Phi_{x,X,\hat{X}}(\mathcal{A}_{\text{cont}})
    &=   \frac{2\|x\|_2^p}{\pi } \Gamma(p+1) \cos\left( \frac{(p+1)\pi}{2}\right) \\
    & \qquad \qquad \qquad \qquad \cdot\int_{0}^{\infty} \Bigg[\frac{1}{v_1^{p+1}}\exp \left(-v_1^\alpha\frac{n}{\alpha a_1}\right) \left[1 - \exp\left(\frac{n \delta }{\alpha a_1 \hat{a}_1}\right)\right]\Bigg] \rmd v_1 \\
    &=   \frac{2\|x\|_2^p}{\pi \alpha} \Gamma(p+1) \cos\left( \frac{(p+1)\pi}{2}\right) \int_{0}^{\infty} \exp\left(- v_1^\alpha \frac{n}{\alpha a_1}\right) \notag \\
    & \qquad \qquad \qquad \qquad \qquad  \cdot\left[ \sum_{k=1}^{\infty} \frac{(-1)^{k+1}}{k!} v_1^{k \alpha-p - 1}\left( \frac{ n \delta}{\alpha a_1 \hat{a}_1} \right)^{k} \right] \rmd v_1.
\end{align*}
The above equation is just reduction to the computation of one-dimensional case which we did in equation~\eqref{eq:intermediate_lower_1_d}. We apply similar argument that we did apply in computing the lower bound in equation~\eqref{eq:intermediate_lower_1_d}. Hence, with high probability, we get (equation~\eqref{eq:just_before_final_lower_bo})
\begin{align*}
 \Phi_{x,X,\hat{X}}(\mathcal{A}_{\text{cont}}) \geq   \frac{2\|x\|_2^p}{\pi}  \Gamma(p+1) \cos\left( \frac{(p+1)\pi}{2}\right) \frac{\delta}{\alpha^2 \hat{a}_1} \left( \frac{n}{\alpha a_1} \right)^{\frac{p}{\alpha}} \Gamma\left(1 - \frac{p}{\alpha}\right). 
\end{align*}
Here, we also assume that $P_X$ is a distribution with discrete support in range $\sigma^2$ to $R$ with $C$ number of support points equally spaced. Hence, with probability $(1 - 1/C)$, $\delta \geq c$ for some positive constant $c$. 

By construction and the result from \citet{flatto2019dixie}, we know that $n C d\log d \geq  a_1 \geq n \sigma_{\min}$ for some positive constant $C$ with high probability. This also holds for $\hat{a}_1$. Hence, for some positive constant $C_1$ and $C_2$ ($C_1$ and $C_2$ has dependence on the dimension) , with high probability
\begin{align*}
    \Phi_{x,X,\hat{X}}(\mathcal{A}_{\text{cont}}) \geq \frac{C_1}{n\alpha^2 } \Gamma(p+1) \cos\left( \frac{(p+1)\pi}{2}\right)   \left( \frac{1}{\alpha C_2} \right)^{\frac{p}{\alpha}} \Gamma\left(1 - \frac{p}{\alpha}\right).
\end{align*}
This completes the proof. 
\end{proof}

\begin{remark} \label{rem:fintie_OU}
As we have characterized the finite-time distribution of a Lévy-driven OU process in Appendix~\ref{sec:finite:time:continuous}, it is clear to see that for any finite time $t$, if $\psi_i^{(t)}(u)$ denotes the characteristic function at that time then following the same procedure as that in Lemma~\ref{lem:dd_char_func}, 
\begin{align*}
    &\left|\psi_1^{(t)}(u) - \psi_2^{(t)}(u)\right| 
    \\
    &\leq \frac{2(\sigma_1+\sigma_2)\|u\|_{2}^{\alpha}}{n \alpha^2 \sigma_{\min}^2}\left(1 - (\alpha \sigma_{\min}t+1)e^{-\alpha \sigma_{\min}t} \right)  \exp\left(- \frac{\| u\|_{2}^{\alpha}}{\alpha \sigma_{\min}}\left(1 - e^{-\alpha \sigma_{\min}t}\right) \right).
\end{align*}
And hence, the algorithmic stability can be calculated in the similar way as that given in Theorem~\ref{thm:dd_main} for any time instance $t$. From here, it is hard to analyze the monotonic behavior of algorithmic stability for all time instance $t$. Here, we consider two interesting cases to discuss the monotone behavior:
\begin{itemize}

    \item When $t = O(\frac{1}{\alpha \sigma_{\min}})$ or higher but finite.  In this case, 
    \begin{align*}
        \left|\psi_1^{(t)}(u) - \psi_2^{(t)}(u)\right| \leq \frac{2(\sigma_1+\sigma_2)\|u\|_{2}^{\alpha}}{n \alpha^2 \sigma_{\min}^2}  \exp\left(- \frac{\| u\|_{2}^{\alpha}}{\alpha \sigma_{\min}}\left(1 - \frac{1}{e}\right) \right). 
    \end{align*}
    The above expression differs from the result in Lemma~\ref{lem:dd_char_func} only by a constant factor in the exponential. Hence, the stability bound will have similar monotonic behavious as that for $t \rightarrow \infty$.
    \item When $t$ is very small i.e. $t \ll \frac{1}{\alpha \sigma_{\min}} $ such that $1 - e^{-\alpha \sigma_{\min}t}\approx \alpha \sigma_{\min}t$. Then,
    \begin{align*}
       \left|\psi_1^{(t)}(u) - \psi_2^{(t)}(u)\right| \leq \frac{2(\sigma_1+\sigma_2)\|u\|_{2}^{\alpha} t}{n \alpha \sigma_{\min}}  \exp\left(- {\| u\|_{2}^{\alpha}t}  \right).
    \end{align*}
    
    In that case, we can easily see that under similar conditions in Theorem~\ref{thm:dd_main}, 
    \begin{align*}
        \varepsilon_{\text{stab}} (\mathcal{A}_{\text{cont}}) \leq  \frac{8R^p}{\pi} \frac{\sigma t^{\frac{p}{\alpha}}}{n\alpha^2 \sigma_{\min} } \Gamma(p+1) \cos\left( \frac{(p-1)\pi}{2}\right)   \Gamma\left(1-\frac{p}{\alpha}\right) = c(\alpha).
    \end{align*}
    We can similarly show here that there exist some $\alpha_0$ corresponding to every $t$ when  $c(\alpha)$ is monotonic in $[\alpha_0,2)$.
\end{itemize}
\end{remark}

\section{Theory and Proofs for the Discretized SDE}\label{ap:discretized_SDE}

In this section, we provide theoretical results
and their proofs of the discretized SDE \eqref{eq:sde_invar_1:discrete}-\eqref{eq:sde_invar_2:discrete}.

\begin{lemma}\label{lem:discretized_SDE_char}
For two matrices $X \in \mathbb{R}^{n\times d}$ and $\hat{X} \in \mathbb{R}^{n \times d}$ as defined earlier, the absolute value of difference between the characteristic functions of the anytime distributions for $\eta \leq \frac{1}{L}$ where $L$ is the maximum of largest eigenvalues of $\frac{1}{n}X^\top X$ and $\frac{1}{n}\hat{X}^\top\hat{X}$, at any $u \in \mathbb{R}^d$ corresponding to $d$-dimensional rotation invariant  processes in equations~\eqref{eq:discrete_char_1} and \eqref{eq:discrete_char_2}  is bounded as 
   \begin{align*}
    &\left|\psi_{\theta}(k,u)- \psi_{\hat{\theta}}(k,u)\right| 
    \\
    &\leq\frac{\eta^2 (\sigma_1 + \sigma_2)}{n (1 - \eta \sigma_{\min})}\frac{(k-1)(1-\eta \sigma_{\min})^{\alpha(k+1)}-k(1-\eta \sigma_{\min})^{\alpha k}+(1-\eta \sigma_{\min})^{\alpha}}{(1-(1-\eta\sigma_{\min})^{\alpha})^{2}}
    \\
    &\qquad\qquad\qquad\qquad\qquad\cdot
    \| u\|_2^{\alpha} \exp\left( - \frac{\eta(1-(1-\eta\sigma_{\min})^{k\alpha})}{1-(1-\eta\sigma_{\min})^{\alpha}}\|u  \|_2^\alpha \right),
\end{align*}
for any $k>0$, where $\sigma_{\min}$ is the smaller of the smallest of singular values of the matrices $\frac{1}{n}X^\top X$ and $\frac{1}{n}\hat{X}^\top \hat{X}$, and $x_i x_i^\top - \tilde{x}_i {\tilde{x}_i}^\top = \sigma_1 v_1 v_1^\top + \sigma_2 v_2 v_2^\top$ where $v_1$ and $v_2$ are orthogonal vectors.
\end{lemma}

\begin{proof}
For simplicity, we consider $\Sigma = I$ here. For general PSD sigma, similar steps can be followed as in Appendix~\ref{ap:general_sigma}. We can compute that
\begin{align}
    &\left|\psi_{\theta}(k,u)- \psi_{\hat{\theta}}(k,u)\right| 
    \notag\\
    &= \left| \exp\left( - \eta\sum_{j=0}^{k-1}\left\| \left(I-\frac{\eta}{n}\left({X}^\top {X}\right)\right)^{j}u \right\|_2^\alpha \right) - \exp\left( - \eta\sum_{j=0}^{k-1} \left\| \left(I-\frac{\eta}{n}\left(\hat{X}^\top \hat{X}\right)\right)^{j}u \right\|_2^\alpha \right) \right| \notag \\
    &\leq \underbrace{\exp\left( - \eta\sum_{j=0}^{k-1}\left\| \left(I-\frac{\eta}{n}\left({X}^\top {X}\right)\right)^{j}u \right\|_2^\alpha \right)}_{:=B} \notag \\
    &  \qquad \qquad \qquad \qquad \cdot\underbrace{\left| \eta\sum_{j=0}^{k-1} \left\| \left(I-\frac{\eta}{n}\left({X}^\top {X}\right)\right)^{j}u \right\|_2^\alpha - \eta\sum_{j=0}^{k-1} \left\| \left(I-\frac{\eta}{n}\left(\hat{X}^\top \hat{X}\right)\right)^{j}u \right\|_2^\alpha \right|}_{:=C}. \label{B:C:two:terms}
\end{align}
We first consider bounding the term $C$ in equation~\eqref{B:C:two:terms}. From Lemma~\ref{lem:ab_alpha}, we have for two positive numbers $a$ and $b$, and for some $1\leq \alpha \leq 2$, we have
\begin{align*}
    |a^\alpha - b^\alpha | \leq |a-b|(a^{\alpha-1}+b^{\alpha-1}).
\end{align*}
Utilizing the above result and triangle inequality, we have
\begin{align*}
   &\left| \eta\sum_{j=0}^{k-1} \left\| \left(I-\frac{\eta}{n}\left({X}^\top {X}\right)\right)^{j}u \right\|_2^\alpha - \eta\sum_{j=0}^{k-1} \left\| \left(I-\frac{\eta}{n}\left(\hat{X}^\top \hat{X}\right)\right)^{j}u \right\|_2^\alpha \right|  \\
    = & \eta \left|\sum_{j=0}^{k-1} \left[\left(\left\| \left(I-\frac{\eta}{n}\left({X}^\top {X}\right)\right)^{j}u - \left(I-\frac{\eta}{n}\left(\hat{X}^\top \hat{X}\right)\right)^{j}u \right\|_2\right) \right. \right. \\
    & \qquad \qquad \qquad \qquad \qquad  \left.\left. \cdot\left(  \left\| \left(I-\frac{\eta}{n}\left({X}^\top {X}\right)\right)^{j}u \right\|_2^{\alpha-1} +  \left\|\left(I-\frac{\eta}{n}\left(\hat{X}^\top \hat{X}\right)\right)^{j}u \right\|_2^{\alpha-1} \right) \right]  \right|.
\end{align*}
By definition, we have 
\begin{align*}
    \frac{1}{n} \left\| X^\top X u\right\|_2 \geq  \sigma_{\min}\| u\|_2, \quad\text{and}\quad
    \frac{1}{n} \left\| \hat{X}^\top \hat{X} u\right\|_2 \geq  \sigma_{\min}\| u\|_2.
\end{align*}
If $\eta \leq \frac{1}{L}$, where $L$ is the maximum of largest eigenvalues of $\frac{1}{n}X^\top X$ and $\frac{1}{n}\hat{X}^\top\hat{X}$, then,
\begin{align*}
    \left\| \left(I-\frac{\eta}{n}\left({X}^\top {X}\right)\right)^{j}u \right\|_2  \leq (1- \eta \sigma_{\min} )^j \| u\|_2.
\end{align*}
Similarly, 
\begin{align*}
    \left\| \left(I-\frac{\eta}{n}\left(\hat{X}^\top \hat{X}\right)\right)^{j}u \right\|_2  \leq (1- \eta \sigma_{\min} )^j \| u\|_2.
\end{align*}
For any two symmetric matrices $A$ and $B$ with $AB=BA$, we have $A^j - B^j = (A-B)(A^{j-1} + AB^{j-2} \cdots +  B^{j-1})$. 
It follows from the definitions that 
\begin{align*}
   X^\top X - \hat{X}^\top \hat{X} = x_i x_i^\top - \tilde{x}_i \tilde{x}_i^\top.
\end{align*}

By using similar argument as before, we have
\begin{align*}
    \left\| \left(I-\frac{\eta}{n}\left({X}^\top {X}\right)\right)^{j}u - \left(I-\frac{\eta}{n}\left(\hat{X}^\top \hat{X}\right)\right)^{j}u \right\|_2 \leq \frac{\eta j}{n} \left\|x_ix_i^\top - \tilde{x}_i \tilde{x}_i^\top\right\|_2 (1 - \eta \sigma_{\min})^{j-1}\|u\|_2.
\end{align*}
Note that $\tilde{x}_i {\tilde{a}_i}^\top  - x_i x_i^\top$ is a rank 2 matrix. Consider the two non-zero eigenvalues of this matrix are $\sigma_1$ and $\sigma_2$. Hence, $\tilde{x}_i {\tilde{x}_i}^\top  - x_i x_i^\top = \sigma_1 v_1 v_1^\top + \sigma_2 v_2 v_2^\top$ where $v_1$ and $v_2$ are the eigenvectors. Hence, we have obtained a bound on the term $C$ in equation~\eqref{B:C:two:terms} such that 
\begin{align*}
    C \leq \frac{\eta^2 (\sigma_1 + \sigma_2)}{n (1 - \eta \sigma_{\min})}\| u\|_2^{\alpha} \sum_{j=0}^{k-1}j(1 - \eta \sigma_{\min})^{j\alpha}.
\end{align*}
Now, let us consider bounding the term $B$ in equation~\eqref{B:C:two:terms}. Using previous arguments, 
\begin{align*}
    \exp\left( - \eta\sum_{j=0}^{k-1}\left\| \left(I-\frac{\eta}{n}\left({X}^\top {X}\right)\right)^{j}u \right\|_2^\alpha \right) \leq \exp\left( - \eta\sum_{j=0}^{k-1}  (1-\eta \sigma_{\min})^{j\alpha}\|u  \|_2^\alpha \right).
\end{align*}
Hence, we get, 
\begin{align*}
    &\left|\psi_{\theta}(k,u)- \psi_{\hat{\theta}}(k,u)\right| 
    \\
    &\leq \frac{\eta^2 (\sigma_1 + \sigma_2)}{n (1 - \eta \sigma_{\min})}\sum_{j=0}^{k-1}[j(1 - \eta \sigma_{\min})^{j\alpha}] \| u\|_2^{\alpha} \exp\left( - \eta\sum_{j=0}^{k-1}  (1-\eta \sigma_{\min})^{j\alpha}\|u  \|_2^\alpha \right)
    \\
    &=\frac{\eta^2 (\sigma_1 + \sigma_2)}{n (1 - \eta \sigma_{\min})}\frac{(k-1)(1-\eta \sigma_{\min})^{\alpha(k+1)}-k(1-\eta \sigma_{\min})^{\alpha k}+(1-\eta \sigma_{\min})^{\alpha}}{(1-(1-\eta\sigma_{\min})^{\alpha})^{2}}
    \\
    &\qquad\qquad\qquad\qquad\qquad\cdot
    \| u\|_2^{\alpha} \exp\left( - \frac{\eta(1-(1-\eta\sigma_{\min})^{k\alpha})}{1-(1-\eta\sigma_{\min})^{\alpha}}\|u  \|_2^\alpha \right),
\end{align*}
where we applied Lemma~\ref{lem:mean} and the proof is complete.
\end{proof}

In particular, by letting $k\rightarrow\infty$ in Lemma~\ref{lem:discretized_SDE_char}, we obtain the following corollary that concerns the stability
of the characteristic functions for the stationary distributions.
By denoting $\psi_{\theta}(u):=\psi_{\theta}(\infty,u)$
and $\psi_{\hat{\theta}}(u):=\psi_{\hat{\theta}}(\infty,u)$, 
we have the following result.

\begin{corollary}\label{cor:discretized_SDE_char}
Under the settings in Lemma~\ref{lem:discretized_SDE_char}, we have
\begin{align*}
    \left|\psi_{\theta}(u)- \psi_{\hat{\theta}}(u)\right| 
    \leq\frac{\eta^2 (\sigma_1 + \sigma_2)(1-\eta \sigma_{\min})^{\alpha-1}}{n(1-(1-\eta\sigma_{\min})^{\alpha})^{2}}
   \cdot
    \| u\|_2^{\alpha} \exp\left( - \frac{\eta}{1-(1-\eta\sigma_{\min})^{\alpha}}\|u  \|_2^\alpha \right).
\end{align*}
\end{corollary}

\begin{proof}
The results directly follows from Lemma~\ref{lem:discretized_SDE_char}
by letting $k\rightarrow\infty$ and using the results for sum of geometric series.
\end{proof}

\begin{theorem*}[Restatement of Theorem~\ref{thm:discrete_time_bound}] Consider  the $d$-dimensional loss function $f(x) =|\theta^\top x|^p$ such that $\theta, x \in \mathbb{R}^d$.  For any $x\sim P_X$ if $\|x\|_2 \leq R $,  for any $X$ sampled uniformly at random from the set $\mathcal{X}_n$, if  $\frac{1}{n} \| X^\top X u\|_2 \geq  \sigma_{\min} \|u\|_2$ for $u\in \mathbb{R}^d$ and for any two $X\cong \hat{X}$ sampled from $\mathcal{X}_n$   generating two stochastic process given by SDEs in equations~\eqref{eq:sde_invar_1:discrete_main} and \eqref{eq:sde_invar_2:discrete_main} for $\eta \leq \frac{1}{L}$ where $L$ is the maximum of largest eigenvalues of $\frac{1}{n}X^\top X$ and $\frac{1}{n}\hat{X}^\top\hat{X}$, $\|x_i x_i^\top - \tilde{x}_i \tilde{x}_i^\top \|_2  \leq 2\sigma$ holds  with high probability. Then, for $p\in [1,\alpha)$, we have
\begin{align*}
    \varepsilon_{\text{stab}} \leq \frac{2R^p}{\pi} \Gamma(p+1) \cos\left( \frac{(p-1)\pi}{2}\right) \frac{\sigma \eta^{1+\frac{p}{\alpha}} (1-\eta \sigma_{\min})^{\alpha-1}}{n\alpha(1-(1-\eta\sigma_{\min})^{\alpha})^{1+\frac{p}{\alpha}}}\Gamma\left(1-\frac{p}{\alpha}\right)
\end{align*}
with high probability.

\end{theorem*}
\begin{proof}
We have $d$-dimensional loss function for an $x \in \mathbb{R}^d$ sampled uniformly at random from $P_X$, $f(\theta) = |\theta^\top x|^p $ Let us denote the Fourier transform of $f$, $\mathcal{F}f(u)$ as $h(u)$. For an orthogonal matrix $A$ such that $Ae_1 = \frac{x}{\|x\|_2}$, we have from the results in Lemma~\ref{lem:ft_theta_x},  
\begin{align}
    h(Au) =  2 \|x\|_2^p (2\pi)^{d-1} \delta(u_2,\cdots,u_d) \Gamma(p+1) \cos\left( \frac{(p+1)\pi}{2}\right) \frac{1}{|u_1|^{p+1}} \quad\text{for~} p\in[1,2), \label{eq:fou_1_d:2}
\end{align}
and
\begin{align}
    h(Au) = 2\|x\|_2^p (2\pi)^{d-1} \delta(u_1,u_2,\cdots,u_d) \frac{2}{u_1^2} \quad\text{for~} p=2, \label{eq:fou_2_d:2}
\end{align}
where $\delta$ is the Dirac-delta function. Let us first consider the case when $p \in [1,2)$.  From equation~\eqref{eq:stab_char},
\begin{align*}
    &\varepsilon_{\text{stab}} (\mathcal{A}_{\text{cont}})
    \\
    &= \sup_{X\cong \hat{X}}\sup_{x \in \mathcal{X}}  \frac{1}{(2\pi)^d} \int_{\mathbb{R}^d} \left|\psi_{\theta}(u) - \psi_{\hat{\theta}}(u)\right| |h(u)|~\rmd u  \\
    & = \sup_{X\cong \hat{X}}\sup_{x \in \mathcal{X}} \frac{1}{(2\pi)^d} \int_{\mathbb{R}^d} \left|\psi_{\theta}(u) - \psi_{\hat{\theta}}(u)\right| \left| \int_{\mathbb{R}^d} |\theta^\top x|^p e^{\imagi  u^\top \theta}~\rmd \theta\right|~\rmd u \\
    &= \sup_{X\cong \hat{X}}\sup_{x \in \mathcal{X}} \frac{1}{(2\pi)^d} \int_{\mathbb{R}^d} \frac{\eta^2 (\sigma_1 + \sigma_2)(1-\eta \sigma_{\min})^{\alpha-1}}{n(1-(1-\eta\sigma_{\min})^{\alpha})^{2}}
   \cdot
    \| u\|_2^{\alpha} \exp\left( - \frac{\eta}{1-(1-\eta\sigma_{\min})^{\alpha}}\|u  \|_2^\alpha \right) \\
 & \qquad \qquad \qquad \qquad \qquad \qquad \qquad \cdot \left| \int_{\mathbb{R}^d} |\theta^\top x|^p e^{\imagi  u^\top \theta}~\rmd \theta \right|~\rmd u.
\end{align*}
In the above equation, let us apply the change of variable $u=Av$ and use result from Lemma~\ref{lem:ft_theta_x} (equations~\eqref{eq:fou_1_d:2}) and we get the following, 
\begin{align*}
  &\varepsilon_{\text{stab}}(\mathcal{A}_{\text{cont}})
  \\
  &=  \sup_{X\cong \hat{X}}\sup_{x \in \mathcal{X}} \frac{1}{(2\pi)^d} \int_{\mathbb{R}^d} \frac{\eta^2 (\sigma_1 + \sigma_2)(1-\eta \sigma_{\min})^{\alpha-1}}{n(1-(1-\eta\sigma_{\min})^{\alpha})^{2}}
   \cdot
    \| Av\|_2^{\alpha} \exp\left( - \frac{\eta \|Av  \|_2^\alpha}{1-(1-\eta\sigma_{\min})^{\alpha}} \right) \\
 & \qquad \qquad \qquad \qquad \qquad \qquad \qquad \cdot \left| \int_{\mathbb{R}^d} |\theta^\top x|^p e^{\imagi  (Av)^\top \theta}~\rmd \theta \right|~\rmd v \\
  &=  \sup_{X\cong \hat{X}}\sup_{x \in \mathcal{X}} \frac{1}{(2\pi)^d} \int_{\mathbb{R}^d} \frac{\eta^2 (\sigma_1 + \sigma_2)(1-\eta \sigma_{\min})^{\alpha-1}}{n(1-(1-\eta\sigma_{\min})^{\alpha})^{2}}
   \\
   &\qquad\qquad\qquad\qquad\qquad\qquad\cdot
    \| v\|_2^{\alpha} \exp\left( - \frac{\eta \|v  \|_2^\alpha}{1-(1-\eta\sigma_{\min})^{\alpha}} \right) \left| h(Av) \right|~\rmd v \\
  &= \sup_{X\cong \hat{X}}\sup_{x \in \mathcal{X}} \frac{1}{(2\pi)^d} \int_{\mathbb{R}^d} \left[ \frac{\eta^2 (\sigma_1 + \sigma_2)(1-\eta \sigma_{\min})^{\alpha-1}}{n(1-(1-\eta\sigma_{\min})^{\alpha})^{2}}
   \cdot
    \| v\|_2^{\alpha} \exp\left( - \frac{\eta \|v  \|_2^\alpha}{1-(1-\eta\sigma_{\min})^{\alpha}} \right) \right. ~ \\
  &\qquad \qquad \qquad \left.  \cdot\left(\left|  2 \|x\|_2^p (2\pi)^{d-1} \delta(v_2,\cdots,v_d) \Gamma(p+1) \cos\left( \frac{(p+1)\pi}{2}\right) \frac{1}{|v_1|^{p+1}} \right| \right)\right]~\rmd v \\
  &= \sup_{X\cong \hat{X}}\sup_{x \in \mathcal{X}} \frac{\|x\|_2^p}{\pi} \Gamma(p+1) \cos\left( \frac{(p-1)\pi}{2}\right) \frac{\eta^2 (\sigma_1 + \sigma_2)(1-\eta \sigma_{\min})^{\alpha-1}}{n(1-(1-\eta\sigma_{\min})^{\alpha})^{2}} \\
  &\qquad \qquad \qquad \qquad \qquad \qquad \qquad \qquad\cdot\int_{-\infty}^{\infty}   |v_1|^\alpha \exp\left(  \frac{-\eta|v_1|^\alpha}{1-(1-\eta\sigma_{\min})^{\alpha}}\right)  \frac{1}{|v_1|^{p+1}} ~\rmd v_1 \\
  & = \sup_{X\cong \hat{X}}\sup_{x \in \mathcal{X}} \frac{2\|x\|_2^p}{\pi} \Gamma(p+1) \cos\left( \frac{(p-1)\pi}{2}\right) \frac{\eta^2 (\sigma_1 + \sigma_2)(1-\eta \sigma_{\min})^{\alpha-1}}{n(1-(1-\eta\sigma_{\min})^{\alpha})^{2}} \\
  &\qquad \qquad \qquad \qquad \qquad \qquad \qquad \qquad\cdot\int_{0}^{\infty}   |v_1|^{\alpha-p-1} \exp\left(  \frac{-\eta|v_1|^{\alpha}}{1-(1-\eta\sigma_{\min})^{\alpha}}\right) ~\rmd v_1.
\end{align*}

In the above integral, by substituting $ \frac{\eta v_1^\alpha}{ 1- (1 - \eta \sigma_{\min})^{\alpha}}$ with $t$ so that
\begin{align}
    \rmd t =  v^{\alpha-1} \frac{\eta \alpha}{1-(1 - \eta \sigma_{\min})^{\alpha}}\rmd v_1,~ \text{and } \frac{1}{v^p} = \left( \frac{\eta}{1-(1-\eta \sigma_{\min})^{\alpha}}\right)^{\frac{p}{\alpha}} t^{-p/\alpha},
\end{align}
we have,
\begin{align}
 &\varepsilon_{\text{stab}} (\mathcal{A}_{\text{cont}})= \sup_{X\cong \hat{X}}\sup_{x \in \mathcal{X}} \frac{2\|x\|_2^p}{\pi} \Gamma(p+1) \cos\left( \frac{(p-1)\pi}{2}\right) \frac{\eta^2 (\sigma_1 + \sigma_2)(1-\eta \sigma_{\min})^{\alpha-1}}{n(1-(1-\eta\sigma_{\min})^{\alpha})^{2}}  \nonumber
 \\
 &\qquad\qquad\qquad\cdot \frac{1-(1-\eta \sigma_{\min})^{\alpha}}{\eta \alpha} \left( \frac{\eta}{1-(1-\eta \sigma_{\min})^{\alpha}}\right)^{\frac{p}{\alpha}} \int_{0}^{\infty} t^{-p/\alpha} e^{-t}~\rmd t \notag \\
 &= \sup_{X\cong \hat{X}}\sup_{x \in \mathcal{X}} \frac{2\|x\|_2^p}{\pi} \Gamma(p+1) \cos\left( \frac{(p-1)\pi}{2}\right) \frac{\eta^{1+\frac{p}{\alpha}} (\sigma_1 + \sigma_2)(1-\eta \sigma_{\min})^{\alpha-1}}{n\alpha(1-(1-\eta\sigma_{\min})^{\alpha})^{1+\frac{p}{\alpha}}}\Gamma\left(1-\frac{p}{\alpha}\right).
\end{align}
Now, if $\sigma$ is the upper bound on $\sigma_1$ and $\sigma_2$ for all $X\cong \hat{X} \in \mathcal{X}_n$ and $\|x\|_2 \leq R$ for $x\sim P_X$ with high probability then,
\begin{align}
    \varepsilon_{\text{stab}} \leq \frac{2R^p}{\pi} \Gamma(p+1) \cos\left( \frac{(p-1)\pi}{2}\right) \frac{\sigma \eta^{1+\frac{p}{\alpha}} (1-\eta \sigma_{\min})^{\alpha-1}}{n\alpha(1-(1-\eta\sigma_{\min})^{\alpha})^{1+\frac{p}{\alpha}}}\Gamma\left(1-\frac{p}{\alpha}\right).
\end{align}
This completes the proof.
\end{proof}

\section{Case for General P.S.D $\Sigma$ (Preconditioning)} \label{ap:general_sigma}
In this section, we would discuss the effect of general positive semidefinite matrix $\Sigma$. As in equations~\eqref{eq:sigma_theta_1} and \eqref{eq:sigma_theta_2}, 
we consider two SDEs corresponding to a rotationally symmetric $\alpha$-stable L\'{e}vy process $\Lm_{t}$ in $\mathbb{R}^d$,
\begin{align}
    \rmd \theta_t &= - \frac{1}{n} \left(X^\top X\right) \theta_t \rmd t + \Sigma \rmd \Lm_{t}, \label{eq:sigma_theta_1}   \\
    \rmd\hat{\theta}_t &= -\frac{1}{n} \left(\hat{X}^\top \hat{X}\right) \hat{\theta}_t \rmd t + \Sigma \rmd \Lm_{t}, \label{eq:sigma_theta_2}
\end{align}
where $\Sigma \in \mathbb{R}^{d\times d}$ is a real valued P.S.D matrix.  The corresponding characteristic functions are given by as in equations~\eqref{eq:char_sde_1_sigma} and \eqref{eq:char_sde_2_sigma} (see Lemma~\ref{lem:char}),
\begin{align}
    \psi_{\theta}(u) &= \exp\left( - \int_{0}^{\infty} \left\|\Sigma^\top e^{-s \frac{1}{n}({X}^\top {X})}u \right\|_2^\alpha \rmd s\right), \label{eq:char_sde_1_sigma} \\
    \psi_{\hat{\theta}}(u) &= \exp\left( - \int_{0}^{\infty} \left\|\Sigma^\top e^{-s \frac{1}{n}(\hat{X}^\top \hat{X})}u \right\|_2^\alpha \rmd s\right)\label{eq:char_sde_2_sigma}. 
\end{align}
We assume that the largest and smallest eigenvalues of the matrix $\Sigma$ is $\lambda_{\max}$ and $\lambda_{\min}$.

\begin{lemma} \label{lem:Sigma_Char_bound}
For two matrices $X \in \mathbb{R}^{n\times d}$ and $\hat{X} \in \mathbb{R}^{n \times d}$ as defined earlier, the absolute value of difference between the characteristic functions of the stationary distributions at any $u \in \mathbb{R}^d$ corresponding to $d$-dimensional rotation invariant  processes in equations~\eqref{eq:char_sde_1_sigma} and \eqref{eq:char_sde_2_sigma}  is bounded as 
   \begin{align*}
   |\psi_{\theta}(u)- \psi_{\hat{\theta}}(u)|  \leq \lambda_{\max}^{\alpha}\frac{2(\sigma_1+\sigma_2)\|u\|_{2}^{\alpha}}{n \alpha \sigma_{\min}}\exp\left(- \frac{\lambda_{\min}^{\alpha}\| u\|_{2}^{\alpha}}{\alpha^2 \sigma_{\min}^2} \right),
\end{align*}
where $\sigma_{\min}$ is the smaller of the smallest of singular values of the matrices $\frac{1}{n}X^\top X$ and $\frac{1}{n}\hat{X}^\top \hat{X}$, and $x_i x_i^\top - \tilde{x}_i {\tilde{x}_i}^\top = \sigma_1 v_1 v_1^\top + \sigma_2 v_2 v_2^\top$ where $v_1$ and $v_2$ are orthogonal vectors.
\end{lemma}

\begin{proof}
We can compute that
\begin{align}
    &|\psi_{\theta}(u)- \psi_{\hat{\theta}}(u)| 
    \notag\\
    &= \left| \exp\left(- \int_{0}^{\infty} \left\|\Sigma^\top e^{-s\frac{1}{n} X^\top X }u\right\|_{2}^{\alpha} \rmd s\right) - \exp\left(- \int_{0}^{\infty} \left\|\Sigma^\top  e^{-s\frac{1}{n} \hat{X}^\top\hat{X}  }u\right\|_{2}^{\alpha} \rmd s\right) \right| \notag \\
    &\leq \underbrace{\exp\left(- \int_{0}^{\infty} \left\|\Sigma^\top e^{-s\frac{1}{n} X^\top X }u\right\|_{2}^{\alpha} \rmd s\right)}_{:=B} \underbrace{\left| \int_{0}^{\infty} \left\|\Sigma^\top  e^{-s\frac{1}{n} X^\top X }u\right\|_{2}^{\alpha} \rmd s - \int_{0}^{\infty} \left\|\Sigma^\top e^{-s\frac{1}{n} \hat{X}^\top \hat{X}  }u\right\|_{2}^{\alpha} \rmd s \right|}_{:=C}. \notag
\end{align}
We first consider the term $C$ in the above equation. From Lemma~\ref{lem:ab_alpha}, we have for two positive numbers $a$ and $b$, and for some $1\leq \alpha \leq 2$, we have
\begin{align*}
    |a^\alpha - b^\alpha | \leq |a-b|(a^{\alpha-1}+b^{\alpha-1}).
\end{align*}
Now,
\begin{align*}
    C &= \left| \int_{0}^{\infty} \left\|\Sigma^\top e^{-s\frac{1}{n} X^\top X }u\right\|_{2}^{\alpha} \rmd s - \int_{0}^{\infty} \left\|\Sigma^\top e^{-s\frac{1}{n} \hat{X}^\top \hat{X} }u\right\|_{2}^{\alpha} \rmd s \right| \\
    &=\left| \int_{0}^{\infty} \left( \left\|\Sigma^\top e^{-s\frac{1}{n} X^\top X }u\right\|_{2}^{\alpha} - \left\|\Sigma^\top e^{-s\frac{1}{n} \hat{X}^\top \hat{X} }u\right\|_{2}^{\alpha} \right) \rmd s \right| \\
    &\leq \int_{0}^{\infty} \left| \left \|\Sigma^\top e^{-s\frac{1}{n} X^\top X }u\right\|_{2} - \left\|\Sigma^\top e^{-s\frac{1}{n} \hat{X}^\top \hat{X} }u\right\|_2 \right| 
    \\
    &\qquad\qquad\qquad\qquad\qquad\cdot\left( \left \|\Sigma^\top e^{-s\frac{1}{n} X^\top X }u\right\|_{2}^{\alpha-1} + \left\|\Sigma^\top e^{-s\frac{1}{n} \hat{X}^\top \hat{X} }u\right\|_2^{\alpha-1} \right) \rmd s \\
    &\leq \int_{0}^{\infty}  \left \|\Sigma^\top e^{-s\frac{1}{n} X^\top X }u - \Sigma^\top e^{-s\frac{1}{n} \hat{X}^\top \hat{X} }u\right\|_2  \left( \left \|\Sigma^\top e^{-s\frac{1}{n} X^\top X }u\right\|_{2}^{\alpha-1} + \left\|\Sigma^\top e^{-s\frac{1}{n} \hat{X}^\top \hat{X}}u\right\|_2^{\alpha-1} \right) \rmd s \\
    &= \int_{0}^{\infty}  \left \|\Sigma^\top e^{-s\frac{1}{n} X^\top X }\left(I -   e^{s\frac{1}{n} X^\top X - s\frac{1}{n} \hat{X}^\top \hat{X} }\right)u\right\|_2  
    \\
    &\qquad\qquad\qquad\qquad\qquad\cdot\left( \left \|\Sigma^\top e^{-s\frac{1}{n} X^\top X }u\right\|_{2}^{\alpha-1} + \left\|\Sigma^\top e^{-s\frac{1}{n} \hat{X}^\top \hat{X}}u\right\|_2^{\alpha-1} \right) \rmd s\\
    &\leq \lambda_{\max}^\alpha \underbrace{\int_{0}^{\infty}  \left \|e^{-s\frac{1}{n} X^\top X }\left(I - e^{s\frac{1}{n} X^\top X - s\frac{1}{n} \hat{X}^\top \hat{X} }\right)u\right\|_2  \left( \left \|e^{-s\frac{1}{n} X^\top X }u\right\|_{2}^{\alpha-1} + \left\|e^{-s\frac{1}{n} \hat{X}^\top \hat{X}}u\right\|_2^{\alpha-1} \right) \rmd s}_{\text{This term has been analyzed  as an upper bound on term C in Lemma~\ref{lem:dd_char_func} (Equation~\eqref{eq:upper_bnd_C}).}}.
\end{align*}
Using the result directly from equation~\eqref{eq:final_C_bound}, we have,
\begin{align}
    C \leq \lambda_{\max}^{\alpha}\frac{2(\sigma_1+\sigma_2)\|u\|_{2}^{\alpha}}{n \alpha^2 \sigma_{\min}^2}.
\end{align}
Next, let us consider the term $B$. Using the similar arguments as in Lemma~\ref{lem:dd_char_func} (equation~\eqref{eq:bound_B}), we have,
\begin{align}
    \exp\left(- \int_{0}^{\infty} \left\|\Sigma^\top e^{-s\frac{1}{n} X^\top X }u\right\|_{2}^{\alpha} \rmd s\right) &\leq \exp\left(- \lambda_{\min}^{\alpha}\| u\|_{2}^{\alpha} \int_{0}^{\infty} e^{-s\alpha \sigma_{\min} }  \rmd s\right) \nonumber
    \\
    &= \exp\left(- \frac{\lambda_{\min}^{\alpha}\| u\|_{2}^{\alpha}}{\alpha \sigma_{\min}} \right).
\end{align}
Hence, we have the final result, 
\begin{align}
    |\psi_{\theta}(u)- \psi_{\hat{\theta}}(u)|  \leq \lambda_{\max}^{\alpha}\frac{2(\sigma_1+\sigma_2)\|u\|_{2}^{\alpha}}{n \alpha^2 \sigma_{\min}^2}\exp\left(- \frac{\lambda_{\min}^{\alpha}\| u\|_{2}^{\alpha}}{\alpha \sigma_{\min}} \right),
\end{align}
which completes the proof.
\end{proof}

\begin{theorem}  Consider  the $d$-dimensional loss function $f(x) =|\theta^\top x|^p$ such that $\theta, x \in \mathbb{R}^d$.  For any $x\sim P_X$ if $\|x\|_2 \leq R $,  for any $X$ sampled uniformly at random from the set $\mathcal{X}_n$, if  $\frac{1}{n} \| X^\top X u\|_2 \geq  \sigma_{\min} \|u\|_2$ for $u\in \mathbb{R}^d$ and for any two $X\cong \hat{X}$ sampled from $\mathcal{X}_n$   generating two stochastic process given by SDEs in equations~\eqref{eq:sigma_theta_1} and \eqref{eq:sigma_theta_2}, $\|x_i x_i^\top - \tilde{x}_i \tilde{x}_i^\top \|_2  \leq 2\sigma$ holds  with high probability. Then,
\begin{enumerate}
    \item[(i)] For $\alpha \in (1,2)$, the algorithm is not stable when $p \in [\alpha,2]$ i.e. $\varepsilon_{\text{stab}}(\mathcal{A}_{\text{cont}})$ diverges. When $\alpha = p =2$ then with high probability $ \varepsilon_{\text{stab}} (\mathcal{A}_{\text{cont}})\leq \frac{2R^2}{\pi}  \frac{\lambda_{\max}^2\sigma}{n   \sigma_{\min}}$.
    \item[(ii)] For $p\in [1,\alpha)$, we have the following upper bound for the algorithmic stability,
    \begin{align*}
         &\varepsilon_{\text{stab}} (\mathcal{A}_{\text{cont}})
         \\
    &\leq   \frac{8R^p}{\pi} \lambda_{\min}^p \left( \frac{\lambda_{\max}}{\lambda_{\min}}\right)^{\alpha}\frac{\sigma}{n\alpha^2 \sigma_{\min} } \Gamma(p+1) \cos\left( \frac{(p-1)\pi}{2}\right)  \left(\frac{1}{\alpha \sigma_{\min}}\right)^{\frac{p}{\alpha}} \Gamma\left(1-\frac{p}{\alpha}\right)
    \\
    &= c(\alpha),
    \end{align*}
    which holds with high probability. Furthermore, for some $\alpha_0 > 1$, if we have $$\sigma_{\min} \geq \exp\left( 1 + \frac{4}{p} - \log \alpha_0 -  \phi\left( 1 - \frac{p}{\alpha_0}\right) - \alpha_0^{2}\log \left( \frac{\lambda_{\max}}{\lambda_{\min}}\right)\right),$$
    where $\phi$ is the digamma function, then the map $\alpha\rightarrow c(\alpha)$ is increasing for $\alpha \in [\alpha_0,2)$.
    \end{enumerate}
\end{theorem}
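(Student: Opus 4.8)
The plan is to mirror the proof of Theorem~\ref{thm:dd_main} almost verbatim, carrying the spectral factors $\lambda_{\max},\lambda_{\min}$ of $\Sigma$ through every step, with Lemma~\ref{lem:Sigma_Char_bound} now playing the role that Lemma~\ref{lem:dd_char_func} played there as the bound on $|\psi_{\theta}(u)-\psi_{\hat{\theta}}(u)|$. First I would insert the bound of Lemma~\ref{lem:Sigma_Char_bound} into the Fourier-domain stability identity~\eqref{eq:stab_char} and apply the orthogonal change of variables $u=Av$ with $Ae_1=x/\|x\|_2$. Because $A$ is orthogonal we have $\|Av\|_2=\|v\|_2$, so the entire factor $\lambda_{\max}^{\alpha}\frac{2(\sigma_1+\sigma_2)\|u\|_2^{\alpha}}{n\alpha\sigma_{\min}}\exp(-\lambda_{\min}^{\alpha}\|u\|_2^{\alpha}/(\alpha\sigma_{\min}))$ is rotation invariant, while the Fourier transform $h(Av)$ collapses through the Dirac mass $\delta(v_2,\dots,v_d)$ supplied by Lemma~\ref{lem:ft_theta_x}. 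This reduces the $d$-dimensional integral to a single radial integral in $v_1$, exactly as in the unpreconditioned case.

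Next I would evaluate $\int_0^{\infty}v_1^{\alpha-p-1}\exp(-\lambda_{\min}^{\alpha}v_1^{\alpha}/(\alpha\sigma_{\min}))\,\rmd v_1$ via the substitution $t=\lambda_{\min}^{\alpha}v_1^{\alpha}/(\alpha\sigma_{\min})$, which yields $\tfrac{1}{\alpha}(\alpha\sigma_{\min}/\lambda_{\min}^{\alpha})^{1-p/\alpha}\Gamma(1-p/\alpha)$. The key bookkeeping point is that the prefactor $\lambda_{\max}^{\alpha}$ combines with the $\lambda_{\min}^{-\alpha(1-p/\alpha)}=\lambda_{\min}^{p-\alpha}$ coming out of the integral to give precisely $\lambda_{\min}^{p}(\lambda_{\max}/\lambda_{\min})^{\alpha}$, and likewise the $\sigma_{\min}$ powers recombine into $(1/(\alpha\sigma_{\min}))^{p/\alpha}$; bounding $\|x\|_2\le R$ and $\sigma_1+\sigma_2\le 2\sigma$ then produces the claimed $c(\alpha)$ in part~(ii). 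For part~(i), the same radial integrand behaves like $v_1^{\alpha-p-1}$ near the origin, hence is non-integrable exactly when $p\ge\alpha$, giving divergence; the borderline $p=\alpha=2$ case instead uses the fully collapsed mass $\delta(v_1,\dots,v_d)$ from the $p=2$ transform, which leaves only $\lambda_{\max}^{2}$ in front and reproduces $\tfrac{2R^2}{\pi}\tfrac{\lambda_{\max}^2\sigma}{n\sigma_{\min}}$, i.e.\ the Theorem~\ref{thm:dd_main} bound scaled by $\lambda_{\max}^2$.

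For the monotonicity claim I would set $\Lambda(\alpha)=(\lambda_{\max}/\lambda_{\min})^{\alpha}\,\tfrac{1}{\alpha}(\alpha\sigma_{\min})^{-p/\alpha}\Gamma(1-p/\alpha)$ and compute
\[
\partial_{\alpha}\log\Lambda(\alpha)=\frac{p}{\alpha^2}\left[\log\alpha+\log\sigma_{\min}-1-\frac{\alpha}{p}+\phi\!\left(1-\frac{p}{\alpha}\right)+\frac{\alpha^2}{p}\log\!\left(\frac{\lambda_{\max}}{\lambda_{\min}}\right)\right],
\]
the only new ingredient relative to Theorem~\ref{thm:dd_main} being the last term inside the bracket, which is the derivative of $\alpha\log(\lambda_{\max}/\lambda_{\min})$. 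I would then force the bracket to be nonnegative on $[\alpha_0,2)$ by bounding each $\alpha$-dependent term by its worst case: $\alpha/p\le 2/p$, $-\log\alpha\le-\log\alpha_0$, monotonicity of the digamma $\phi$ in its argument, and the fact that $\frac{\alpha^2}{p}\log(\lambda_{\max}/\lambda_{\min})$ is increasing in $\alpha$ (as $\log(\lambda_{\max}/\lambda_{\min})\ge 0$). Isolating $\log\sigma_{\min}$ then gives the stated threshold, with the spectral spread of $\Sigma$ contributing precisely the correction proportional to $\alpha_0^2\log(\lambda_{\max}/\lambda_{\min})$.

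The hardest part will be the borderline $p=\alpha=2$ case, where the integrand simultaneously carries the full Dirac mass and a $1/v_1^2$ singularity and must be interpreted distributionally exactly as in the proof of Theorem~\ref{thm:dd_main}; the rest is essentially a careful propagation of the $\lambda_{\max},\lambda_{\min}$ factors, together with the uniform-in-$\alpha$ control of the new $\frac{\alpha^2}{p}\log(\lambda_{\max}/\lambda_{\min})$ term, which is where the extra $\alpha_0$-dependent correction in the condition on $\sigma_{\min}$ comes from.
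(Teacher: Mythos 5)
Your proposal is correct and follows the paper's own proof essentially verbatim: it inserts Lemma~\ref{lem:Sigma_Char_bound} into the Fourier-domain identity \eqref{eq:stab_char}, applies the same orthogonal change of variables with the Dirac-delta collapse from Lemma~\ref{lem:ft_theta_x}, performs the same Gamma-function substitution $t=\lambda_{\min}^{\alpha}v_1^{\alpha}/(\alpha\sigma_{\min})$ with identical $\lambda_{\max},\lambda_{\min},\sigma_{\min}$ bookkeeping, and runs the same $\partial_{\alpha}\log\Lambda(\alpha)$ monotonicity computation (your bracketed form is just the paper's expression with $\log(\lambda_{\max}/\lambda_{\min})$ pulled inside the $p/\alpha^2$ factor). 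One minor observation: isolating $\log\sigma_{\min}$ from your (correct) derivative formula term by term actually yields the correction $\tfrac{\alpha_0^{2}}{p}\log(\lambda_{\max}/\lambda_{\min})$ rather than $\alpha_0^{2}\log(\lambda_{\max}/\lambda_{\min})$, a $1/p$ factor the paper's own proof likewise elides, so your hedged phrase ``proportional to'' is exactly as precise as the published argument.
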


\begin{proof}
We have $d$-dimensional loss function for an $x \in \mathbb{R}^d$ sampled uniformly at random from $P_X$, $f(\theta) = |\theta^\top x|^p $ Let us denote the Fourier transform of $f$, $\mathcal{F}f(u)$ as $h(u)$. For an orthogonal matrix $A$ such that $Ae_1 = \frac{x}{\|x\|_2}$, we have from the results in Lemma~\ref{lem:ft_theta_x},  
\begin{align}
    h(Au) =  2 \|x\|_2^p (2\pi)^{d-1} \delta(u_2,\cdots,u_d) \Gamma(p+1) \cos\left( \frac{(p+1)\pi}{2}\right) \frac{1}{|u_1|^{p+1}} \quad\text{for~} p\in[1,2), \label{eq:fou_1_sig}
\end{align}
and
\begin{align}
    h(Au) = 2\|x\|_2^p (2\pi)^{d-1} \delta(u_1,u_2,\cdots,u_d) \frac{2}{u_1^2} \quad\text{for~} p=2, \label{eq:fou_2_sig}
\end{align}
where $\delta$ is the Dirac-delta function. Let us first consider the case when $p \in [1,2)$.  From equation~\eqref{eq:stab_char} and Lemma~\ref{lem:Sigma_Char_bound}, 
\begin{align*}
    \varepsilon_{\text{stab}} (\mathcal{A}_{\text{cont}})
    &= \sup_{X\cong \hat{X}}\sup_{x \in \mathcal{X}}  \frac{1}{(2\pi)^d} \int_{\mathbb{R}^d} |\psi_{\theta}(u) - \psi_{\hat{\theta}}(u)| |h(u)|~\rmd u  \\
    & = \sup_{X\cong \hat{X}}\sup_{x \in \mathcal{X}} \frac{1}{(2\pi)^d} \int_{\mathbb{R}^d} |\psi_{\theta}(u) - \psi_{\hat{\theta}}(u)| \left| \int_{\mathbb{R}^d} |\theta^\top x|^p e^{\imagi  u^\top \theta}~\rmd \theta\right|~\rmd u \\
    &= \sup_{X\cong \hat{X}}\sup_{x \in \mathcal{X}} \frac{1}{(2\pi)^d} \int_{\mathbb{R}^d} \lambda_{\max}^{\alpha}\frac{2(\sigma_1+\sigma_2)\|u\|_{2}^{\alpha}}{n \alpha^2 \sigma_{\min}^2}
    \\
    &\qquad\qquad\qquad\qquad\qquad\qquad\cdot\exp\left(- \frac{\lambda_{\min}^{\alpha}\| u\|_{2}^{\alpha}}{\alpha \sigma_{\min}} \right) \left| \int_{\mathbb{R}^d} |\theta^\top x|^p e^{\imagi  u^\top \theta}~\rmd \theta \right|~\rmd u.
\end{align*}
In the above equation, we make change of variable $u=Av$ and use the result from Lemma~\ref{lem:ft_theta_x} (equation~\eqref{eq:fou_1_sig}) to get the following, 
\begin{align*}
  \varepsilon_{\text{stab}} (\mathcal{A}_{\text{cont}})&=  \sup_{X\cong \hat{X}}\sup_{x \in \mathcal{X}} \frac{1}{(2\pi)^d} \int_{\mathbb{R}^d} \lambda_{\max}^{\alpha} \frac{2(\sigma_1+\sigma_2)\|Av\|_{2}^{\alpha}}{n \alpha^2 \sigma_{\min}^2}  
  \\
  &\qquad\qquad\qquad\qquad\qquad\qquad\cdot\exp\left(- \frac{ \lambda_{\min}^{\alpha} \| Av\|_{2}^{\alpha}}{\alpha \sigma_{\min}} \right) 
\left| \int_{\mathbb{R}^d} |\theta^\top x|^p e^{\imagi  (Av)^\top \theta}~\rmd \theta \right|~\rmd v \\
  &=  \sup_{X\cong \hat{X}}\sup_{x \in \mathcal{X}} \frac{1}{(2\pi)^d} \int_{\mathbb{R}^d} \lambda_{\max}^{\alpha} \frac{2(\sigma_1+\sigma_2)\|v\|_{2}^{\alpha}}{n \alpha^2 \sigma_{\min}^2}  \exp\left(- \frac{\lambda_{\min}^{\alpha} \| v\|_{2}^{\alpha}}{\alpha \sigma_{\min}} \right) \left| h(Av) \right|~\rmd v \\
  &= \sup_{X\cong \hat{X}}\sup_{x \in \mathcal{X}} \frac{1}{(2\pi)^d} \int_{\mathbb{R}^d} \left[ \left( \lambda_{\max}^{\alpha} \frac{2(\sigma_1+\sigma_2)\|v\|_{2}^{\alpha}}{n \alpha^2 \sigma_{\min}^2}  \exp\left(- \frac{\lambda_{\min}^{\alpha} \| v\|_{2}^{\alpha}}{\alpha \sigma_{\min}} \right) \right) \right. ~ \\
  &\qquad \qquad \left.  \cdot\left(\left|  2 \|x\|_2^p (2\pi)^{d-1} \delta(v_2,\cdots,v_d) \Gamma(p+1) \cos\left( \frac{(p+1)\pi}{2}\right) \frac{1}{|v_1|^{p+1}} \right| \right)\right]~\rmd v \\
  &= \sup_{X\cong \hat{X}}\sup_{x \in \mathcal{X}} \frac{2\|x\|_2^p}{\pi} \Gamma(p+1) \cos\left( \frac{(p-1)\pi}{2}\right) \frac{\lambda_{\max}^{\alpha}(\sigma_1+\sigma_2)}{n\alpha^2 \sigma_{\min}^2} \\
  &\qquad \qquad \qquad \qquad \qquad \qquad \qquad \qquad\cdot\int_{-\infty}^{\infty}   |v_1|^\alpha \exp\left(  \frac{-\lambda_{\min}^{\alpha} |v_1|^\alpha}{\alpha \sigma_{\min}}\right)  \frac{1}{|v_1|^{p+1}} ~\rmd v_1 \\
  & = \sup_{X\cong \hat{X}}\sup_{x \in \mathcal{X}} \frac{4\|x\|_2^p}{\pi} \frac{\lambda_{\max}^{\alpha}(\sigma_1+\sigma_2)}{n\alpha^2 \sigma_{\min}^2} \Gamma(p+1) \cos\left( \frac{(p-1)\pi}{2}\right)\\
  &\qquad \qquad \qquad \qquad \qquad \qquad \qquad \qquad   \cdot\int_{0}^{\infty} v_1^{\alpha-p-1} \exp\left(  \frac{-\lambda_{\min}^{\alpha} v_1^\alpha}{\alpha \sigma_{\min}}\right)~\rmd v_1.
\end{align*}
In the above integral, by substituting $\frac{\lambda_{\min}^{\alpha}v^{\alpha}}{\alpha \sigma_{\min}}$ with $t$ so that
\begin{align}
    \rmd t =  \lambda_{\min}^{\alpha}v^{\alpha-1} \frac{1}{\sigma_{\min}}\rmd v,~ \text{and } \frac{1}{v^p} = \lambda_{\min}^{p}\left( \frac{1}{\alpha \sigma_{\min}}\right)^{\frac{p}{\alpha}} t^{-p/\alpha},
\end{align}
we have,
\begin{align}
 \varepsilon_{\text{stab}} (\mathcal{A}_{\text{cont}})&= \sup_{X\cong \hat{X}}\sup_{x \in \mathcal{X}} \frac{4\|x\|_2^p}{\pi} \lambda_{\min}^p\left(\frac{\lambda_{\max}}{\lambda_{\min}}\right)^{\alpha}\frac{(\sigma_1+\sigma_2)}{n\alpha^2\sigma_{\min}  } \nonumber
 \\
 &\qquad\qquad\qquad\cdot\Gamma(p+1) \cos\left( \frac{(p-1)\pi}{2}\right)  \left(\frac{1}{\alpha \sigma_{\min}}\right)^{\frac{p}{\alpha}} \int_{0}^{\infty} t^{-p/\alpha} e^{-t}~\rmd t. \label{eq:p_1_2_d_sigma}
\end{align}
It is clear that, the above integral diverge for $p\geq \alpha$, hence the algorithm is not stable for $p \in [1,2)$. Now, we check the case for $p=2$. For $p=2$, we have,
\begin{align*}
    \varepsilon_{\text{stab}} (\mathcal{A}_{\text{cont}})
    &= \sup_{X\cong \hat{X}}\sup_{x \in \mathcal{X}} \frac{1}{(2\pi)^d} \int_{\mathbb{R}^d} \lambda_{\max}^{\alpha}\frac{2(\sigma_1+\sigma_2)\|u\|_{2}^{\alpha}}{n \alpha^2 \sigma_{\min}^2}  
    \\
    &\qquad\qquad\qquad\qquad\qquad\cdot\exp\left(- \frac{\lambda_{\min}^{\alpha}\| u\|_{2}^{\alpha}}{\alpha \sigma_{\min}} \right) \left| \int_{\mathbb{R}^d} |\theta^\top x|^2 e^{\imagi  u^\top \theta}~\rmd \theta \right|~\rmd u.
\end{align*}
In the above equation, we make change of variable $u=Av$ and use the result from Lemma~\ref{lem:ft_theta_x} (equation~\eqref{eq:fou_2_sig}) to get the following,
\begin{align*}
  &\varepsilon_{\text{stab}} (\mathcal{A}_{\text{cont}})
  \\
  &=  \sup_{X\cong \hat{X}}\sup_{x \in \mathcal{X}}\Bigg\{ \frac{1}{(2\pi)^d} \int_{\mathbb{R}^d} \lambda_{\max}^{\alpha}\frac{2(\sigma_1+\sigma_2)\|Av\|_{2}^{\alpha}}{n \alpha^2 \sigma_{\min}^2}  
  \\
  &\qquad\qquad\qquad\qquad\qquad\qquad\cdot\exp\left(- \frac{\lambda_{\min}^{\alpha}\| Av\|_{2}^{\alpha}}{\alpha \sigma_{\min}} \right) \left| \int_{\mathbb{R}^d} |\theta^\top x|^2 e^{\imagi  (Av)^\top \theta}~\rmd \theta \right|~\rmd v\Bigg\} \\
  &=  \sup_{X\cong \hat{X}}\sup_{x \in \mathcal{X}} \frac{1}{(2\pi)^d} \int_{\mathbb{R}^d} \lambda_{\max}^{\alpha}\frac{2(\sigma_1+\sigma_2)\|v\|_{2}^{\alpha}}{n \alpha^2 \sigma_{\min}^2}  \exp\left(- \frac{\lambda_{\min}^{\alpha}\| v\|_{2}^{\alpha}}{\alpha \sigma_{\min}} \right) \left| h(Av) \right|~\rmd v  \\
  &= \sup_{X\cong \hat{X}}\sup_{x \in \mathcal{X}} \frac{2}{\pi} \int_{\mathbb{R}^d} \lambda_{\max}^{\alpha}\frac{(\sigma_1+\sigma_2)\|v\|_{2}^{\alpha}}{n \alpha^2 \sigma_{\min}^2}  \exp\left(- \frac{\lambda_{\min}^{\alpha}\| v\|_{2}^{\alpha}}{\alpha \sigma_{\min}} \right) \|x\|_2^2  \delta(v_1,v_2,\cdots,v_d) \frac{2}{v_1^2} ~\rmd v.
\end{align*}
In the last equation, we used the result from Lemma~\ref{lem:ft_theta_x}.
The above integral clearly diverges for $\alpha < 2$. However, when $\alpha =2$, then
\begin{align*}
    \varepsilon_{\text{stab}} (\mathcal{A}_{\text{cont}})&\leq \frac{\|x\|_2^2}{\pi}  \frac{\lambda_{\max}^{2}(\sigma_1+\sigma_2)}{n   \sigma_{\min}^2} . 
\end{align*}

Now, if $\sigma$ is the upper bound on $\sigma_1$ and $\sigma_2$ for all $X\cong \hat{X} \in \mathcal{X}_n$ and $\|x\|_2 \leq R$ for $x\sim P_X$ with high probability then, 
\begin{align*}
    \varepsilon_{\text{stab}} (\mathcal{A}_{\text{cont}})&\leq \frac{2R^2}{\pi}  \frac{\lambda_{\max}^2\sigma}{n   \sigma_{\min}^2},
\end{align*}
holds with high probability. This proves part (i) of our claim. 

Next, we will prove part (ii) when $p< \alpha$. We have from equation~\eqref{eq:p_1_2_d_sigma},
\begin{align*}
 &\varepsilon_{\text{stab}} (\mathcal{A}_{\text{cont}})
 \\
 &= \sup_{X\cong \hat{X}}\sup_{x \in \mathcal{X}} \Bigg\{\frac{4\|x\|_2^p}{\pi} \lambda_{\min}^p\left( \frac{\lambda_{\max}}{\lambda_{\min}}\right)^{\alpha} \frac{(\sigma_1+\sigma_2)}{n\alpha^2 \sigma_{\min} } \Gamma(p+1)
 \\
 &\qquad\qquad\qquad\qquad\qquad\qquad\cdot\cos\left( \frac{(p-1)\pi}{2}\right)  \left(\frac{1}{\alpha \sigma_{\min}}\right)^{\frac{p}{\alpha}} \int_{0}^{\infty} t^{-p/\alpha} e^{-t}~\rmd t\Bigg\} \\
 &=\sup_{X\cong \hat{X}}\sup_{x \in \mathcal{X}} \frac{4\|x\|_2^p}{\pi} \lambda_{\min}^p\left( \frac{\lambda_{\max}}{\lambda_{\min}}\right)^{\alpha} \frac{(\sigma_1+\sigma_2)}{n\alpha^2\sigma_{\min} } \Gamma(p+1) \cos\left( \frac{(p-1)\pi}{2}\right)  \left(\frac{1}{\alpha \sigma_{\min}}\right)^{\frac{p}{\alpha}} \Gamma\left(1-\frac{p}{\alpha}\right).
\end{align*}
Now, if $\sigma$ is the upper bound on $\sigma_1$ and $\sigma_2$ for all $X\cong \hat{X} \in \mathcal{X}_n$ and $\|x\|_2 \leq R$ for $x\sim P_X$ with high probability then,
\begin{align*}
 \varepsilon_{\text{stab}} (\mathcal{A}_{\text{cont}})&=  \frac{8R^p}{\pi} \lambda_{\min}^p\left( \frac{\lambda_{\max}}{\lambda_{\min}}\right)^{\alpha}\frac{\sigma}{n\alpha^2 \sigma_{\min} } \Gamma(p+1) \cos\left( \frac{(p-1)\pi}{2}\right)  \left(\frac{1}{\alpha \sigma_{\min}}\right)^{\frac{p}{\alpha}} \Gamma\left(1-\frac{p}{\alpha}\right)
\end{align*}
holds with high probability.   Now, consider the function,
\begin{align*}
    \Lambda(\alpha ) =  \frac{1}{\alpha^2}\left( \frac{\lambda_{\max}}{\lambda_{\min}}\right)^{\alpha} \left( \frac{1}{\alpha \sigma_{\min} }\right)^{\frac{p}{\alpha}} \Gamma \left( 1 - \frac{p}{\alpha}\right).
\end{align*}
We can compute that
\begin{align*}
    \partial_{\alpha} \log \Lambda(\alpha) =  \log \left( \frac{\lambda_{\max}}{\lambda_{\min}}\right)+ \frac{p}{\alpha^2} \left[ \log \alpha  + \log \sigma_{\min} -1 - \frac{2 \alpha}{p} +  \phi\left( 1 - \frac{p}{\alpha}\right)  \right], 
\end{align*}
where $\phi$ is the digamma function. For any arbitrary $\alpha_0$, if we choose $$\sigma_{\min} \geq \exp\left( 1 + \frac{2}{p} - \log \alpha_0 -  \phi\left( 1 - \frac{p}{\alpha_0}\right) - \alpha_0^{2}\log \left( \frac{\lambda_{\max}}{\lambda_{\min}}\right)\right),$$ then $\partial_{\alpha} \log \Lambda(\alpha) >0 $ for  $\alpha \in [\alpha_0,2)$. Hence, for all $\alpha_1,\alpha_2 \in [\alpha_0,2)$,  $\alpha_1 < \alpha_2$ it follows that $\Lambda(\alpha_1) \leq \Lambda(\alpha_2)$. This proves that $c(\alpha)$ is an increasing map in $\alpha$. This completes the proof.
\end{proof}

\section{Useful Results} \label{ap:useful_results}
{ Here below, we provide a few technical results which are used in the proofs of the main results.} 
\begin{lemma} \label{lem:ab_alpha}
For any two positive numbers $a$ and $b$, and for some $0 < \alpha \leq 2$, we have
\begin{align}\label{eqn:RHS}
    |a^\alpha - b^\alpha | \leq |a-b|(a^{\alpha-1}+b^{\alpha-1}).
\end{align}
\end{lemma}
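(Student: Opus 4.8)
The plan is to reduce to the case $a \geq b > 0$ by symmetry: both sides of \eqref{eqn:RHS} are unchanged under swapping $a$ and $b$, so there is no loss of generality in assuming $a \geq b$. Since $t \mapsto t^\alpha$ is increasing on $(0,\infty)$ for $\alpha > 0$, this lets me drop the absolute values: $|a^\alpha - b^\alpha| = a^\alpha - b^\alpha$ and $|a-b| = a-b$, so the claim becomes $a^\alpha - b^\alpha \leq (a-b)(a^{\alpha-1} + b^{\alpha-1})$.

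Rather than invoking the mean value theorem — which yields the bound $\alpha \xi^{\alpha-1}(a-b)$ for some $\xi \in (b,a)$ and does not cleanly match the right-hand side — I would simply expand the product. One computes $(a-b)(a^{\alpha-1}+b^{\alpha-1}) = a^\alpha + a b^{\alpha-1} - b a^{\alpha-1} - b^\alpha$. Subtracting the left-hand side $a^\alpha - b^\alpha$, the entire inequality collapses to the single requirement $0 \leq a b^{\alpha-1} - b a^{\alpha-1}$.

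The final step is to verify this. Factoring gives $a b^{\alpha-1} - b a^{\alpha-1} = ab\,(b^{\alpha-2} - a^{\alpha-2})$, and since $ab > 0$ it suffices to show $b^{\alpha-2} \geq a^{\alpha-2}$. This is precisely where the hypothesis $\alpha \leq 2$ is used: the exponent $\alpha - 2$ is non-positive, so $t \mapsto t^{\alpha-2}$ is non-increasing on $(0,\infty)$, and $a \geq b > 0$ forces $b^{\alpha-2} \geq a^{\alpha-2}$, which closes the argument.

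The proof is essentially a one-line algebraic identity once the right expansion is spotted, so I do not expect a genuine obstacle; the only real content is pinpointing that the constraint $\alpha \leq 2$ is exactly what renders the leftover term $ab\,(b^{\alpha-2}-a^{\alpha-2})$ nonnegative, and this is the step I would flag. As a sanity check in the other direction, for $\alpha > 2$ the monotonicity reverses and the reduced inequality fails (e.g. $a=2, b=1, \alpha=3$ gives $7 \not\leq 5$), confirming that the hypothesis on $\alpha$ cannot be dropped.
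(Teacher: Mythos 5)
Your proof is correct and follows essentially the same route as the paper's: expand $(a-b)(a^{\alpha-1}+b^{\alpha-1})$, cancel against $a^\alpha-b^\alpha$, and reduce the claim to the nonnegativity of $ab^{\alpha-1}-ba^{\alpha-1}$. In fact your write-up is slightly more careful than the paper's, which asserts this leftover term is positive citing only $a>b>0$ and $\alpha>0$ (an assertion that fails for $\alpha>2$), whereas you correctly pinpoint via the factorization $ab\,(b^{\alpha-2}-a^{\alpha-2})$ that the hypothesis $\alpha\leq 2$ is exactly what is needed, and you confirm its necessity with the counterexample $a=2$, $b=1$, $\alpha=3$.
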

\begin{proof}
When $a=b>0$, the result is obviously true. Without loss of generality, let us assume that $a> b>0$ and by considering the RHS of \eqref{eqn:RHS}, we get 
\begin{align*}
    |a-b|(a^{\alpha-1}+b^{\alpha-1}) &=(a-b)(a^{\alpha-1}+b^{\alpha-1}) \\
    &=a^{\alpha} + ab^{\alpha-1} - a^{\alpha-1}b - b^{\alpha} \\
    &= |a^\alpha - b^\alpha |  + ab^{\alpha-1} - a^{\alpha-1}b.
\end{align*}
Since, we have assumed that $a>b>0$ and $\alpha > 0$, hence $ab^{\alpha-1} - a^{\alpha-1}b > 0$ always which essentially means, 
\begin{align*}
    |a^\alpha - b^\alpha | \leq |a-b|(a^{\alpha-1}+b^{\alpha-1}).
\end{align*}
Same argument can be given while assuming $b > a> 0$.
This completes the proof.
\end{proof}

{ 
\begin{lemma}\label{lem:mean}
For any $a>0$, and $k\in\mathbb{N}$, 
\begin{equation*}
\sum_{j=0}^{k-1}ja^{j}
=\frac{(k-1)a^{k+1}-ka^{k}+a}{(a-1)^{2}}.
\end{equation*}
In particular, for any $0<a<1$, 
\begin{equation*}
\sum_{j=0}^{\infty}ja^{j}
=\frac{a}{(a-1)^{2}}.
\end{equation*}
\end{lemma}

\begin{proof}
We can compute that
\begin{equation*}
\sum_{j=0}^{k-1}ja^{j}
=a\sum_{j=1}^{k-1}ja^{j-1}
=a\frac{d}{da}\sum_{j=1}^{k-1}a^{j}
=a\frac{d}{da}\left(\frac{a^{k}-a}{a-1}\right)=\frac{(k-1)a^{k+1}-ka^{k}+a}{(a-1)^{2}}.
\end{equation*}
The proof is complete.
\end{proof}
}

\begin{lemma}[Fourier transform of $|\theta^\top x|^p$] \label{lem:ft_theta_x}
Consider the function $f(\theta) = |\theta^\top x|^p$ for $p \in [1,2]$ and $h(u)$ denotes the Fourier transform of $f(\theta)$ where $u= [u_1,\cdots,u_d]$ is a vector in $d$-dimension. Given an unitary matrix $A \in \mathbb{R}^{d\times d}$ such that $A^\top A = AA^\top = I$ where $I$ is an identity matrix in $\mathbb{R}^{d\times d}$ and $Ae_1 = \frac{x}{\|x\|_2}$ where $e_i$ is vector in $\mathbb{R}^{d}$ with all entries set to $0$ except $i$th entry which is set to 1, we have
\begin{align*}
    h(Au) =  2 \|x\|_2^p (2\pi)^{d-1} \delta(u_2,\cdots,u_d) \Gamma(p+1) \cos\left( \frac{(p+1)\pi}{2}\right) \frac{1}{|u_1|^{p+1}} \quad\text{for~} p\in[1,2),
\end{align*}
and
\begin{align*}
    h(Au) = 2\|x\|_2^p (2\pi)^{d-1} \delta(u_1,u_2,\cdots,u_d) \frac{2}{u_1^2} \quad\text{for~} p=2,
\end{align*}
where $\delta$ is the Dirac-delta function. 
\end{lemma}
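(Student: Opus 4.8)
The plan is to reduce the $d$-dimensional transform to a one-dimensional one by exploiting that $f(\theta)=|\theta^\top x|^p$ is a ridge function, constant along every direction orthogonal to $x$. First I would write out the transform evaluated at $Au$,
\begin{equation*}
h(Au)=\int_{\mathbb{R}^d}|\theta^\top x|^p e^{-\imagi (Au)^\top\theta}\,\rmd\theta,
\end{equation*}
and perform the orthogonal change of variables $\theta=A\phi$. Since $A$ satisfies $A^\top A=AA^\top=I$ we have $|\det A|=1$, so $\rmd\theta=\rmd\phi$; moreover $x=\|x\|_2\,Ae_1$ gives $\theta^\top x=(A\phi)^\top x=\|x\|_2\,\phi^\top A^\top A e_1=\|x\|_2\,\phi_1$, so that $|\theta^\top x|^p=\|x\|_2^p|\phi_1|^p$, while the phase simplifies as $(Au)^\top\theta=u^\top A^\top A\phi=u^\top\phi$. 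This turns the integral into
\begin{equation*}
h(Au)=\|x\|_2^p\int_{\mathbb{R}^d}|\phi_1|^p e^{-\imagi u^\top\phi}\,\rmd\phi.
\end{equation*}

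The integrand now factorizes across coordinates, so I would split the integral into the product of the one-dimensional transform of $|\phi_1|^p$ in the first coordinate and the $(d-1)$ transforms of the constant function in the remaining coordinates. For the latter, $\int_{\mathbb{R}} e^{-\imagi u_j\phi_j}\,\rmd\phi_j=2\pi\delta(u_j)$, which collects to the factor $(2\pi)^{d-1}\delta(u_2,\dots,u_d)$. For the former, when $p\in[1,2)$ I would invoke the one-dimensional identity already established in \eqref{eq:1d_fourier} (following \cite{gelfand1969generalized}), namely $\int_{\mathbb{R}}|\phi_1|^p e^{-\imagi u_1\phi_1}\,\rmd\phi_1=2\Gamma(p+1)\cos\!\big(\tfrac{(p+1)\pi}{2}\big)|u_1|^{-(p+1)}$; for $p=2$ I would instead use the corresponding distributional identity from the same reference. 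Multiplying these pieces together assembles exactly the two claimed expressions.

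The hard part will be that none of these transforms exist as ordinary Lebesgue integrals — both $|\theta^\top x|^p$ and the constant function are non-integrable — so the entire computation must be read in the sense of tempered distributions. Accordingly, I would justify the orthogonal change of variables and the coordinate-wise factorization as operations on tempered distributions: the pushforward of a tensor product under the orthogonal map $A$ is again the corresponding tensor product, and the Fourier transform of a tensor product is the tensor product of the transforms, which legitimizes the splitting above. The $p=2$ endpoint demands the most care, since the transform of $|\phi_1|^2$ is not a locally integrable function but the regularized object appearing in the Gelfand--Shilov calculus; matching the bookkeeping of the constants in that case is the only genuinely delicate point of the argument.
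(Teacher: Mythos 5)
Your proposal is correct and follows essentially the same route as the paper's proof: the orthogonal change of variables $\theta = A\phi$ (the paper writes it as $\beta = A^\top\theta$), coordinate-wise factorization yielding the $(2\pi)^{d-1}\delta(u_2,\dots,u_d)$ factor, and the one-dimensional Gelfand--Shilov identities for $p\in[1,2)$ and $p=2$. Your added care about reading everything in the sense of tempered distributions is a point the paper's proof treats only formally, but it does not change the argument.
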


\begin{proof}
We recall that the Fourier transform is given by
\begin{align*}
    \mathcal{F}f(u) = \int_{\mathbb{R}^d} f(\theta) e^{- \imagi   u^\top \theta} \rmd \theta. 
 \end{align*}
 Let
 \begin{align*}
     h(u) :=\mathcal{F}[|\langle x,\cdot \rangle|^p]  = \| x\|_2^p \mathcal{F}\left[\left|\left\langle \frac{x}{\|x\|_2},\cdot \right\rangle\right|^p\right].
 \end{align*}
 We consider now an unitary matrix $A \in \mathbb{R}^{d\times d}$ such that $A^\top A = AA^\top = I$ where $I$ is an identity matrix in $\mathbb{R}^{d\times d}$ and $Ae_1 = \frac{x}{\|x\|_2}$ where $e_i$ is vector in $\mathbb{R}^{d}$ with all entries set to $0$ except $i$th entry which is set to 1. Now let us compute   $h(Au)$.
 \begin{align*}
     h(Au) &=  \| x\|_2^p \int_{\mathbb{R}^d} \left|\left\langle \frac{x}{\|x\|_2},\theta \right\rangle\right|^p e^{-\imagi  (Au)^\top \theta} ~\rmd \theta \\
     &= \| x\|_2^p \int_{\mathbb{R}^d} \left|\left\langle Ae_1,\theta  \right\rangle\right|^p e^{-\imagi  (Au)^\top \theta} ~\rmd \theta.
 \end{align*}
 In the above integral we substitute, $\beta = A^\top \theta$. Hence, when $p \in [1,2)$, we have
 \begin{align*}
h(Au) &= \|x\|_2^p \int_{\mathbb{R}^d} |\langle e_1, \beta\rangle|^p e^{-\imagi  u^\top \beta}~d\beta   \\
&= \|x\|_2^p (2\pi)^{d-1} \delta(u_2,\cdots,u_d) \int_{-\infty}^{\infty} |\beta_1|^p ~e^{-\imagi  u_1  \beta_1}~d\beta_1 \\
&= \|x\|_2^p (2\pi)^{d-1} \delta(u_2,\cdots,u_d) \int_{0}^{\infty} \left(e^{-\imagi  u_1 \beta_1}+e^{\imagi  u_1 \beta_1} \right)\beta_1^p~d\beta_1  \\
& = 2 \|x\|_2^p (2\pi)^{d-1} \delta(u_2,\cdots,u_d) \Gamma(p+1) \cos\left( \frac{(p+1)\pi}{2}\right) \frac{1}{|u_1|^{p+1}}.
 \end{align*}
 When $p=2$, we have
 \begin{align}
     h(Au) &= \|x\|_2^p \int_{\mathbb{R}^d} |\langle e_1, \beta\rangle|^2 e^{-\imagi  u^\top \beta}~d\beta  \notag  \\
&= \|x\|_2^p (2\pi)^{d-1} \delta(u_2,\cdots,u_d) \int_{-\infty}^{\infty} |\beta_1|^2 ~e^{-\imagi  u_1  \beta_1}~d\beta_1  \notag \\
&= 2\|x\|_2^p (2\pi)^{d-1} \delta(u_1,u_2,\cdots,u_d) \frac{2}{u_1^2}.
 \end{align}
This completes the proof. 
\end{proof}

\section{Further Details on Experiment Settings and Resources}\label{ap:further_expts}

This section contains further details regarding the experiments presented in the main paper. As the synthetic data experiment setting was fully described in the text, most of the information below will pertain to the real data experiments {with the exception of additional synthetic data results that include mean estimates}. See the accompanying code regarding the implementation of the experiments described.

{ 
\subsection{Additional synthetic data results}

In addition to median and interquartile range based results presented in the paper, we add the following results in Figure \ref{fig:synth_data_mean} with a robust mean estimate of the results, demonstrating a similar pattern to that observed in the main paper.  

\begin{figure}[h]
    \centering
    \includegraphics[width=0.99\textwidth]{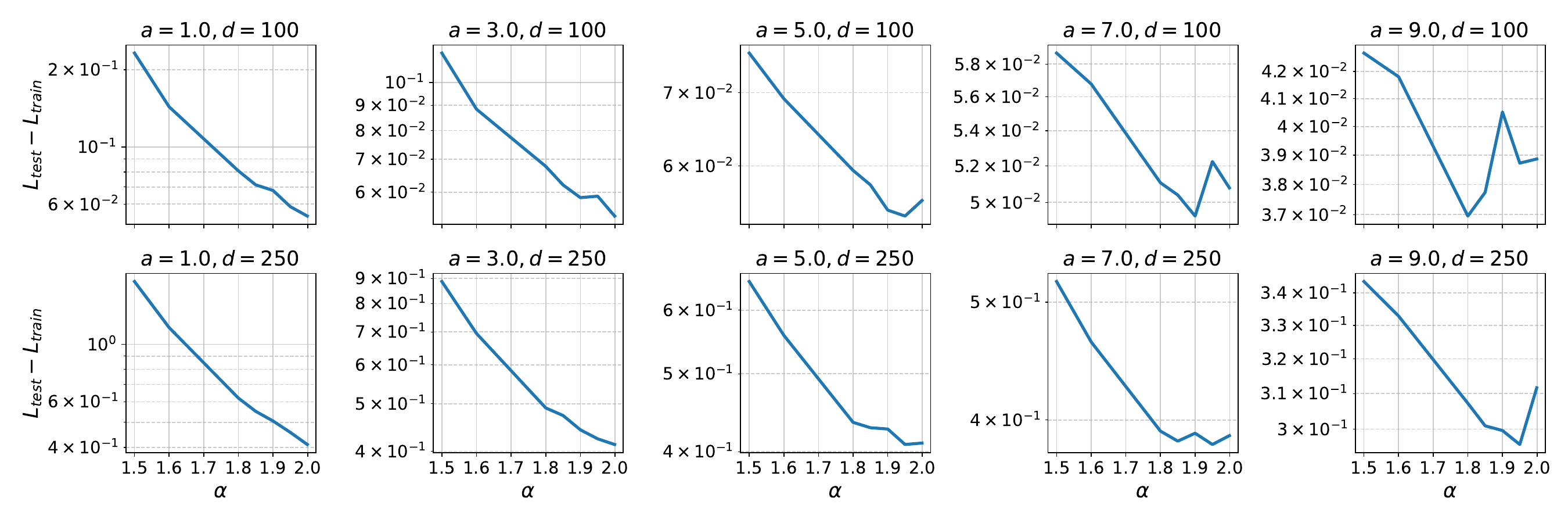}
    \caption{{ Results of the synthetic data experiments with varying $a$, $\alpha$, and $d$. Each experiment was repeated $500$ times with $n=1000$. The lines correspond to the robust mean estimates with the samples with losses above the 90\% quantile removed from among the respective experiments.}}
    \label{fig:synth_data_mean}
\end{figure}}

\subsection{Datasets}
The real data experiments involved a supervised learning setting, where images are classified into a number of predefined class labels. Each model architecture with given hyperparameters were trained on MNIST \citep{mnist2010}, CIFAR10, and CIFAR100 \citep{cifar102009} data sets\footnote{MNIST and CIFAR10/100 data sets have been shared under Creative Commons Attribution-Share Alike 3.0 license and MIT License respectively.}. The MNIST data set includes $28\times 28$ black and white handwritten digits, with digits ranging from $0$ to $9$. The data set in its original form includes $60000$ training and $10000$ test samples. CIFAR10 and CIFAR100 are also image classification dataset comprising  $32\times 32$ color images of objects or animals, making up $10$ and $100$ classes respectively. There are $50000$ training and $10000$ test images in either of these data sets, and the instances are divided among classes equally. We used the standard train-test splits in all data sets.

\subsection{Models}

We used three different architectures in our experiments: a fully connected network with 4 hidden layers (FCN4), another fully connected network with 6 hidden layers (FCN6), and a convolutional neural network (CNN). In both FCN architectures, all hidden layer widths were $2048$. All architectures featured ReLU activation functions. Batch normalization, dropout, residual layers, or any explicit regularization term in the loss function were not used in any part of the experiments. The architecture we chose for our CNN model closely follows that of VGG11 model \citep{simonyanVeryDeepConvolutional2015}, with the significant difference that only a single linear layer with a softmax output follows the convolutional layers presented below:

$$
64, M, 128, M, 256, 256, M, 512, 512, M, 512, 512, M.
$$

Here, integers describe the number of filters for 2-dimensional convolutional layers - for which the kernel sizes are $3 \times 3$. $M$ stands for $2 \times 2$ max-pooling operations with a stride value of $2$. This architecture was slightly modified for the MNIST experiments by removing the first max-pooling layer due to the smaller dimensions of the MNIST images. The Table~\ref{tab:parameter_count} describes the number of different parameters used for each model-dataset combination.
\begin{table}[h]
\centering
\begin{tabular}{ |c| c| c| c|}
\hline
 & FCN4 & FCN6 & CNN \\ 
 \hline
 MNIST & 14,209,024 & 22,597,632 & 9,221,696 \\
 \hline
 CIFAR10 & 18,894,848 & 27,283,456 & 9,222,848\\
 \hline
 CIFAR100 & 18,899,456 & 27,288,064 & 9,227,456\\
 \hline
\end{tabular}
\vspace{5pt}
\caption{\label{tab:parameter_count}Number of parameters for model-dataset combinations.}
\end{table}

\subsection{Training and hyperparameters}
As described in the main text, the models were trained with SGD until convergence on the training set. The convergence criteria for \textsc{MNIST} and \textsc{CIFAR-10} is a training negative log-likelihood (NLL) of $<5 \times 10^{-5}$ and a training accuracy of $100\%$, and for \textsc{CIFAR-100} these are a NLL of $<1 \times 10^{-2}$ and a training accuracy of $>99\%$. We use two different batch sizes ($b = 50, 100$) and a diversity of learning rates ($\eta$) to generate a large range of $\eta/b$ values. Table~\ref{tab:eta_b_ranges} presents the $\eta/b$ values created for each experiment setting. The varying nature of these ranges are due to the fact that different $\eta/b$ values might lead to heavy-tailed behavior or divergence under different points in this hyperparameter space. Source code includes the enumerations of specific combinations of these hyperparameters for all settings.

\begin{table}[h]
\centering
\begin{tabular}{ |c| c| c| c|}
\hline
 & FCN4 & FCN6 & CNN \\ 
 \hline
  MNIST & $5 \times 10^{-5}$ to  $1.14\times 10^{-2}$  & $5 \times 10^{-5}$ to  $8.8\times 10^{-3}$ & $1 \times 10^{-5}$ to  $6.35\times 10^{-3}$ \\
 \hline
 CIFAR10 & $5 \times 10^{-5}$ to  $2.7\times 10^{-3}$  & $2.5 \times 10^{-5}$ to  $4\times 10^{-3}$ & $1 \times 10^{-5}$ to  $1.5\times 10^{-3}$\\
 \hline
 CIFAR100 & $1 \times 10^{-5}$ to  $1.6\times 10^{-3}$  & $1 \times 10^{-5}$ to  $2.25\times 10^{-3}$ & $1 \times 10^{-5}$ to  $7\times 10^{-4}$\\
 \hline

\end{tabular}
\vspace{5pt}
\caption{\label{tab:eta_b_ranges}The ranges of $\eta/b$ for all experiments.}
\end{table}

\subsection{Tail-index estimation}

The multivariate estimator proposed by \cite{mohammadi2015estimating} was used for tail-index estimation:

\begin{theorem}[{\cite[Corollary 2.4]{mohammadi2015estimating}}]
Let $\{X_i\}_{i=1}^K$ be a collection of i.i.d.\ random vectors where each $X_i$ is multivariate strictly stable with tail-index $\alpha$, and $K = K_1 \times K_2$.
Define $Y_i := \sum_{j=1}^{K_1} X_{j+(i-1)K_1} \>$ for $i \in \{ 1,\dots, K_2\} $. Then, the estimator
\begin{align}
\label{eqn:alpha_estim}
\widehat{\phantom{a}\frac1{\alpha}\phantom{a}} \hspace{-4pt} \triangleq \hspace{-2pt} \frac1{\log K_1} \Bigl(\frac1{K_2 } \sum_{i=1}^{K_2} \log \|Y_i\|  - \frac1{K} \sum_{i=1}^K \log \|X_i\| \Bigr) 
\end{align}
converges to $1/{\alpha}$ almost surely, as $K_2 \rightarrow \infty$.
\end{theorem}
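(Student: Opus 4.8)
The plan is to exploit the defining scaling property of strictly $\alpha$-stable laws, which turns the block-sums $Y_i$ into rescaled copies of a single observation, and then to invoke the strong law of large numbers (SLLN) on the two empirical averages separately.

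First I would record the key distributional identity. Since $X_1,\dots,X_K$ are i.i.d.\ and each is strictly $\alpha$-stable, the sum of any $K_1$ of them satisfies $\sum_{j=1}^{K_1} X_{j+(i-1)K_1} \overset{d}{=} K_1^{1/\alpha} X_1$; the absence of an additive shift term is precisely where \emph{strict} stability is used (a general stable law would contribute an extra deterministic drift that would bias the estimator). Moreover, because the index blocks $\{\,j+(i-1)K_1 : 1\le j\le K_1\,\}$ are disjoint across $i$, the vectors $Y_1,\dots,Y_{K_2}$ are themselves i.i.d. Taking norms and logarithms gives $\log\|Y_i\| \overset{d}{=} \tfrac{1}{\alpha}\log K_1 + \log\|X_1\|$, so that, provided $\mathbb{E}[|\log\|X_1\||]<\infty$,
\begin{equation*}
\mathbb{E}[\log\|Y_i\|] = \tfrac{1}{\alpha}\log K_1 + \mathbb{E}[\log\|X_1\|].
\end{equation*}

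Next I would apply the SLLN twice. Applied to the i.i.d.\ sequence $(\log\|Y_i\|)_{i\ge1}$, it yields $\frac{1}{K_2}\sum_{i=1}^{K_2}\log\|Y_i\| \to \tfrac{1}{\alpha}\log K_1 + \mathbb{E}[\log\|X_1\|]$ almost surely as $K_2\to\infty$. Since $K = K_1 K_2 \to\infty$ as well, applying the SLLN to $(\log\|X_i\|)_{i\ge1}$ gives $\frac{1}{K}\sum_{i=1}^{K}\log\|X_i\| \to \mathbb{E}[\log\|X_1\|]$ almost surely. Subtracting these two limits and dividing by $\log K_1$, the two $\mathbb{E}[\log\|X_1\|]$ terms cancel and what remains is exactly $\tfrac{1}{\alpha}$, which is the claimed limit.

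The main obstacle is verifying the integrability hypothesis $\mathbb{E}[|\log\|X_1\||]<\infty$ that both applications of the SLLN require; this is nontrivial because a strictly $\alpha$-stable vector with $\alpha<2$ has infinite $\alpha$-th moment. I would control the two tails of $\log\|X_1\|$ separately. For the upper tail, the power-law decay $\mathbb{P}(\|X_1\|>r)\sim c\,r^{-\alpha}$ gives $\mathbb{E}[\log^{+}\|X_1\|] = \int_0^\infty \mathbb{P}(\|X_1\|>e^{t})\,dt \lesssim \int_0^\infty e^{-\alpha t}\,dt < \infty$, so the slow growth of $\log$ tames the heavy tail even though $\mathbb{E}\|X_1\|^{\alpha}=\infty$. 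For the lower tail, the boundedness of the (non-degenerate) stable density near the origin yields a small-ball bound $\mathbb{P}(\|X_1\|<\varepsilon)\lesssim \varepsilon^{d}$, whence $\mathbb{E}[\log^{-}\|X_1\|] = \int_0^\infty \mathbb{P}(\|X_1\|<e^{-t})\,dt \lesssim \int_0^\infty e^{-dt}\,dt < \infty$. Together these give $\mathbb{E}[|\log\|X_1\||]<\infty$ and legitimise both limits above, completing the argument.
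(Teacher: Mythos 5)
Your proof is correct. Note that the paper itself does not prove this statement: it is imported verbatim as \cite[Corollary 2.4]{mohammadi2015estimating} and used as a black box for tail-index estimation in the experiments, so there is no internal proof to compare against. Your argument --- the strict-stability scaling identity $Y_i \overset{d}{=} K_1^{1/\alpha} X_1$ (correctly flagging that strictness kills the drift term that would otherwise bias the estimator), two applications of the SLLN with $K_1$ fixed and $K_2 \to \infty$, and the verification of $\mathbb{E}\left[\left|\log\|X_1\|\right|\right] < \infty$ via the $r^{-\alpha}$ upper-tail decay and a bounded-density small-ball estimate for the lower tail --- is exactly the standard route taken in the cited source, and the integrability check, which is the only nontrivial point, is handled properly.
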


Previous deep learning research such as \citet{tzagkarakis2018compressive, csimcsekli2019heavy, barsbey2021heavy} have also used this estimator. As described in the main text, tail-index estimation is conducted on the ergodic averaged version of the parameters, an operation which does not change the tail-index of the parameters, to conform to this estimator's assumptions. We use the columns of parameters in FCN's and specific filter parameters in CNN as the random vectors instances for the multivariate distribution. Before conducting the tail-index estimation we center the parameters using the index-wise median values. We observe that (i) centering with mean values, and/or (ii) using the alternative univariate tail-index estimator \citep[Corollary 2.2]{mohammadi2015estimating} from the same paper produces qualitatively identical results. We also observe that using alternative tail index estimators with symmetric $\alpha$-stable assumption produces no qualitatively significant differences in the estimated values \citep{satheEstimationParameters2020}.

\subsection{Hardware and software resources}
The computational resources for the experiments were provided by a research institute. The bulk of the resources were expended on the real data experiments, where a roughly equal division of labor between Nvidia Titan X, 1080 Ti, and 1080 model GPU's. Our results rely on 273 models, training of which brings about a GPU-heavy computational workload. The training of a single model took approximately 4.5 hours, with an approximate estimated total GPU time for the ultimate results 1270 hours. This total also includes the training time for the 40 models which diverged during training, with the training stopping around 1 hour mark on average. The computational time expended for tail-index estimation in real data experiments and the totality of synthetic experiments amounted to approximately 20 hours of computation with similar hardware as described above.

The experiments were implemented in the Python programming language. For the real data experiments, the deep learning framework PyTorch \citep{pytorch2019} was extensively used, including the implementation methodology in some of its tutorials\footnote{\textsc{https://github.com/pytorch/vision/blob/master/torchvision/models/vgg.py}}. PyTorch is shared under the Modified BSD License.

\clearpage
\end{document}